\title{Leveraging Unlabeled Data to Predict \\ Out-of-Distribution Performance}
\author{%
  Saurabh Garg\thanks{Work done in part while Saurabh Garg was interning at Google} \\
  Carnegie Mellon University\\
  \small{\texttt{\href{mailto:sgarg2@andrew.cmu.edu}{sgarg2}@andrew.cmu.edu}}
   \And
  Sivaraman Balakrishnan\\
  Carnegie Mellon University\\
  \small{\texttt{\href{mailto:sbalakri@andrew.cmu.edu}{sbalakri}@andrew.cmu.edu}}
  \And 
  Zachary C. Lipton\\
  Carnegie Mellon University\\
  \small{\texttt{\href{mailto:zlipton@andrew.cmu.edu}{zlipton}@andrew.cmu.edu}}
  \And
  Behnam Neyshabur \\
  Google Research, Blueshift team\\
  \small{\texttt{\href{mailto:neyshabur@google.com}{neyshabur}@google.com}}\\
  \And
  Hanie Sedghi \\
  Google Research, Brain team\\
  \small{\texttt{\href{mailto:hsedghi@google.com}{hsedghi}@google.com}}
}
\newcommand{\update}[1]{\textcolor{black}{#1}}
\begin{document}
\maketitle

\begin{abstract}
    
Real-world machine learning deployments are characterized by mismatches between the source (training) and target (test) distributions that may cause performance drops. In this work, we investigate methods for predicting the target domain accuracy using only labeled source data and unlabeled target data. We propose Average Thresholded Confidence (ATC),
a practical method that learns a \emph{threshold} on the model's confidence, predicting accuracy as the fraction of unlabeled examples for which model confidence exceeds that threshold. ATC outperforms previous methods across several model architectures, types of distribution shifts 
(e.g., due to synthetic corruptions, dataset reproduction, or novel subpopulations), and datasets (\update{\textsc{Wilds}}, ImageNet, \breeds, CIFAR, and MNIST). In our experiments, ATC estimates target performance 
$2\text{--}4\times$ more accurately than prior methods. 
We also explore the theoretical foundations of the problem,
proving that, in general, identifying the accuracy
is just as hard as identifying the optimal predictor
and thus, the efficacy of any method
rests upon (perhaps unstated) assumptions 
on the nature of the shift. 
Finally, analyzing our method on some toy distributions,
we provide insights concerning when it works\footnote{Code is available at \url{https://github.com/saurabhgarg1996/ATC_code}.}.

\end{abstract}

\vspace{-5pt}
\section{Introduction}

Machine learning models deployed in the real world 
typically encounter examples 
from previously unseen distributions. 
While the IID assumption enables us to
evaluate models using held-out data 
from the \emph{source} distribution 
(from which training data is sampled),
this estimate is no longer valid
in presence of a distribution shift.
Moreover, under such shifts,
model accuracy tends to degrade
\citep{szegedy2013intriguing, recht2019imagenet,wilds2021}.
Commonly, the only data
available to the practitioner
are a labeled training set (source)
and unlabeled deployment-time data which makes the problem more difficult.
In this setting, detecting shifts 
in the distribution of covariates
is known to be possible (but difficult)
in theory~\citep{ramdas2015decreasing},
and in practice~\citep{rabanser2018failing}.
However, producing an optimal predictor 
using only labeled source and unlabeled target data
is well-known to be impossible absent further assumptions~\citep{ben2010impossibility,lipton2018detecting}. %

Two vital questions that remain are:
(i) the precise conditions under which 
we can estimate a classifier's target-domain accuracy;
and (ii) which methods are most practically useful.
To begin, the straightforward way 
to assess the performance 
of a model under distribution shift
would be to collect labeled (target domain) examples 
and then to evaluate the model on that data. 
However, collecting fresh labeled data 
from the target distribution 
is prohibitively expensive and time-consuming, 
especially if the target distribution is non-stationary. 
Hence, instead of using labeled data,
we aim to use 
unlabeled data from the target 
distribution, that is comparatively abundant, to predict model performance. 
Note that in this work, our focus 
is \emph{not} to improve performance on the target 
but, rather, to estimate the accuracy on the target for a given classifier.

Recently, numerous methods have been proposed 
for this purpose~\citep{deng2021labels,chen2021mandoline,jiang2021assessing,deng2021does,guillory2021predicting}. 
These methods either require calibration on the target domain 
to yield consistent estimates~\citep{jiang2021assessing,guillory2021predicting} 
or additional labeled data 
from several target domains 
to learn a linear regression function
on a distributional distance that then 
predicts model performance~\citep{deng2021does,deng2021labels,guillory2021predicting}. 
However, methods that require calibration 
on the target domain typically yield poor estimates 
since deep models trained and calibrated on 
source data are not, in general, calibrated on a 
(previously unseen) target domain~\citep{ovadia2019can}. 
Besides, methods that leverage  
labeled data from target domains rely
on the fact that unseen target domains exhibit strong 
linear correlation with seen target domains 
on the underlying distance measure and, hence, 
can be rendered ineffective when 
such target domains with labeled data 
are unavailable (in \secref{sec:exp_results} 
we demonstrate such a failure on a real-world 
distribution shift problem). 
Therefore, throughout the paper,
we assume access to labeled source data 
and only unlabeled data from target domain(s).

\begin{figure*}[t!]
    \centering 
    \includegraphics[width=\linewidth]{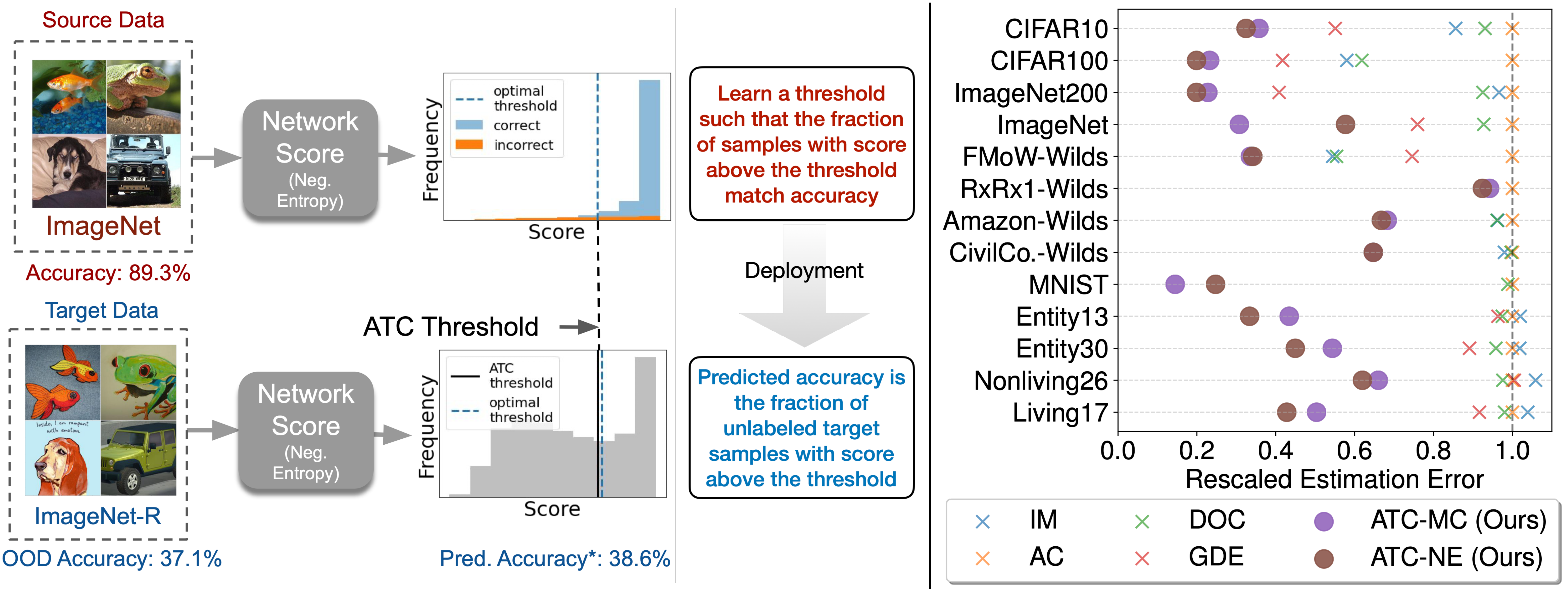}
    \vspace{-10pt}
    \caption{\emph{Illustration of our proposed method ATC.} {} \textbf{Left}: 
    using source domain validation data, 
    we identify a \emph{threshold}
    on a score (e.g. negative entropy)
    computed on model confidence
    such that fraction of examples above the threshold
    matches the validation set accuracy. 
    ATC estimates accuracy on unlabeled target data 
    as the fraction of examples 
    with the score above the threshold.
    Interestingly, this threshold yields accurate estimates
    on a wide set of target distributions 
    resulting from natural and synthetic shifts.{} 
    \textbf{Right}: \update{Efficacy of ATC 
    over previously proposed approaches on our testbed
    with a post-hoc calibrated model.} 
    To obtain errors on the same scale, 
    we rescale all errors with Average Confidence (AC) error. 
    Lower estimation error is better.
    See \tabref{table:error_estimation} for exact numbers 
    and comparison on various types of distribution shift. 
    See \secref{sec:exp} for details on our testbed.}
    \label{fig:intro}
    \vspace{-10pt}
  \end{figure*}

In this work, we first show that
{absent assumptions 
on the source classifier or the nature of the shift,
no method of estimating accuracy will work generally}
(even in non-contrived settings).
To estimate accuracy on target domain \emph{perfectly},  
we highlight that even given perfect knowledge 
of the labeled source distribution (i.e., $p_s(x,y)$) 
and unlabeled target distribution (i.e., $p_t(x)$),
we need restrictions on the nature of the shift 
such that we can uniquely identify 
the target conditional $p_t(y|x)$.  
Thus, in general, identifying the accuracy of the classifier
is as hard as identifying the optimal predictor.

Second, motivated by the superiority
of methods that use maximum softmax 
probability (or logit) of a model 
for Out-Of-Distribution (OOD) 
detection~\citep{hendrycks2016baseline,hendrycks2019scaling}, 
we propose a simple method 
that leverages softmax probability to
predict model performance.  
Our method, {Average Thresholded Confidence} (ATC), 
learns a threshold on a score 
(e.g., maximum confidence or negative entropy) 
of model confidence on validation source data 
and predicts target domain accuracy
as the fraction of unlabeled target points 
that receive a score above that threshold. 
ATC selects a threshold on validation source data 
such that the fraction of source examples   
that receive the score  
above the threshold match the 
accuracy of those examples.  
Our primary contribution in ATC
is the proposal of obtaining the threshold 
and observing its efficacy 
on (practical) accuracy estimation.  
Importantly, our work takes a step forward 
in positively answering the question raised 
in \citet{deng2021labels,deng2021does} 
about a practical strategy 
to select a threshold that enables 
accuracy prediction with thresholded model confidence.

ATC is simple to implement with existing frameworks, 
compatible with arbitrary model classes, 
and dominates other contemporary methods.
\update{Across several model architectures on a 
range of benchmark vision and 
language datasets, 
we verify that ATC outperforms
prior methods by at least $2$--$4\times$ in 
predicting target accuracy 
on a variety of distribution shifts.}
In particular, we consider 
shifts due to common corruptions (e.g., ImageNet-C), 
natural distribution shifts due to 
dataset reproduction (e.g., ImageNet-v2, ImageNet-R), 
shifts due to novel
subpopulations (e.g., \textsc{Breeds}), 
and distribution shifts faced in the wild (e.g., \textsc{Wilds}). 

As a starting point for theory development, 
we investigate ATC on a simple toy model
that models distribution shift 
with varying proportions of the population 
with spurious features,
as in \citet{nagarajan2020understanding}. 
Finally, we note that  
although ATC achieves 
superior performance in 
our empirical evaluation, like all methods, 
it must fail (returns inconsistent estimates) 
on certain types of distribution shifts,
per our impossibility result.

\vspace{-8pt}
\section{Prior Work}
\vspace{-5pt}
\textbf{Out-of-distribution detection. {} {}} 
The main goal of OOD detection is 
to identify previously unseen examples, 
i.e., samples out of the support
 of training distribution.  
To accomplish this, modern methods
utilize confidence or features learned 
by a deep network  trained 
on some source data. 
\citet{hendrycks2016baseline,geifman2017selective}
used the confidence score 
of an (already) trained deep model
to identify OOD points. 
\citet{lakshminarayanan2016simple} use entropy 
of an ensemble model to evaluate prediction 
uncertainty on OOD points. 
To improve OOD detection with model confidence, 
\citet{liang2017enhancing} propose to use 
temperature scaling and input perturbations.  
\citet{jiang2018trust} propose to use 
scores based on the relative 
distance of the predicted class 
to the second class. Recently, 
residual flow-based methods 
were used to obtain a density model 
for OOD detection~\citep{zhang2020hybrid}. 
\citet{ji2021predicting} proposed a method 
based on subfunction error bounds to compute 
unreliability per sample. Refer to 
\citet{ovadia2019can,ji2021predicting} for an overview 
and comparison of methods for prediction uncertainty 
on OOD data. 

\textbf{Predicting model generalization. {} {}}
Understanding generalization capabilities 
of overparameterized models on in-distribution 
data using conventional machine learning tools 
has been a focus of a long line of work; 
representative research includes~\citet{neyshabur2015norm,neyshabur2017exploring,neyshabur2017implicit,neyshabur2018role,dziugaite2017computing,bartlett2017spectrally,zhou2018non,long2019generalization, nagarajan2019deterministic}. 
At a high level, this line of research 
bounds the generalization gap directly 
with complexity measures calculated on
the trained model.
However, these bounds typically remain 
numerically loose relative to the true 
generalization error~\citep{zhang2016understanding,nagarajan2019uniform}. 
On the other hand, another line of research 
departs from complexity-based approaches to 
use unseen unlabeled data to predict in-distribution 
generalization~\citep{platanios2016estimating,platanios2017estimating,garg2021ratt,jiang2021assessing}.

Relevant to our work are methods for predicting 
the error of a classifier on OOD data 
based on unlabeled data from the target (OOD) domain. 
These methods can be characterized into two broad categories: 
(i) Methods which explicitly predict correctness of the model 
on individual unlabeled points~\citep{deng2021labels, jiang2021assessing,deng2021does, chen2021detecting}; and
(ii) Methods which directly obtain an estimate of error with unlabeled OOD data 
without making a point-wise prediction~\citep{chen2021mandoline,guillory2021predicting,chuang2020estimating}.

To achieve a consistent estimate of the target accuracy, 
~\citet{jiang2021assessing,guillory2021predicting} require
calibration on target domain. 
However, these methods typically yield poor estimates 
as deep models trained and calibrated on some source data 
are seldom calibrated on previously
unseen domains~\citep{ovadia2019can}.  
Additionally, \citet{deng2021labels,guillory2021predicting} 
derive model-based distribution statistics 
on unlabeled target set that correlate 
with the target accuracy and propose 
to use a subset of \emph{labeled} target 
domains to learn a (linear) regression 
function that predicts model performance. 
However, there are two drawbacks with this approach: 
(i) the correlation of these distribution 
statistics can vary substantially as we consider 
different nature of shifts 
(refer to \secref{sec:exp_results},
where we empirically demonstrate this failure);
(ii) even if there exists a (hypothetical) 
statistic with strong correlations, 
obtaining labeled target domains (even simulated ones) 
with strong correlations would require significant
\emph{a priori} knowledge about the nature of shift
that, in general, might not be available 
before models are deployed in the wild.  
Nonetheless, in our work, we only assume access 
to labeled data from the source domain presuming 
no access to labeled target domains or 
information about how to simulate them. 

Moreover, unlike the parallel work of \citet{deng2021does}, 
we do not focus on methods
that alter the training on source data 
to aid accuracy prediction on the target data. 
\citet{chen2021mandoline} propose an 
importance re-weighting based approach 
that leverages (additional) information 
about the axis along which distribution 
is shifting in form of ``slicing functions''. 
In our work, we make comparisons with 
importance re-weighting 
baseline from \citet{chen2021mandoline} 
as we do not have any additional 
information about the axis along which 
the distribution is shifting. 

\vspace{-8pt}
\section{Problem Setup} \label{sec:setup}
\vspace{-5pt}
\textbf{Notation.} By $\enorm{\cdot}$, 
and $\inner{\cdot}{\cdot}$
we denote the Euclidean norm
and inner product,
respectively.
For a vector $v\in \Real^d$, 
we use $v_j$ to denote its $j^\text{th}$ entry, 
and for an event $E$ we let $\indict{E}$ 
denote the binary indicator of the event.

Suppose we have a multi-class classification problem
with the input domain $\calX \subseteq \Real^d$ 
and label space $\calY = \{1, 2, \ldots, k\}$. For 
binary classification, 
we use $\calY = \{0,1\}$. 
By $\calD^\train$ and $\calD^\test$, 
we denote source and target distribution 
over $\calX \times \calY$. For distributions 
$\calD^\train$ and $\calD^\test$, we 
define $p_\train$ or $p_\test$ as the
corresponding probability density (or mass) 
functions. A dataset 
$S \defeq \{(x_i, y_i)\}_{i=1}^n \sim (\calD^\train)^n$
contains $n$ points sampled i.i.d. from $\calD^\train$.
Let $\calF$ be a class of hypotheses 
mapping $\calX$ to $\Delta^{k-1}$ where $\Delta^{k-1}$ is a simplex in $k$ dimensions. 
Given a classifier $f \in \calF$ and datum $(x, y)$,
we denote the 0-1 error 
(i.e., classification error)
on that point by 
$\error(f(x), y) \defeq \indict{ y\not\in \argmax_{j\in\calY} f_j(x) }$. 
Given a model $f \in \calF$, 
our goal in this work is to understand the performance 
of $f$ on $\calD^\test$ without access to 
labeled data from $\calD^\test$. Note that our goal
is not to adapt the model to the target data.  
Concretely, we aim to predict accuracy of $f$ 
on $\calD^\test$. Throughout this paper, 
we assume we have access to the following: 
(i) model $f$; (ii) previously-unseen (validation) data 
from $\calD^\train$; and
(iii) unlabeled data from target distribution $\calD^\test$. 

\vspace{-7pt}
\subsection{Accuracy Estimation: Possibility and Impossibility Results}
\vspace{-3pt}

First, we investigate the question 
of when it is possible 
to estimate the target accuracy
of an arbitrary classifier,
even given knowledge of the full
source distribution $p_s(x,y)$ 
and target marginal $p_t(x)$. 
Absent assumptions on the nature of shift,
estimating target accuracy is impossible. 
Even given access to $p_s(x,y)$ and $p_t(x)$,
the problem is fundamentally 
unidentifiable 
because $p_t(y|x)$ can shift arbitrarily. 
In the following proposition, 
we show that absent assumptions 
on the classifier $f$ 
(i.e., when $f$ can be any classifier 
in the space of all classifiers on $\calX$), 
we can estimate accuracy on the 
target data iff assumptions
on the nature of the shift,
together with $p_s(x, y)$ and $p_t(x)$,
uniquely identify the (unknown) target 
conditional $p_t(y|x)$. 
We relegate proofs from this section to 
\appref{app:proof_setup}. %

\begin{proposition} \label{prop:characterization}
Absent further assumptions,
accuracy on the 
target is identifiable 
iff $p_t(y|x)$ is uniquely identified 
given $p_s(x, y)$ and $p_t(x)$. 
\end{proposition}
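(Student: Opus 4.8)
The plan is to prove the two directions separately, reading ``accuracy on the target is identifiable'' as the statement that the target accuracy $\int_{\calX}\sum_{y\in\calY}\indict{y\in\argmax_j f_j(x)}\,p_t(y|x)\,p_t(x)\,dx$ is a well-defined function of the available information (it takes the same value across all target joints consistent with $p_s(x,y)$, $p_t(x)$, and the assumptions in force), where this is implicitly quantified over every $f$ in the space of all classifiers on $\calX$. The easy direction is $(\Leftarrow)$: if the assumptions together with $p_s(x,y)$ and $p_t(x)$ pin down $p_t(y|x)$ uniquely ($p_t(x)$-a.e.), then the full target joint $p_t(x,y)=p_t(x)\,p_t(y|x)$ is determined, and hence so is the target accuracy of any fixed $f$, since it is a functional of $p_t(x,y)$.

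For the converse $(\Rightarrow)$ I would argue by contraposition. Suppose $p_t(y|x)$ is \emph{not} uniquely identified: there exist two conditionals $p_t^{(1)}(y|x)\neq p_t^{(2)}(y|x)$, both consistent with $p_s$, $p_t(x)$, and the assumptions, disagreeing on a set $A\subseteq\calX$ with $p_t(A)>0$. I construct a single classifier whose target accuracy differs across the two candidates. For each $x$, let $\hat f(x)\in\argmax_{y\in\calY}\big(p_t^{(1)}(y|x)-p_t^{(2)}(y|x)\big)$, breaking ties by the smallest such index (a valid measurable selection since $\calY$ is finite), and set $f(x)=e_{\hat f(x)}$, the $\hat f(x)$-th vertex of $\Delta^{k-1}$, so that $\argmax_j f_j(x)=\{\hat f(x)\}$ with no ties. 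Since $\sum_{y\in\calY}\big(p_t^{(1)}(y|x)-p_t^{(2)}(y|x)\big)=0$ for every $x$, the quantity $\max_{y}\big(p_t^{(1)}(y|x)-p_t^{(2)}(y|x)\big)$ is always $\ge 0$ and is strictly positive exactly when $x\in A$. Therefore the gap between the target accuracies of $f$ under the two candidates equals
\[
\int_{\calX}\Big(p_t^{(1)}(\hat f(x)|x)-p_t^{(2)}(\hat f(x)|x)\Big)p_t(x)\,dx
=\int_{\calX}\max_{y\in\calY}\big(p_t^{(1)}(y|x)-p_t^{(2)}(y|x)\big)p_t(x)\,dx,
\]
which is the integral of a nonnegative function that is strictly positive on $A$ with $p_t(A)>0$, hence $>0$. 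So the target accuracy of this $f$ cannot be recovered from $p_s$, $p_t(x)$, and the assumptions, establishing non-identifiability and completing the contrapositive. (If $p_t$ has atoms the integral reads as the corresponding sum; nothing changes.)

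The step I expect to demand the most care is this construction of the adversarial classifier — specifically the observation that the per-$x$ maximum of the difference of the two conditionals is strictly positive precisely on the disagreement set, which is exactly where the constraint that the $k$ signed differences sum to zero is used; the measurable-selection detail for $\hat f$ is routine given finiteness of $\calY$. I would also be careful to state at the outset which notion of ``identifiable'' is intended and to make explicit that it is universally quantified over classifiers, since it is precisely this ``for all $f$'' reading that makes exhibiting a single bad $f$ sufficient to defeat identifiability in the $(\Rightarrow)$ direction.
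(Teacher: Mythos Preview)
Your proof is correct and follows essentially the same strategy as the paper's: the $(\Leftarrow)$ direction is immediate, and for $(\Rightarrow)$ you assume two consistent conditionals and construct an adversarial classifier that at each $x$ predicts the label maximizing $p_t^{(1)}(y|x)-p_t^{(2)}(y|x)$, then use the sum-to-zero constraint to show the accuracy gap is strictly positive on the disagreement set. The paper's proof restricts to binary classification and writes out the two-case construction explicitly (predict $0$ where $p_t(y{=}1|x)>p_t'(y{=}1|x)$, predict $1$ where the inequality reverses), arguing in terms of error rather than accuracy; your $\max_y$ formulation is a clean multiclass generalization of the same idea, and the added care about measurable selection and the explicit ``for all $f$'' reading of identifiability are welcome clarifications the paper leaves implicit.
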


\propref{prop:characterization} states that 
we need enough constraints on nature of shift 
such that $p_s(x, y)$ and $p_t(x)$
identifies unique $p_t(y|x)$. 
It also states that under some assumptions 
on the nature of the shift, 
we can hope to estimate 
the model's accuracy on target data. 
We will illustrate this
on two common assumptions 
made in domain adaptation literature:
(i) covariate shift~\citep{heckman1977sample,shimodaira2000improving} 
and (ii) label shift~\citep{saerens2002adjusting,zhang2013domain,lipton2018detecting}. 
Under covariate shift assumption, 
that the target marginal support $\textbf{supp}(p_t(x))$
is a subset of the source marginal support $\textbf{supp}(p_s(x))$
and that the conditional distribution of labels given inputs 
does not change within support,
i.e.,  $p_s(y|x) = p_t(y|x)$,
which, trivially, identifies 
a unique target conditional $p_t(y|x)$. 
Under label shift, the reverse holds, i.e.,
the class-conditional distribution 
does not change ($p_s(x|y) = p_t(x|y)$)
and, again, information about $p_t(x)$ uniquely determines 
the target conditional $p_t(y|x)$~\citep{lipton2018detecting,garg2020unified}. 
In these settings, one can estimate 
an arbitrary classifier's accuracy 
on the target domain either 
by using importance re-weighting 
with the ratio $p_t(x)/p_s(x)$
in case of covariate shift 
or by using importance re-weighting 
with the ratio $p_t(y)/p_s(y)$ 
in case of label shift. 
While importance ratios in the former case
can be obtained directly 
when $p_t(x)$ and $p_s(x)$ are known, 
the importance ratios in the latter case 
can be obtained by using techniques from ~\citet{saerens2002adjusting,lipton2018detecting,azizzadenesheli2019regularized, alexandari2019adapting}. 
In \appref{app:estimate_label_covariate},%
we explore accuracy estimation 
in the setting of these shifts and present extensions 
to generalized notions of label shift~\citep{tachet2020domain} 
and covariate shift~\citep{rojas2018invariant}. 

As a corollary of \propref{prop:characterization}, 
we now present a simple impossibility result,
demonstrating that no single method can work 
for all families of distribution shift. 

\begin{corollary} \label{corollary:impossible}
Absent assumptions on the classifier $f$, 
no method of estimating accuracy will 
work in all scenarios, i.e., for different 
nature of distribution shifts. 
\end{corollary}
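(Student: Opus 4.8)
\textbf{Proof plan for Corollary~\ref{corollary:impossible}.}
The plan is to deduce the corollary from Proposition~\ref{prop:characterization} by producing a single pair of distribution-shift scenarios on which \emph{no} estimator can be simultaneously correct. First I would pin down what a ``method'' is: an accuracy estimator is any (possibly randomized) procedure that takes the model $f$, a finite labeled sample from $\calD^\train$, and a finite \emph{unlabeled} sample from the target marginal $p_\test(x)$, and returns a number in $[0,1]$; it \emph{works} on a triple $(\calD^\train,\calD^\test,f)$ if its output converges in probability, as the sample sizes grow, to the true target accuracy $\Pr_{(x,y)\sim\calD^\test}[\error(f(x),y)=0]$. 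The structural observation driving the argument --- the same one that underlies Proposition~\ref{prop:characterization} --- is that the joint law of the estimator's inputs depends on $\calD^\test$ only through the marginal $p_\test(x)$, since target labels are never observed. Hence for any two targets that share the marginal $p_\test(x)$ and are paired with the same source $\calD^\train$, the estimator is statistically indistinguishable across the two, so its limiting value (if it has one) is the same.

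Next I would exhibit two such targets on which $f$ nonetheless has very different accuracy. Take $\calY=\{0,1\}$ and $\calX=\{x_0,x_1\}$, with source $p_\train(x_0)=p_\train(x_1)=1/2$, $p_\train(y=1\mid x_0)=1$, and $p_\train(y=1\mid x_1)=0$, and let $f$ be any classifier that predicts class $1$ at $x_0$ and class $0$ at $x_1$; then $f$ is Bayes-optimal on the source, with source accuracy $1$. Fix the target marginal $p_\test(x_0)=3/4$, $p_\test(x_1)=1/4$ --- a bona fide covariate-distribution shift, not a degenerate one. In scenario~1 set $p_\test(y\mid x)=p_\train(y\mid x)$ (covariate shift), so $f$ has target accuracy $1$; in scenario~2 set $p_\test(y=1\mid x_0)=0$ and $p_\test(y=1\mid x_1)=1$, so $f$ errs on every target point and has target accuracy $0$. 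Both target conditionals are consistent with the observables $\big(p_\train(x,y),\,p_\test(x)\big)$, so by Proposition~\ref{prop:characterization} the target accuracy is not identified in this family; the construction simply pushes the two admissible values to the extremes $0$ and $1$.

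Finally I would conclude: by the structural observation, any estimator has a single limiting value $c\in[0,1]$ common to scenarios~1 and~2, so its asymptotic estimation error is at least $\max\{\,|c-1|,\,|c-0|\,\}\ge 1/2$ in at least one of them; therefore no fixed method can work across all natures of distribution shift, which is the claim. I would also note that the discreteness of $\calX$ is inessential: replacing $x_0,x_1$ by two well-separated clusters in $\Real^d$ with $f$ an ordinary linear margin classifier yields a continuous, entirely ``non-contrived'' instance of the same obstruction, which supports the stronger phrasing used in the introduction and abstract.

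The main obstacle here is definitional rather than computational. One must fix the notions of ``method'' and ``works'' carefully enough that the argument covers randomized and finite-sample estimators, while still making rigorous the claim that an estimator's behavior cannot depend on the unobserved conditional $p_\test(y\mid x)$; and one must check that the two constructed targets are genuine distribution shifts, so that the impossibility is not vacuous. Once these pieces are in place, the corollary follows immediately from Proposition~\ref{prop:characterization}.
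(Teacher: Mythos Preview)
Your proposal is correct and follows the same high-level route as the paper: invoke Proposition~\ref{prop:characterization}, observe that any estimator's inputs depend on the target only through $p_t(x)$, and exhibit two targets sharing $p_t(x)$ and $p_s(x,y)$ on which $f$ has different accuracy. The paper's own proof is a two-sentence appeal to this logic, deferring the concrete witness to Example~1.

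The substantive difference is in the witness construction. You build a discrete two-point example and push the two accuracies to the extremes $0$ and $1$, which makes the $1/2$ lower bound on estimation error explicit and clean. The paper instead contrasts \emph{covariate shift} and \emph{label shift} on a Gaussian mixture (Example~1), showing that the same $p_s(x,y)$ and $p_t(x)$ yield distinct errors $\calE_1\neq\calE_2$ under the two assumptions. Your version is tighter quantitatively and more careful about what ``method'' and ``works'' mean for finite-sample, possibly randomized estimators---points the paper leaves implicit. The paper's version, on the other hand, better matches the corollary's phrasing about ``different \emph{nature} of distribution shifts'': it shows that two standard, named shift families are already enough to break any single estimator, which is a slightly stronger message than exhibiting one natural and one adversarial conditional. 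Either construction suffices for the corollary as stated.
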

Intuitively, this result states that 
every method of estimating accuracy on 
target data is tied up with some assumption
on the nature of the shift and might not 
be useful for estimating accuracy under a  
different assumption on the nature of the shift.
For illustration, consider a setting 
where we have access to distribution 
$p_s(x, y)$ and $p_t(x)$. 
Additionally, assume 
that the distribution can shift 
only due to covariate shift or
label shift without any knowledge 
about which one. 
Then \corollaryref{corollary:impossible} 
says that it is impossible to have a single 
method that will simultaneously for both label 
shift and covariate shift as in the following example
(we spell out the details in \appref{app:proof_setup}):%

\textbf{Example 1.}
Assume binary classification with  
$p_s(x) = \alpha \cdot \phi(\mu_1) + (1-\alpha)\cdot \phi(\mu_2)$, 
$p_s(x|y=0) = \phi(\mu_1)$, $p_s(x|y=1) =  \phi(\mu_2)$,  
and 
$p_t(x) = \beta\cdot \phi(\mu_1) + (1-\beta) \cdot \phi(\mu_2)$ 
where $\phi(\mu) = \calN(\mu, 1)$, $\alpha,\beta \in (0,1)$,
and $\alpha \ne \beta$. 
Error of a classifier $f$ on target data is given by 
$\calE_1=\Expt{(x,y)\sim p_s(x,y)}{\frac{p_t(x)}{p_s(x)} \indict{f(x)\ne y}}$
under covariate shift and by 
$\calE_2=\Expt{(x,y)\sim p_s(x,y)}{\left(\frac{\beta}{\alpha}\indict{y=0} + \frac{1-\beta}{1-\alpha}\indict{y=1} \right)\indict{f(x)\ne y}}$
under label shift. 
In \appref{app:proof_setup},
we show that $\calE_1 \ne \calE_2$ for all $f$. 
Thus, given access to $p_s(x,y)$, and $p_t(x)$, 
any method that consistently estimates 
error of a classifer under covariate shift will give an
incorrect estimate of error 
under label shift and vice-versa.   
The reason is that the same $p_t(x)$ and $p_s(x,y)$
can correspond to error $\calE_1$ (under covariate shift)
or error $\calE_2$ (under label shift) 
and determining which scenario one faces
requires further assumptions on the nature of shift.

\vspace{-8pt}
\section{Predicting accuracy with Average Thresholded Confidence}
\vspace{-5pt}
In this section, we present our method ATC 
that leverages a black box classifier $f$ 
and (labeled) validation source data to 
predict accuracy on target domain given
access to unlabeled target data. 
Throughout the discussion, we assume 
that the classifier $f$ is fixed.  

Before presenting our method, 
we introduce some terminology. 
Define a score function 
$s: \Delta^{k-1} \to \Real$ 
that takes in the softmax prediction
of the function $f$ 
and outputs a scalar. 
We want a score function 
such that if the score function 
takes a high value at a datum $(x,y)$ 
then $f$ is likely to be correct. 
In this work, we explore two such 
score functions: 
(i) Maximum confidence, i.e., 
$s(f(x)) = \underset{j \in \calY}{\max} f_j(x)$; 
and (ii) Negative Entropy, i.e.,
$s(f(x)) = \sum_j f_j(x) \log(f_j(x))$.
Our method identifies a threshold $t$ 
on source data $\calD^\train$
such that the expected number of points 
that obtain a score less than $t$ 
match the error of $f$ on $\calD^\train$, i.e., 
\begin{align}
    \Expt{x \sim \calD^\train }{\indict{ s(f(x)) < t}}  = \Expt{(x,y) \sim \calD^\train}{\indict{ \argmax_{j\in \out} f_j(x) \ne y}} \,, 
    \label{eq:ATC_thres}
\end{align}
and then our error estimate $\text{ATC}_{D^\test}(s)$
on the target domain $\calD^\test$ is given 
by the expected number of target points
that obtain a score less than $t$, i.e., 
\begin{align}
    \text{ATC}_{\calD^\test}(s) =  \Expt{x \sim \calD^\test}{\indict{ s(f(x)) < t}}  \,.
    \label{eq:ATC_pred}
\end{align}
In short, in \eqref{eq:ATC_thres},
ATC selects a threshold on the score function 
such that the error in the source domain 
matches the expected number of points 
that receive a score below $t$ 
and in \eqref{eq:ATC_pred},
ATC predicts error on the target domain 
as the fraction of unlabeled points 
that obtain a score below that threshold $t$. 
Note that, in principle, there exists 
a different threshold $t^\prime$
on the target distribution $\calD^\test$ 
such that \eqref{eq:ATC_thres}
is satisfied on $\calD^\test$. 
However, in our experiments,
the same threshold performs remarkably well.
The main empirical contribution of our work 
is to show that the threshold obtained with 
\eqref{eq:ATC_thres} might be used effectively
in condunction with modern deep networks
in a wide range of settings
to estimate error on the target data.
In practice, to obtain the threshold with ATC, 
we minimize the difference between the 
expression on two sides of \eqref{eq:ATC_thres}
using finite samples. 
In the next section,  
we show that ATC precisely
predicts accuracy on the OOD data  
on the desired line $y=x$. 
In \appref{app:interpretation}, %
we discuss an alternate 
interpretation of the method and make 
connections with OOD detection methods.

\vspace{-8pt}
\section{Experiments} \label{sec:exp}
\vspace{-7pt}

\begin{figure}[t]
\vspace{-20pt}
    \centering
    \subfigure{\includegraphics[width=0.32\linewidth]{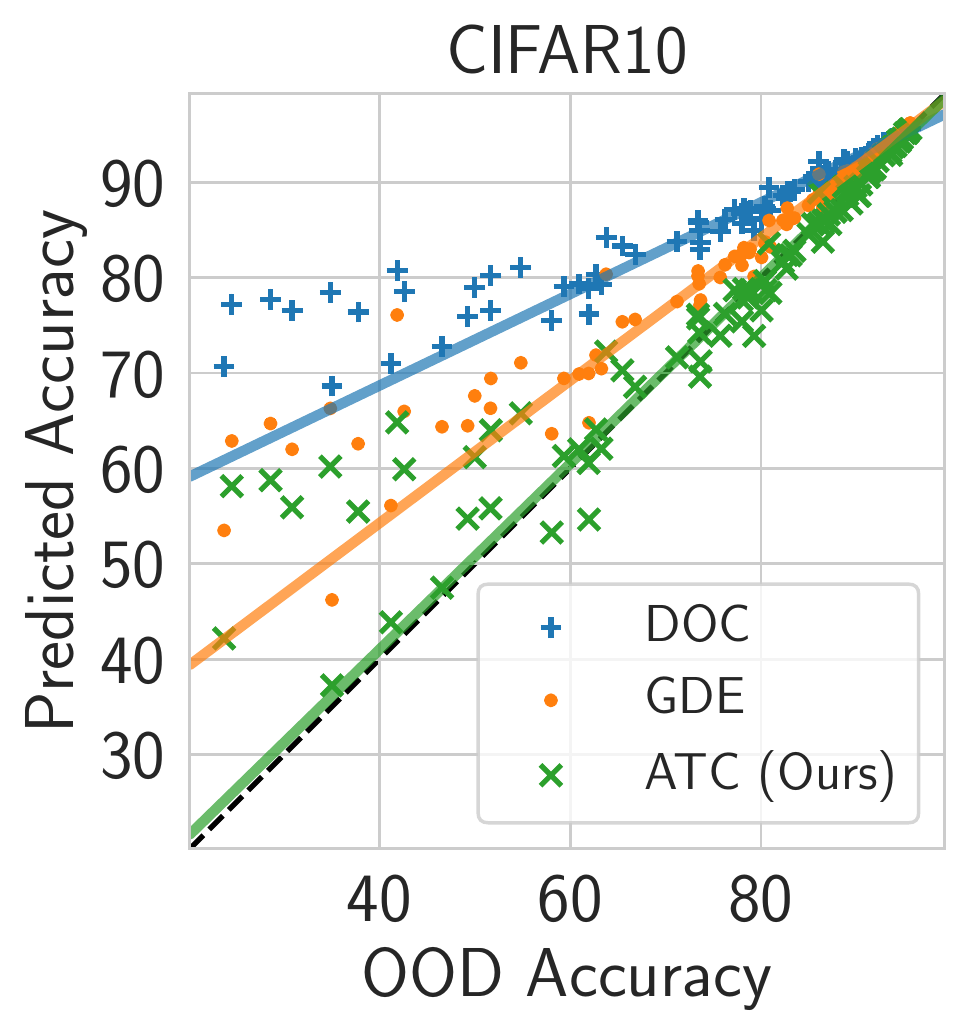}} \hfil
    \subfigure{\includegraphics[width=0.32\linewidth]{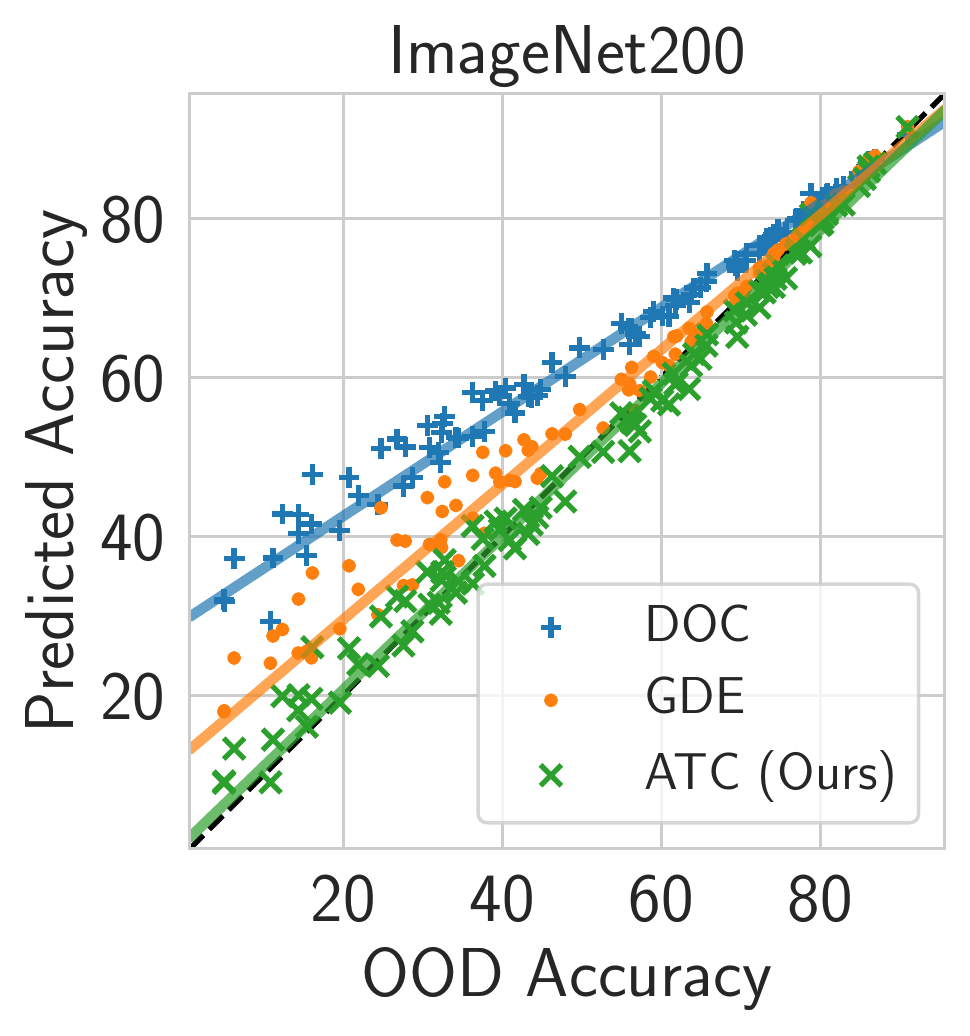}} \hfil   
    \subfigure{\includegraphics[width=0.32\linewidth]{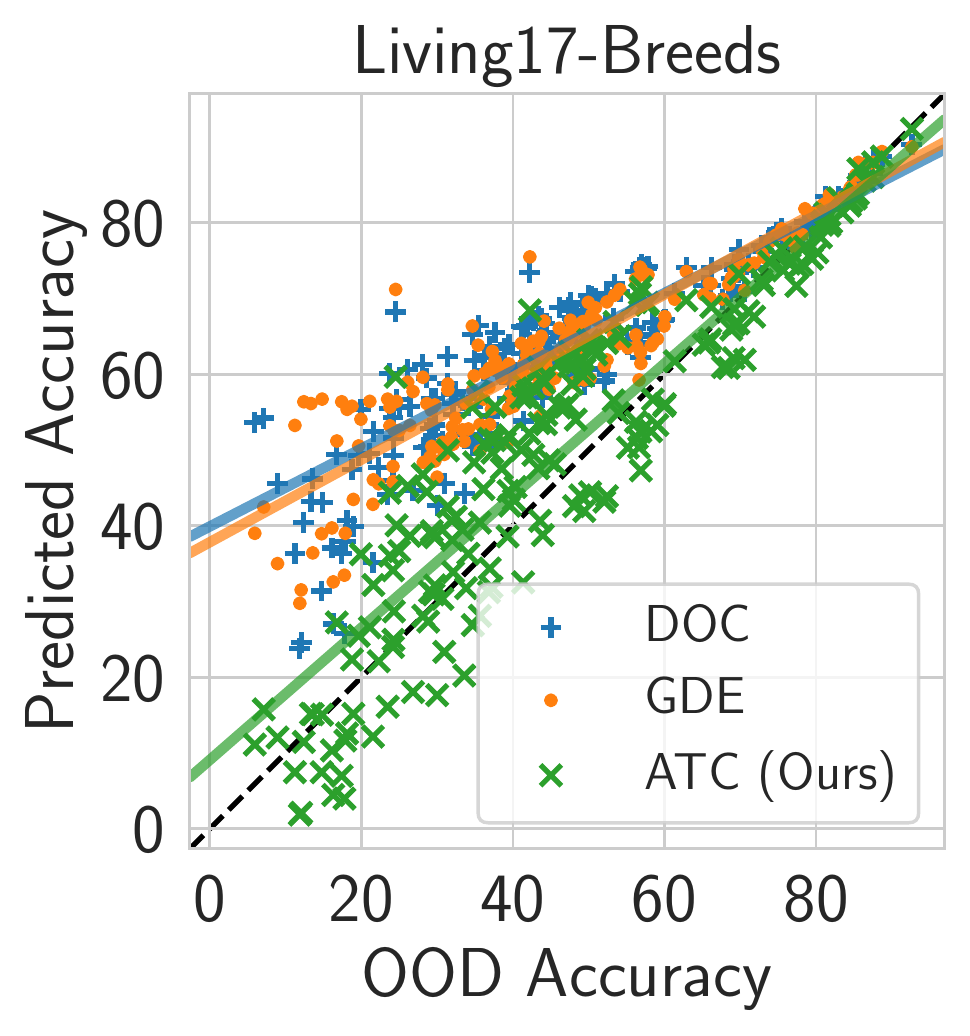}} 
    \vspace{-12pt}
    \caption{\emph{Scatter plot of predicted accuracy versus (true) OOD accuracy.}
    Each point denotes a different OOD dataset, 
    all evaluated with the same DenseNet121 model.
    We only plot the best three methods. 
    With ATC (ours), we refer to ATC-NE. 
    We observe that ATC significantly outperforms other methods 
    and with ATC, we recover the desired line $y=x$ with a robust linear fit. 
    Aggregated estimation error in \tabref{table:error_estimation} 
    and plots for other datasets and architectures in \appref{app:results}.} 
    \vspace{-15pt}
    \label{fig:scatter_plot}
  \end{figure}

We now empirical evaluate 
ATC and compare it with existing methods. 
In each of our main experiment, 
keeping the underlying model fixed, 
we vary target datasets and make a 
prediction of the target accuracy 
with various methods given access to 
only unlabeled data from the target. 
Unless noted otherwise, all models are  
trained only on samples from the source 
distribution with the main exception of pre-training
on a different distribution. 
We use labeled examples from the 
target distribution to only obtain 
true error estimates. 

\textbf{Datasets. {}} First, we consider 
synthetic shifts induced due to 
different visual corruptions (e.g., shot noise, 
motion blur etc.) under 
ImageNet-C~\citep{hendrycks2019benchmarking}. 
Next, we consider natural shifts 
due to differences in the data 
collection process of ImageNet~\citep{russakovsky2015imagenet}, 
e.g, ImageNetv2~\citep{recht2019imagenet}. 
We also consider images with artistic 
renditions of object classes, i.e., 
ImageNet-R~\citep{hendrycks2021many} 
and ImageNet-Sketch~\citep{wang2019learning}. 
Note that renditions dataset only contains 
a subset $200$ 
classes from ImageNet. 
To include renditions dataset
in our testbed, we include results on 
ImageNet restricted 
to these $200$ classes (which we call 
ImageNet-200) along with full ImageNet. 

Second, we consider \textsc{Breeds}~\citep{santurkar2020breeds} 
to assess robustness to subpopulation shifts, in particular, 
to understand how accuracy estimation 
methods behave when novel subpopulations 
not observed during training are introduced. 
\textsc{Breeds} leverages class hierarchy 
in ImageNet to create 4 datasets \textsc{Entity-13}, \textsc{Entity-30},
\textsc{Living-17}, \textsc{Non-living-26}. We focus on 
natural and synthetic shifts as in ImageNet on same and different
subpopulations in BREEDs. 
\update{Third, from \textsc{Wilds}~\citep{wilds2021} benchmark, we consider 
FMoW-\textsc{Wilds}~\citep{christie2018functional}, 
RxRx1-\textsc{Wilds}~\citep{taylor2019rxrx1}, 
Amazon-\textsc{Wilds}~\citep{ni2019justifying}, 
CivilComments-\textsc{Wilds}~\citep{borkan2019nuanced}
to consider distribution shifts faced in the wild.
}

Finally, similar to ImageNet, 
we consider (i) synthetic shifts (CIFAR-10-C) due to 
common corruptions; and (ii) natural shift 
(i.e., CIFARv2~\citep{recht2018cifar})
on CIFAR-10~\citep{krizhevsky2009learning}.  
On CIFAR-100, we just have  
synthetic shifts due to common corruptions. 
For completeness, we also consider natural shifts 
on MNIST~\citep{lecun1998mnist}
as in the prior work~\citep{deng2021labels}. 
We use three real shifted datasets, i.e., USPS~\citep{hull1994database},
SVHN~\citep{netzer2011reading} and QMNIST~\citep{qmnist-2019}. 
We give a detailed overview of our setup in \appref{app:dataset}.  %

\textbf{Architectures and Evaluation. {}{}}
\update{For ImageNet, \textsc{Breeds}, CIFAR, FMoW-\textsc{Wilds}, 
RxRx1-\textsc{Wilds}
datasets, we use DenseNet121~\citep{huang2017densely} 
and ResNet50~\citep{he2016deep} architectures. 
For Amazon-\textsc{Wilds} and CivilComments-\textsc{Wilds},
we fine-tune a  DistilBERT-base-uncased~\citep{Sanh2019DistilBERTAD} model. }
For MNIST, we train a fully 
connected multilayer perceptron.  
We use standard training
with benchmarked hyperparameters. 
To compare methods, we report average 
absolute difference between 
the true accuracy on the target data and the 
estimated accuracy on the same unlabeled examples. 
We refer to this metric as Mean Absolute estimation Error~(MAE). 
Along with MAE, we also show scatter plots 
to visualize performance at individual target sets. 
Refer to \appref{app:exp_setup} for additional %
details on the setup.

\begin{table}[t]
    \begin{adjustbox}{width=\columnwidth,center}
    \centering
    \tabcolsep=0.12cm
    \renewcommand{\arraystretch}{1.2}
    \begin{tabular}{@{}*{13}{c}@{}}
    \toprule
    \multirow{2}{*}{Dataset} & \multirow{2}{*}{Shift} & \multicolumn{2}{c}{IM} & \multicolumn{2}{c}{AC} & \multicolumn{2}{c}{DOC} & GDE & \multicolumn{2}{c}{ATC-MC (Ours)} & \multicolumn{2}{c}{ATC-NE (Ours)} \\
    & & Pre T & Post T & Pre T & Post T  & Pre T & Post T & Post T & Pre T & Post T & Pre T & Post T \\
    \midrule
    \multirow{2}{*}{\parbox{1.2cm}{CIFAR10} }  & Natural & $6.60$ &$5.74$ &$9.88$ &$6.89$ &$7.25$ &$6.07$ &$4.77$ &$3.21$ &$3.02$ &$2.99$ & $\bf 2.85$   \\
    & Synthetic & $12.33$ &$10.20$ &$16.50$ &$11.91$ &$13.87$ &$11.08$ &$6.55$ &$4.65$ &$4.25$ &$4.21$ &$\bf 3.87$ \\ 
    \midrule 
    CIFAR100 & Synthetic & $13.69$ &$11.51$ &$23.61$ &$13.10$ &$14.60$ &$10.14$ &$9.85$ &$5.50$ &$\bf 4.75$ &$\bf 4.72$ &$4.94$ \\
    \midrule  
    \multirow{2}{*}{\parbox{1.8cm}{ImageNet200} }  & Natural & $12.37$ &$8.19$ &$22.07$ &$8.61$ &$15.17$ &$7.81$ &$5.13$ &$4.37$ &$2.04$ &$3.79$ & $\bf 1.45$ \\
    & Synthetic & $19.86$ &$12.94$ &$32.44$ &$13.35$ &$25.02$ &$12.38$ &$5.41$ &$5.93$ &$3.09$ &$5.00$ &$\bf 2.68$  \\ 
    \midrule 
    \multirow{2}{*}{\parbox{1.8cm}{\centering ImageNet} }  & Natural &
    $7.77$ &$6.50$ &$18.13$ &$6.02$ &$8.13$ &$5.76$ &$6.23$ &$3.88$ &$2.17$ &$2.06$ &$\bf 0.80$ \\
        & Synthetic &$13.39$ &$10.12$ &$24.62$ &$8.51$ &$13.55$ &$7.90$ &$6.32$ &$3.34$ &$\bf 2.53$ &$\bf 2.61$ &$4.89$ \\ 
    \midrule 
    FMoW-\textsc{wilds} & Natural & $5.53$ &$4.31$ &$33.53$ &$12.84$ &$5.94$ &$4.45$ &$5.74$ &$3.06$ &$\bf 2.70$ &$3.02$ &$\bf 2.72$  \\
    \midrule 
    RxRx1-\textsc{wilds} & Natural & $5.80$ &$5.72$ &$7.90$ &$4.84$ &$5.98$ &$5.98$ &$6.03$ &$4.66$ &$\bf 4.56$ &$\bf 4.41$ &$\bf 4.47$ \\
    \midrule 
    Amazon-\textsc{wilds} & Natural & $2.40$ &$2.29$ &$8.01$ &$2.38$ &$2.40$ &$2.28$ &$17.87$ &$1.65$ &$\bf 1.62$ &$ \bf 1.60$ &$\bf 1.59$ \\
    \midrule 
    CivilCom.-\textsc{wilds} & Natural & $12.64$ &$10.80$ &$16.76$ &$11.03$ &$13.31$ &$10.99$ &$16.65$ & \multicolumn{4}{c}{$\bf 7.14$} \\
    \midrule 
    MNIST & Natural &$18.48$ &$15.99$ &$21.17$ &$14.81$ &$20.19$ &$14.56$ &$24.42$ &$5.02$ &$\bf 2.40$ &$3.14$ &$3.50$ \\
    \midrule 
    \multirow{2}{*}{\parbox{1.8cm}{\textsc{Entity-13}} } & Same & $16.23$ &$11.14$ &$24.97$ &$10.88$ &$19.08$ &$10.47$ &$10.71$ &$5.39$ &$\bf 3.88$ &$4.58$ &$4.19$ \\
    & Novel & $28.53$ &$22.02$ &$38.33$ &$21.64$ &$32.43$ &$21.22$ &$20.61$ &$13.58$ &$10.28$ &$12.25$ &$\bf 6.63$ \\
    \midrule 
    \multirow{2}{*}{\parbox{1.8cm}{\textsc{Entity-30}}} & Same & $18.59$ &$14.46$ &$28.82$ &$14.30$ &$21.63$ &$13.46$ &$12.92$ &$9.12$ &$\bf 7.75$ &$8.15$ &$ \bf 7.64$ \\
    & Novel & $32.34$ &$26.85$ &$44.02$ &$26.27$ &$36.82$ &$25.42$ &$23.16$ &$17.75$ &$14.30$ &$15.60$ &$\bf 10.57$  \\
    \midrule
    \multirow{2}{*}{{\textsc{Nonliving-26}}} & Same & $18.66$ &$17.17$ &$26.39$ &$16.14$ &$19.86$ &$15.58$ &$16.63$ &$10.87$ &$\bf 10.24$ &$10.07$ &$\bf 10.26$ \\
    & Novel &$33.43$ &$31.53$ &$41.66$ &$29.87$ &$35.13$ &$29.31$ &$29.56$ &$21.70$ &$20.12$ &$19.08$ &$\bf 18.26$  \\ 
    \midrule 
    \multirow{2}{*}{\parbox{1.8cm}{\textsc{Living-17}}} & Same & $12.63$ &$11.05$ &$18.32$ &$10.46$ &$14.43$ &$10.14$ &$9.87$ &$4.57$ &$\bf  3.95$ &$\bf 3.81$ &$4.21$   \\
    & Novel & $29.03$ &$26.96$ &$35.67$ &$26.11$ &$31.73$ &$25.73$ &$23.53$ &$16.15$ &$14.49$ &$12.97$ &$\bf 11.39$ \\ 
    \bottomrule 
    \end{tabular}
  \end{adjustbox}  
    \vspace{-5pt}
    \caption{
    \emph{Mean Absolute estimation Error (MAE) results for different datasets in our setup grouped by the nature of shift.} 
    `Same' refers to same subpopulation shifts and `Novel' refers novel subpopulation shifts. \update{We include details about the target sets considered in each shift in \tabref{table:dataset}.} 
    Post T denotes use of TS calibration on source.  
    Across all datasets, we observe that ATC achieves superior performance (lower MAE is better).  
    \update{For language datasets, we use DistilBERT-base-uncased, for vision dataset we report results with DenseNet model with the exception of MNIST where we use FCN.} 
    We include results on other architectures in \appref{app:results}. 
    For GDE post T and pre T estimates match since TS doesn't alter the argmax prediction. \update{ Results reported by aggregating MAE numbers over $4$ different seeds. We include results with standard deviation values in \tabref{table:error_estimation_std}.}
    }\label{table:error_estimation}
    \vspace{-10pt}
\end{table}

\textbf{Methods {} {}} With ATC-NE, we denote ATC with negative entropy score function and with ATC-MC, 
we denote ATC with maximum confidence score function. 
For all methods, we implement \emph{post-hoc} calibration
on validation source data 
with Temperature Scaling (TS;~\citet{guo2017calibration}). 
Below we briefly discuss baselines methods 
compared in our work and relegate details to \appref{app:baselines}.  %

\emph{Average Confidence (AC). {}{}} Error is estimated as the expected value of the maximum softmax confidence  on the target data, i.e, $\AC_{\calD^\test} = \Expt{x \sim \calD^\test}{ \max_{j\in\calY} f_j(x)}$. 

\emph{Difference Of Confidence (DOC). {}{}} We estimate error on target by subtracting difference of confidences on source and target (as a surrogate to distributional distance~\citet{guillory2021predicting}) from the error on source distribution, i.e, $\DOC_{\calD^\test} = \Expt{x \sim \calD^\train}{ \indict{\argmax_{j\in\calY} f_j(x) \ne y}} + \Expt{x \sim \calD^\test}{ \max_{j\in\calY} f_j(x)} - \Expt{x \sim \calD^\train}{ \max_{j\in\calY} f_j(x)}$. This is referred to as DOC-Feat in \citep{guillory2021predicting}. 

\emph{Importance re-weighting (IM). {} {}} 
We estimate the error of the classifier 
with importance 
re-weighting of 0-1 error in the 
pushforward space 
of the classifier. 
This corresponds to \textsc{Mandolin} 
using one slice based on the underlying 
classifier confidence~\citet{chen2021mandoline}. 

\emph{Generalized Disagreement Equality (GDE). {} {}} 
Error is estimated as the expected disagreement of two models (trained on the same training set but with different randomization) on target data~\citep{jiang2021assessing}, i.e., $\GDE_{\calD^\test} = \Expt{x \sim \calD^\test}{\indict{f(x) \ne f^\prime(x)}}$ where $f$ and $f^\prime$ are the two models. Note that GDE requires two models trained independently, doubling the computational overhead while training.

\vspace{-10pt}
\subsection{Results} \label{sec:exp_results}
\vspace{-5pt}

\begin{figure}[t]
\vspace{-10pt}
    \centering
    \subfigure{\includegraphics[width=0.3\linewidth]{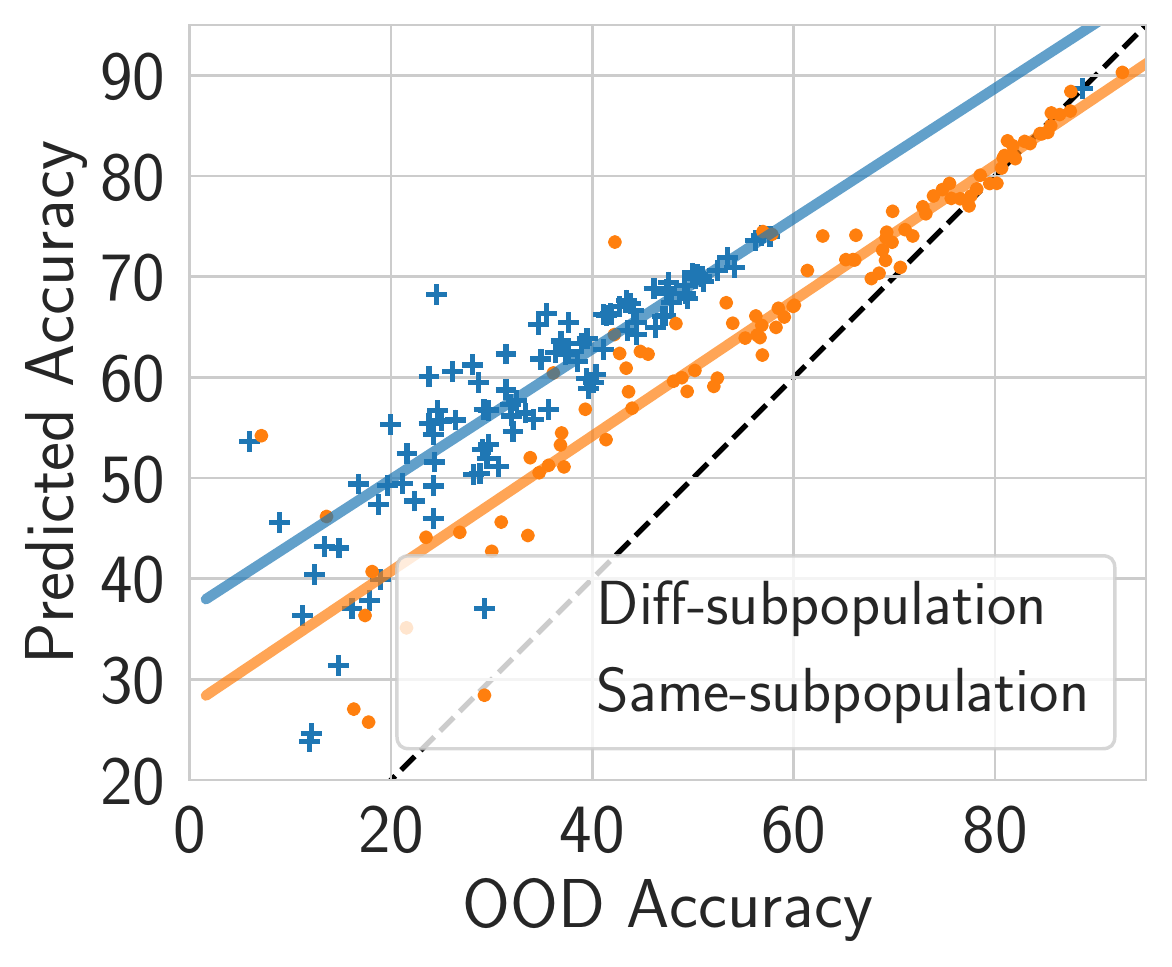}} \hfil
    \subfigure{\includegraphics[width=0.3\linewidth]{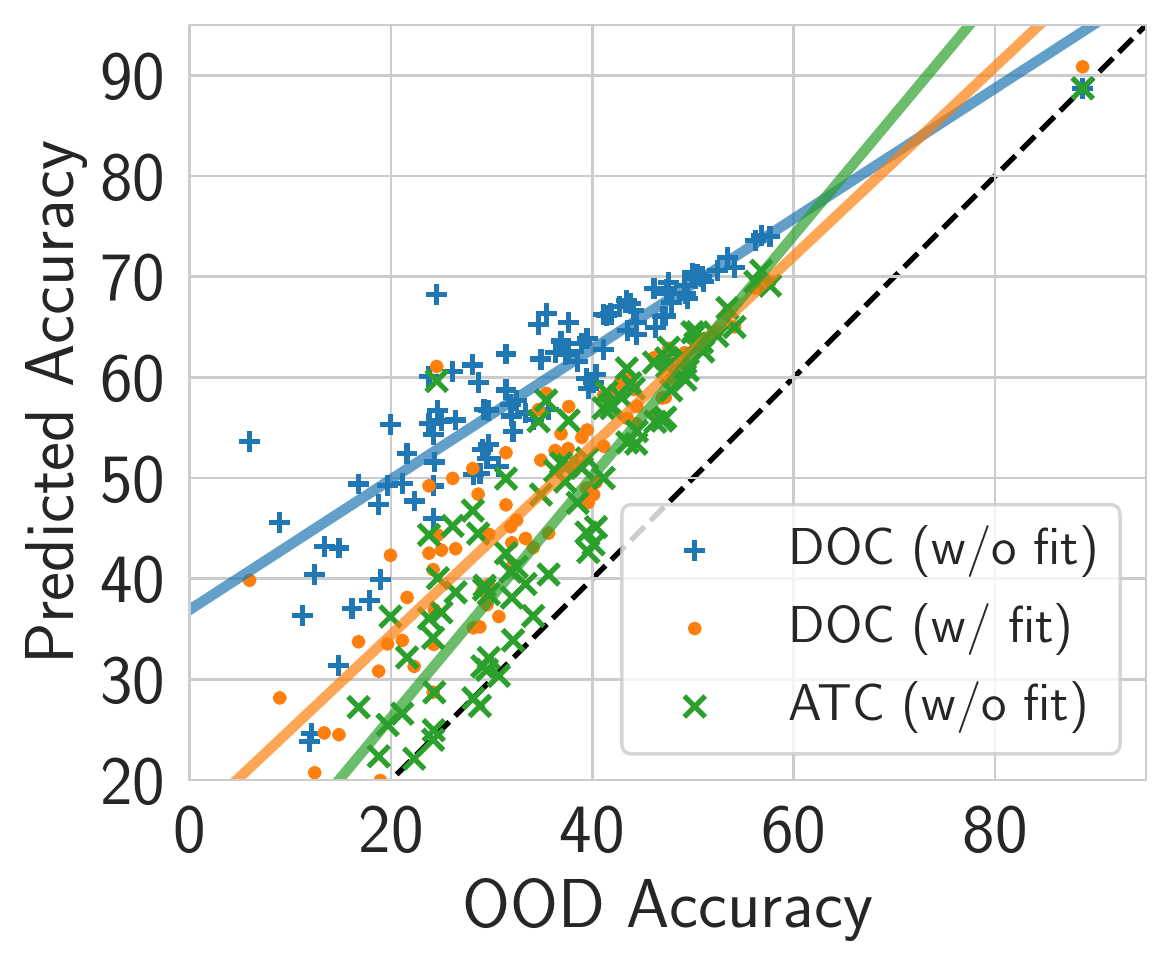}} \hfil   
    \subfigure{\includegraphics[width=0.35\linewidth]{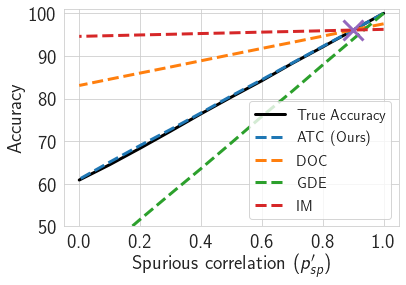}} 
    \vspace{-5pt}
    \caption{\textbf{Left:} Predicted accuracy with DOC on Living17 \breeds~dataset. We observe a substantial gap in the linear fit of same and different subpopulations highlighting poor correlation. \textbf{Middle:} After fitting a robust linear model for DOC on same subpopulation, we show predicted accuracy on different subpopulations with fine-tuned DOC (i.e., DOC (w/ fit)) and compare with ATC without any regression model, i.e., ATC (w/o fit).  While observe substantial improvements in MAE from $24.41$ with DOC (w/o fit) to $13.26$ with DOC (w/ fit), ATC (w/o fit) continues to outperform even DOC (w/ fit) with MAE $10.22$. We show parallel results with other \breeds~datasets in \appref{app:breeeds_ablation}. \textbf{Right :} Empirical validation of our toy model. We show that ATC perfectly estimates target performance as we vary the degree of spurious correlation in target. `$\times$' represents accuracy on source. }
    \vspace{-10pt}
    \label{fig:ablation}
\end{figure}  
  
In \tabref{table:error_estimation}, we report   
MAE results aggregated by 
the nature of the shift in our testbed. 
In \figref{fig:scatter_plot} 
and \figref{fig:intro}(right), we show scatter plots 
for predicted accuracy versus OOD accuracy on 
several datasets. We include 
scatter plots for all datasets and parallel 
results with other architectures in \appref{app:results}. %
In \appref{app:cifar_result}, we also perform ablations on CIFAR using a pre-trained model 
and observe that pre-training doesn't change the efficacy of ATC. 

We predict accuracy on the target
data before and after 
calibration with TS. First, we observe that 
both ATC-NE and ATC-MC (even without TS) obtain 
significantly lower MAE when compared with
other methods (even with TS). 
Note that with TS we observe substantial 
improvements in MAE for all methods.
Overall, ATC-NE (with TS) typically achieves the smallest MAE improving by more than $2\times$ on CIFAR and by $3$--$4\times$ on ImageNet over GDE (the next best alternative to ATC). Alongside, we also observe that a \update{linear fit with robust regression~\citep{siegel1982robust}} on the scatter plot recovers a line close to $x =y$ for ATC-NE with TS while the line is far away from $x=y$ for other methods (\figref{fig:scatter_plot} and \figref{fig:intro}(right)). 
Remarkably, MAE is in the range of $0.4$--$5.8$ with ATC for CIFAR, ImageNet, MNIST, and Wilds. However, MAE is much higher on \textsc{Breeds} benchmark with novel subpopulations. While we observe a small MAE (i.e., comparable to our observations on other datasets) on \textsc{Breeds} with natural and synthetic shifts from the same sub-population, MAE on shifts with novel population is significantly higher with all methods. Note that even on novel populations, ATC continues to dominate all other methods across all datasets in \textsc{Breeds}.

Additionally, for different subpopulations in \breeds~setup, we observe 
a poor linear correlation of the estimated performance with the actual performance as shown in \figref{fig:ablation} (left)(we notice a similar gap in the linear fit for all other methods). Hence in such a setting, we would expect methods that fine-tune a regression model on labeled target examples from shifts with one subpopulation will perform poorly on shifts with different subpopulations. Corroborating this intuition, next, we show that even after fitting a regression model for DOC on natural and synthetic shifts with source subpopulations, ATC without regression model continues to outperform DOC with regression model on shifts with novel subpopulation. 

\textbf{Fitting a regression model on \breeds~with DOC. } 
Using label target data from natural and synthetic shifts for the same subpopulation (same as source), we fit a robust linear regression model~\citep{siegel1982robust} to fine-tune DOC as in \citet{guillory2021predicting}. We then evaluate the fine-tuned DOC (i.e., DOC with linear model) on natural and synthetic shifts from novel subpopulations on \breeds~benchmark. Although we observe significant improvements in the performance of fine-tuned DOC when compared with DOC (without any fine-tuning), ATC without any regression model continues to perform better (or similar) to that of fine-tuned DOC on novel subpopulations (\figref{fig:ablation} (middle)). Refer to \appref{app:breeeds_ablation} for details and \tabref{table:breeds_regression} for MAE on \breeds~with regression model.  %

\vspace{-8pt}
\section{Investigating ATC on Toy Model} \label{sec:toy}
\vspace{-7pt}
In this section, we propose 
and analyze a simple theoretical 
model that distills empirical phenomena
from the previous section  
and highlights efficacy of ATC. 
Here, our aim is not to obtain a general 
model that captures complicated real 
distributions on high dimensional input space 
as the images in ImageNet. Instead to further 
our understanding, we focus on an
\emph{easy-to-learn} binary classification task from 
\citet{nagarajan2020understanding}
with linear classifiers, 
that is rich enough to  
exhibit some of the same phenomena
as with deep networks on real data distributions. 

Consider a easy-to-learn binary classification problem with 
two features $x = [x_{\inv}, x_{\spr}] \in \Real^2$ where 
$x_\inv$ is fully predictive 
invariant feature with a margin $\gamma >0$
and $x_\spr \in \{-1, 1\}$ is a spurious feature (i.e., 
a feature that is correlated but
not predictive of the true label). 
Conditional on $y$, the distribution
over $x_\inv$ is 
given as follows: $x_\inv | (y=1)  \sim U[\gamma, c]$ 
and  $x_\inv | (y = 0) \sim U[-c, -\gamma]$, 
where $c$ is a fixed constant greater than $\gamma$. 
For simplicity, we assume that label distribution 
on source is uniform on $\{-1, 1\}$.   
$x_\spr$ is distributed such that 
$P_s[ x_\spr \cdot (2y-1) > 0] = p_\spr $, where 
$p_\spr \in (0.5, 1.0)$ controls 
the degree of spurious correlation. 
To model distribution shift, 
we simulate target data with different
degree of spurious correlation, i.e., in target distribution 
$P_t[ x_\spr \cdot (2y-1) > 0] = p_\spr^\prime \in [0,1]$.  
Note that here we do not consider shifts in the label 
distribution but our result extends to arbitrary shifts 
in the label distribution as well. 

In this setup, we examine linear sigmoid classifiers of the form  
$f(x) = \left[ \frac{1}{1 + e^{w^T x}}, \frac{ e^{w^T x}}{1 + e^{w^T x}}\right]$
where $w = [w_{\inv}, w_\spr] \in \Real^2$. 
While there exists a linear classifier with 
$w = [1,0]$ that correctly classifies all the points
with a margin $\gamma$,  \citet{nagarajan2020understanding}
demonstrated that a linear classifier will 
typically have a dependency on the spurious feature, 
i.e., $w_\spr \ne 0$. They show that due to geometric skews, 
despite having positive dependencies on the invariant feature, 
a max-margin classifier trained on finite samples
relies on the spurious feature. 
Refer to \appref{app:toy_model} 
for more details on these skews. 
In our work, we show that given a linear classifier 
that relies on the spurious feature 
and achieves a non-trivial performance on 
the source (i.e., $w_\inv > 0$),  
ATC with maximum confidence score function  
\emph{consistently}
estimates the accuracy on the target distribution. 

\begin{theorem}[Informal] \label{thm:toy_theory}
Given any classifier with $w_\inv > 0$ in the above setting, 
the threshold obtained in \eqref{eq:ATC_thres} together with 
ATC as in \eqref{eq:ATC_pred}
with maximum confidence score function
obtains a consistent estimate of the target accuracy. 
\end{theorem}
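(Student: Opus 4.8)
The plan is to push the problem through the classifier into one dimension, write down the ATC threshold at the population level in closed form, and observe that it is insensitive to the target shift parameter $p_\spr^\prime$; finite-sample consistency then follows by continuity.

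\emph{Step 1: reduce the score to $|w^Tx|$ and change coordinates.} For the linear--sigmoid $f$, the maximum-confidence score equals $\max_j f_j(x) = 1/(1+e^{-|w^Tx|})$, a strictly increasing function of $u := |w^Tx|$; hence choosing a threshold on the score in \eqref{eq:ATC_thres} is the same as choosing a threshold $\tau$ on $u$, which I will do throughout. Setting $z := (2y-1)x_\inv$ and $a := (2y-1)x_\spr$, the toy model gives $z \sim U[\gamma,c]$, $a \in \{-1,+1\}$, $z \perp a$, with $\Pr[a=1]$ equal to $p_\spr$ on the source and $p_\spr^\prime$ on the target; moreover the joint law of $(z,a)$ conditioned on $y$ is the same for $y=0$ and $y=1$, which is why the label marginal will drop out (and why the extension to label shift is free). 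In these coordinates $(2y-1)\,w^Tx = w_\inv z + w_\spr a$, so $u = |w_\inv z + w_\spr a|$ and $f$ is correct on $(x,y)$ iff $w_\inv z + w_\spr a > 0$. Replacing $x_\spr$ by $-x_\spr$ if needed --- this sends $p_\spr \mapsto 1-p_\spr$, $p_\spr^\prime \mapsto 1-p_\spr^\prime$ and leaves the statement invariant --- I assume without loss of generality that $w_\spr \ge 0$.

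\emph{Step 2: the error and the confidence c.d.f.\ both factor through the unshifted part.} Since $w_\inv z > 0$, the classifier errs precisely when $a=-1$ and $w_\inv z < w_\spr$, so using $z \perp a$ and that the law of $z$ does not change,
\[
  \mathrm{err}(\calD) = (1-p)\,q, \qquad q := \min\left(\frac{(w_\spr - w_\inv\gamma)_+}{w_\inv(c-\gamma)},\,1\right),
\]
with $p = p_\spr$ on $\calD^\train$ and $p = p_\spr^\prime$ on $\calD^\test$ and the \emph{same} $q$ in both cases. Similarly $\Pr_{\calD}[u < \tau] = p\,G_+(\tau) + (1-p)\,G_-(\tau)$, where $G_\pm(\tau) := \Pr_z[|w_\inv z \pm w_\spr| < \tau]$ are continuous, nondecreasing, shift-independent, and $G_+(\tau) = 0$ for every $\tau \le w_\inv\gamma + w_\spr$ (because $w_\inv z + w_\spr \ge w_\inv\gamma + w_\spr$ when $w_\inv > 0$).

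\emph{Step 3: solve \eqref{eq:ATC_thres} and transfer to the target.} On the source, \eqref{eq:ATC_thres} reads $p_\spr G_+(\tau^\star) + (1-p_\spr)G_-(\tau^\star) = (1-p_\spr)q$. I would show that the solution can be chosen with $\tau^\star \le w_\inv\gamma + w_\spr$, where $G_+ \equiv 0$, so the equation reduces to $G_-(\tau^\star) = q$, an equation not involving $p_\spr$. A short case split establishes existence: if $w_\spr \le w_\inv\gamma$ then $q=0$ and any $\tau^\star < w_\inv\gamma - w_\spr$ works; if $w_\spr \ge w_\inv c$ then $q=1$ and $G_-(w_\spr - w_\inv\gamma) = 1$; and for $w_\inv\gamma < w_\spr < w_\inv c$ one checks $G_-(w_\inv\gamma + w_\spr) = \min(2w_\spr/(w_\inv(c-\gamma)),1) \ge q$ and applies the intermediate value theorem (in the generic sub-regime one gets $\tau^\star = (w_\spr - w_\inv\gamma)/2$ explicitly). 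In all cases $\tau^\star \le w_\inv\gamma + w_\spr$, so $G_+(\tau^\star) = 0$, and therefore the ATC prediction on the target from \eqref{eq:ATC_pred} is
\[
  \mathrm{ATC}_{\calD^\test} = \Pr_{\calD^\test}[u < \tau^\star] = p_\spr^\prime\,G_+(\tau^\star) + (1-p_\spr^\prime)\,G_-(\tau^\star) = (1-p_\spr^\prime)\,q = \mathrm{err}(\calD^\test),
\]
the true target error. If \eqref{eq:ATC_thres} has an interval of solutions (as when $w_\spr \ge w_\inv c$), every such $\tau$ still satisfies $G_+(\tau)=0$, so the conclusion is unchanged. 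This population identity upgrades to the consistency claimed in the theorem because $z$ has a density: the empirical c.d.f.\ of $u$ and the empirical source error converge uniformly, so the empirically chosen threshold converges to $\tau^\star$ and the empirical target estimate to $\mathrm{err}(\calD^\test)$. Since the label marginal cancelled in Step 2, the same argument applies verbatim if the label distribution shifts as well.

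\emph{Expected main obstacle.} The crux is the bookkeeping in Step 3: one must certify that the threshold returned by \eqref{eq:ATC_thres} lands in the regime where the $a=+1$ branch of the confidence distribution contributes nothing, because only then does the $p_\spr$-dependence cancel so that the \emph{same} $\tau^\star$ is correct on the target. This is precisely what forces the case analysis on the position of $w_\spr$ relative to $w_\inv\gamma$ and $w_\inv c$, together with care at the degenerate configurations $w_\spr = 0$, $w_\spr = w_\inv\gamma$, and $w_\spr \ge w_\inv c$.
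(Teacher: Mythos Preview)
Your argument is correct and follows essentially the same route as the paper's proof: both partition the data by the sign of $x_\spr(2y-1)$ (your $a=\pm 1$, the paper's $X_C$ versus $X_M$), and the crux in each is showing that the ATC threshold lands below the minimum score attained on the $a=+1$ group (your $\tau^\star \le w_\inv\gamma + w_\spr$, the paper's $X_\thres \subseteq X_M$), after which the estimate depends only on the shift-invariant conditional law within $X_M$. The differences are presentational---you work at the population level with an explicit case split and closed-form $\tau^\star$ before sketching finite-sample consistency, whereas the paper argues the threshold containment directly on the empirical sample and then invokes Hoeffding plus DKW to obtain an explicit $O(1/\sqrt{n\,c_\spr})$ rate---but the key mechanism is identical.
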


Consider a classifier that depends positively 
on the spurious feature (i.e., $w_{\spr} > 0$). 
Then as the spurious correlation 
decreases in the target data, 
the classifier accuracy on the target will drop and 
vice-versa if the spurious correlation increases on the target data. 
\thmref{thm:toy_theory} shows that the threshold identified with 
ATC as in \eqref{eq:ATC_thres} remains invariant as the distribution shifts
and hence ATC as in \eqref{eq:ATC_pred}
will correctly estimate the 
accuracy with shifting distributions. 
Next, we illustrate \thmref{thm:toy_theory} 
by simulating the setup empirically. 
First we pick a arbitrary classifier
(which can also be obtained by training 
on source samples), 
tune the threshold on hold-out source examples 
and predict accuracy with different methods as
we shift the distribution by varying the degree 
of spurious correlation. 

\textbf{Empirical validation and comparison with other methods. {} {}}
\figref{fig:ablation}(right) shows that as the degree of spurious correlation varies,
our method accurately estimates the target performance where  
all other methods fail to accurately estimate the target performance. 
Understandably, due to poor calibration of the sigmoid linear 
classifier AC, DOC and GDE fail. While in principle IM 
can perfectly estimate the accuracy on target in this case, 
we observe that it is highly sensitive to the number bins and choice of 
histogram binning (i.e., uniform mass or equal width binning).
We elaborate more on this in \appref{app:toy_model}.  %

\textbf{Biased estimation with ATC. {} {}} 
Now we discuss changes in the above 
setup where ATC yields 
inconsistent estimates. 
We assumed that both in 
source and target $x_{\inv}|y=1$ 
is uniform between $[\gamma, c]$ 
and $x|y=-1$ is uniform 
between $[-c, -\gamma]$. 
Shifting the support of target class 
conditional $p_t(x_{\inv}|y)$ may 
introduce a bias in ATC estimates, e.g.,  
shrinking the support to 
$c_1$($<c$) (while maintaining 
uniform distribution) 
in the target will lead to an over-estimation 
of the target performance with ATC.  
In \appref{app:general_result}, %
we elaborate on this 
failure and present a general
(but less interpretable)  
classifier dependent distribution shift 
condition where ATC is guaranteed to yield consistent 
estimates.

\vspace{-8pt}
\section{Conclusion and future work}
\vspace{-7pt}
In this work, we proposed ATC, a simple method for 
estimating target domain accuracy 
based on unlabeled target 
(and labeled source data).
ATC achieves remarkably low estimation error 
on several synthetic and natural shift benchmarks 
in our experiments. 
Notably, our work draws inspiration from recent 
state-of-the-art methods that use softmax confidences 
below a certain threshold for OOD detection~\citep{hendrycks2016baseline,hendrycks2019scaling}   
and takes a step forward in answering questions
raised in \citet{deng2021labels} 
about the practicality of threshold based methods.   

Our distribution shift toy model justifies ATC 
on an easy-to-learn binary classification task. 
In our experiments, we also observe
that calibration significantly improves estimation with ATC. 
Since in binary classification, post hoc calibration with TS 
does not change the effective threshold, 
in future work, we hope to extend our theoretical model 
to multi-class classification to understand the efficacy of calibration. 
Our theory establishes that a classifier's accuracy 
is not, in general identified, from labeled source 
and unlabeled target data alone, 
absent considerable additional constraints
on the target conditional $p_t(y|x)$.
In light of this finding, 
we also hope to extend our understanding 
beyond the simple theoretical toy model
to characterize broader sets of conditions 
under which ATC might be guaranteed %
to obtain consistent estimates. 
Finally, we should note that
while ATC outperforms previous approaches, 
it still suffers from large estimation error
on datasets with novel populations, e.g., \breeds. 
We hope that our findings can lay the groundwork 
for future work for improving accuracy estimation on such datasets.


\paragraph{Reproducibility Statement}
Our code to reproduce all the results is available at \url{https://github.com/saurabhgarg1996/ATC_code}.
We have been careful to ensure 
that our results are reproducible. 
We have stored all models 
and logged all hyperparameters and seeds 
to facilitate reproducibility. 
Note that throughout our work,
we do not perform any hyperparameter tuning, 
instead, using benchmarked hyperparameters and training procedures 
to make our results easy to reproduce. 
While, we have not released code yet, 
the appendix provides all the necessary details 
to replicate our experiments and results.

\section*{Acknowledgement}
Authors would like to thank Ariel Kleiner and Sammy Jerome as the problem formulation and motivation of this paper was highly influenced by initial discussions with them.

\bibliography{iclr2022_conference}

\begin{thebibliography}{72}
\providecommand{\natexlab}[1]{#1}
\providecommand{\url}[1]{\texttt{#1}}
\expandafter\ifx\csname urlstyle\endcsname\relax
  \providecommand{\doi}[1]{doi: #1}\else
  \providecommand{\doi}{doi: \begingroup \urlstyle{rm}\Url}\fi

\bibitem[Alexandari et~al.(2019)Alexandari, Kundaje, and
  Shrikumar]{alexandari2019adapting}
Amr Alexandari, Anshul Kundaje, and Avanti Shrikumar.
\newblock Adapting to label shift with bias-corrected calibration.
\newblock In \emph{arXiv preprint arXiv:1901.06852}, 2019.

\bibitem[Azizzadenesheli et~al.(2019)Azizzadenesheli, Liu, Yang, and
  Anandkumar]{azizzadenesheli2019regularized}
Kamyar Azizzadenesheli, Anqi Liu, Fanny Yang, and Animashree Anandkumar.
\newblock Regularized learning for domain adaptation under label shifts.
\newblock In \emph{International Conference on Learning Representations
  (ICLR)}, 2019.

\bibitem[Bartlett et~al.(2017)Bartlett, Foster, and
  Telgarsky]{bartlett2017spectrally}
Peter~L Bartlett, Dylan~J Foster, and Matus~J Telgarsky.
\newblock Spectrally-normalized margin bounds for neural networks.
\newblock In \emph{Advances in neural information processing systems}, pp.\
  6240--6249, 2017.

\bibitem[Ben-David et~al.(2010)Ben-David, Lu, Luu, and
  P{\'a}l]{ben2010impossibility}
Shai Ben-David, Tyler Lu, Teresa Luu, and D{\'a}vid P{\'a}l.
\newblock {Impossibility Theorems for Domain Adaptation}.
\newblock In \emph{International Conference on Artificial Intelligence and
  Statistics (AISTATS)}, 2010.

\bibitem[Borkan et~al.(2019)Borkan, Dixon, Sorensen, Thain, and
  Vasserman]{borkan2019nuanced}
Daniel Borkan, Lucas Dixon, Jeffrey Sorensen, Nithum Thain, and Lucy Vasserman.
\newblock Nuanced metrics for measuring unintended bias with real data for text
  classification.
\newblock In \emph{Companion Proceedings of The 2019 World Wide Web
  Conference}, 2019.

\bibitem[Chen et~al.(2021{\natexlab{a}})Chen, Liu, Avci, Wu, Liang, and
  Jha]{chen2021detecting}
Jiefeng Chen, Frederick Liu, Besim Avci, Xi~Wu, Yingyu Liang, and Somesh Jha.
\newblock Detecting errors and estimating accuracy on unlabeled data with
  self-training ensembles.
\newblock \emph{Advances in Neural Information Processing Systems},
  34:\penalty0 14980--14992, 2021{\natexlab{a}}.

\bibitem[Chen et~al.(2021{\natexlab{b}})Chen, Goel, Sohoni, Poms, Fatahalian,
  and R{\'e}]{chen2021mandoline}
Mayee Chen, Karan Goel, Nimit~S Sohoni, Fait Poms, Kayvon Fatahalian, and
  Christopher R{\'e}.
\newblock Mandoline: Model evaluation under distribution shift.
\newblock In \emph{International Conference on Machine Learning}, pp.\
  1617--1629. PMLR, 2021{\natexlab{b}}.

\bibitem[Christie et~al.(2018)Christie, Fendley, Wilson, and
  Mukherjee]{christie2018functional}
Gordon Christie, Neil Fendley, James Wilson, and Ryan Mukherjee.
\newblock Functional map of the world.
\newblock In \emph{Proceedings of the IEEE Conference on Computer Vision and
  Pattern Recognition}, 2018.

\bibitem[Chuang et~al.(2020)Chuang, Torralba, and
  Jegelka]{chuang2020estimating}
Ching-Yao Chuang, Antonio Torralba, and Stefanie Jegelka.
\newblock Estimating generalization under distribution shifts via
  domain-invariant representations.
\newblock \emph{arXiv preprint arXiv:2007.03511}, 2020.

\bibitem[Deng \& Zheng(2021)Deng and Zheng]{deng2021labels}
Weijian Deng and Liang Zheng.
\newblock Are labels always necessary for classifier accuracy evaluation?
\newblock In \emph{Proceedings of the IEEE/CVF Conference on Computer Vision
  and Pattern Recognition}, pp.\  15069--15078, 2021.

\bibitem[Deng et~al.(2021)Deng, Gould, and Zheng]{deng2021does}
Weijian Deng, Stephen Gould, and Liang Zheng.
\newblock What does rotation prediction tell us about classifier accuracy under
  varying testing environments?
\newblock \emph{arXiv preprint arXiv:2106.05961}, 2021.

\bibitem[Dziugaite \& Roy(2017)Dziugaite and Roy]{dziugaite2017computing}
Gintare~Karolina Dziugaite and Daniel~M Roy.
\newblock Computing nonvacuous generalization bounds for deep (stochastic)
  neural networks with many more parameters than training data.
\newblock \emph{arXiv preprint arXiv:1703.11008}, 2017.

\bibitem[Garg et~al.(2020)Garg, Wu, Balakrishnan, and Lipton]{garg2020unified}
Saurabh Garg, Yifan Wu, Sivaraman Balakrishnan, and Zachary~C Lipton.
\newblock A unified view of label shift estimation.
\newblock \emph{arXiv preprint arXiv:2003.07554}, 2020.

\bibitem[Garg et~al.(2021)Garg, Balakrishnan, Kolter, and Lipton]{garg2021ratt}
Saurabh Garg, Sivaraman Balakrishnan, J~Zico Kolter, and Zachary~C Lipton.
\newblock Ratt: Leveraging unlabeled data to guarantee generalization.
\newblock \emph{arXiv preprint arXiv:2105.00303}, 2021.

\bibitem[Geifman \& El-Yaniv(2017)Geifman and El-Yaniv]{geifman2017selective}
Yonatan Geifman and Ran El-Yaniv.
\newblock Selective classification for deep neural networks.
\newblock \emph{arXiv preprint arXiv:1705.08500}, 2017.

\bibitem[Guillory et~al.(2021)Guillory, Shankar, Ebrahimi, Darrell, and
  Schmidt]{guillory2021predicting}
Devin Guillory, Vaishaal Shankar, Sayna Ebrahimi, Trevor Darrell, and Ludwig
  Schmidt.
\newblock Predicting with confidence on unseen distributions.
\newblock \emph{arXiv preprint arXiv:2107.03315}, 2021.

\bibitem[Guo et~al.(2017)Guo, Pleiss, Sun, and Weinberger]{guo2017calibration}
Chuan Guo, Geoff Pleiss, Yu~Sun, and Kilian~Q Weinberger.
\newblock On calibration of modern neural networks.
\newblock In \emph{International Conference on Machine Learning (ICML)}, 2017.

\bibitem[He et~al.(2016)He, Zhang, Ren, and Sun]{he2016deep}
Kaiming He, Xiangyu Zhang, Shaoqing Ren, and Jian Sun.
\newblock {Deep Residual Learning for Image Recognition}.
\newblock In \emph{Computer Vision and Pattern Recognition (CVPR)}, 2016.

\bibitem[Heckman(1977)]{heckman1977sample}
James~J Heckman.
\newblock {Sample Selection Bias as a Specification Error (With an Application
  to the Estimation of Labor Supply Functions)}, 1977.

\bibitem[Hendrycks \& Dietterich(2019)Hendrycks and
  Dietterich]{hendrycks2019benchmarking}
Dan Hendrycks and Thomas Dietterich.
\newblock Benchmarking neural network robustness to common corruptions and
  perturbations.
\newblock \emph{arXiv preprint arXiv:1903.12261}, 2019.

\bibitem[Hendrycks \& Gimpel(2016)Hendrycks and Gimpel]{hendrycks2016baseline}
Dan Hendrycks and Kevin Gimpel.
\newblock A baseline for detecting misclassified and out-of-distribution
  examples in neural networks.
\newblock \emph{arXiv preprint arXiv:1610.02136}, 2016.

\bibitem[Hendrycks et~al.(2019)Hendrycks, Basart, Mazeika, Mostajabi,
  Steinhardt, and Song]{hendrycks2019scaling}
Dan Hendrycks, Steven Basart, Mantas Mazeika, Mohammadreza Mostajabi, Jacob
  Steinhardt, and Dawn Song.
\newblock Scaling out-of-distribution detection for real-world settings.
\newblock \emph{arXiv preprint arXiv:1911.11132}, 2019.

\bibitem[Hendrycks et~al.(2021)Hendrycks, Basart, Mu, Kadavath, Wang, Dorundo,
  Desai, Zhu, Parajuli, Guo, Song, Steinhardt, and Gilmer]{hendrycks2021many}
Dan Hendrycks, Steven Basart, Norman Mu, Saurav Kadavath, Frank Wang, Evan
  Dorundo, Rahul Desai, Tyler Zhu, Samyak Parajuli, Mike Guo, Dawn Song, Jacob
  Steinhardt, and Justin Gilmer.
\newblock The many faces of robustness: A critical analysis of
  out-of-distribution generalization.
\newblock \emph{ICCV}, 2021.

\bibitem[Huang et~al.(2017)Huang, Liu, Van Der~Maaten, and
  Weinberger]{huang2017densely}
Gao Huang, Zhuang Liu, Laurens Van Der~Maaten, and Kilian~Q Weinberger.
\newblock Densely connected convolutional networks.
\newblock In \emph{Proceedings of the IEEE conference on computer vision and
  pattern recognition}, pp.\  4700--4708, 2017.

\bibitem[Hull(1994)]{hull1994database}
Jonathan~J. Hull.
\newblock A database for handwritten text recognition research.
\newblock \emph{IEEE Transactions on pattern analysis and machine
  intelligence}, 16\penalty0 (5):\penalty0 550--554, 1994.

\bibitem[Ji et~al.(2021)Ji, Pascanu, Hjelm, Vedaldi, Lakshminarayanan, and
  Bengio]{ji2021predicting}
Xu~Ji, Razvan Pascanu, Devon Hjelm, Andrea Vedaldi, Balaji Lakshminarayanan,
  and Yoshua Bengio.
\newblock Predicting unreliable predictions by shattering a neural network.
\newblock \emph{arXiv preprint arXiv:2106.08365}, 2021.

\bibitem[Jiang et~al.(2018)Jiang, Kim, Guan, and Gupta]{jiang2018trust}
Heinrich Jiang, Been Kim, Melody~Y Guan, and Maya~R Gupta.
\newblock To trust or not to trust a classifier.
\newblock In \emph{NeurIPS}, pp.\  5546--5557, 2018.

\bibitem[Jiang et~al.(2021)Jiang, Nagarajan, Baek, and
  Kolter]{jiang2021assessing}
Yiding Jiang, Vaishnavh Nagarajan, Christina Baek, and J~Zico Kolter.
\newblock Assessing generalization of sgd via disagreement.
\newblock \emph{arXiv preprint arXiv:2106.13799}, 2021.

\bibitem[Kingma \& Ba(2014)Kingma and Ba]{kingma2014adam}
Diederik~P Kingma and Jimmy Ba.
\newblock {Adam: A Method for Stochastic Optimization}.
\newblock \emph{arXiv Preprint arXiv:1412.6980}, 2014.

\bibitem[Koh et~al.(2021)Koh, Sagawa, Marklund, Xie, Zhang, Balsubramani, Hu,
  Yasunaga, Phillips, Gao, Lee, David, Stavness, Guo, Earnshaw, Haque, Beery,
  Leskovec, Kundaje, Pierson, Levine, Finn, and Liang]{wilds2021}
Pang~Wei Koh, Shiori Sagawa, Henrik Marklund, Sang~Michael Xie, Marvin Zhang,
  Akshay Balsubramani, Weihua Hu, Michihiro Yasunaga, Richard~Lanas Phillips,
  Irena Gao, Tony Lee, Etienne David, Ian Stavness, Wei Guo, Berton~A.
  Earnshaw, Imran~S. Haque, Sara Beery, Jure Leskovec, Anshul Kundaje, Emma
  Pierson, Sergey Levine, Chelsea Finn, and Percy Liang.
\newblock {WILDS}: A benchmark of in-the-wild distribution shifts.
\newblock In \emph{International Conference on Machine Learning (ICML)}, 2021.

\bibitem[Krizhevsky \& Hinton(2009)Krizhevsky and
  Hinton]{krizhevsky2009learning}
Alex Krizhevsky and Geoffrey Hinton.
\newblock {Learning Multiple Layers of Features from Tiny Images}.
\newblock Technical report, Citeseer, 2009.

\bibitem[Lakshminarayanan et~al.(2016)Lakshminarayanan, Pritzel, and
  Blundell]{lakshminarayanan2016simple}
Balaji Lakshminarayanan, Alexander Pritzel, and Charles Blundell.
\newblock Simple and scalable predictive uncertainty estimation using deep
  ensembles.
\newblock \emph{arXiv preprint arXiv:1612.01474}, 2016.

\bibitem[LeCun et~al.(1998)LeCun, Bottou, Bengio, and Haffner]{lecun1998mnist}
Yann LeCun, L{\'e}on Bottou, Yoshua Bengio, and Patrick Haffner.
\newblock {Gradient-Based Learning Applied to Document Recognition}.
\newblock \emph{Proceedings of the IEEE}, 86, 1998.

\bibitem[Liang et~al.(2017)Liang, Li, and Srikant]{liang2017enhancing}
Shiyu Liang, Yixuan Li, and Rayadurgam Srikant.
\newblock Enhancing the reliability of out-of-distribution image detection in
  neural networks.
\newblock \emph{arXiv preprint arXiv:1706.02690}, 2017.

\bibitem[Lipton et~al.(2018)Lipton, Wang, and Smola]{lipton2018detecting}
Zachary~C Lipton, Yu-Xiang Wang, and Alex Smola.
\newblock {Detecting and Correcting for Label Shift with Black Box Predictors}.
\newblock In \emph{International Conference on Machine Learning (ICML)}, 2018.

\bibitem[Long \& Sedghi(2019)Long and Sedghi]{long2019generalization}
Philip~M Long and Hanie Sedghi.
\newblock Generalization bounds for deep convolutional neural networks.
\newblock \emph{arXiv preprint arXiv:1905.12600}, 2019.

\bibitem[Loshchilov \& Hutter(2017)Loshchilov and
  Hutter]{loshchilov2017decoupled}
Ilya Loshchilov and Frank Hutter.
\newblock Decoupled weight decay regularization.
\newblock \emph{arXiv preprint arXiv:1711.05101}, 2017.

\bibitem[Nagarajan \& Kolter(2019{\natexlab{a}})Nagarajan and
  Kolter]{nagarajan2019deterministic}
Vaishnavh Nagarajan and J~Zico Kolter.
\newblock Deterministic pac-bayesian generalization bounds for deep networks
  via generalizing noise-resilience.
\newblock \emph{arXiv preprint arXiv:1905.13344}, 2019{\natexlab{a}}.

\bibitem[Nagarajan \& Kolter(2019{\natexlab{b}})Nagarajan and
  Kolter]{nagarajan2019uniform}
Vaishnavh Nagarajan and J~Zico Kolter.
\newblock Uniform convergence may be unable to explain generalization in deep
  learning.
\newblock In \emph{Advances in Neural Information Processing Systems}, pp.\
  11615--11626, 2019{\natexlab{b}}.

\bibitem[Nagarajan et~al.(2020)Nagarajan, Andreassen, and
  Neyshabur]{nagarajan2020understanding}
Vaishnavh Nagarajan, Anders Andreassen, and Behnam Neyshabur.
\newblock Understanding the failure modes of out-of-distribution
  generalization.
\newblock \emph{arXiv preprint arXiv:2010.15775}, 2020.

\bibitem[Netzer et~al.(2011)Netzer, Wang, Coates, Bissacco, Wu, and
  Ng]{netzer2011reading}
Yuval Netzer, Tao Wang, Adam Coates, Alessandro Bissacco, Bo~Wu, and Andrew~Y
  Ng.
\newblock Reading digits in natural images with unsupervised feature learning.
\newblock In \emph{Advances in Neural Information Processing Systems (NIPS)},
  2011.

\bibitem[Neyshabur(2017)]{neyshabur2017implicit}
Behnam Neyshabur.
\newblock Implicit regularization in deep learning.
\newblock \emph{arXiv preprint arXiv:1709.01953}, 2017.

\bibitem[Neyshabur et~al.(2015)Neyshabur, Tomioka, and
  Srebro]{neyshabur2015norm}
Behnam Neyshabur, Ryota Tomioka, and Nathan Srebro.
\newblock Norm-based capacity control in neural networks.
\newblock In \emph{Conference on Learning Theory}, pp.\  1376--1401, 2015.

\bibitem[Neyshabur et~al.(2017)Neyshabur, Bhojanapalli, McAllester, and
  Srebro]{neyshabur2017exploring}
Behnam Neyshabur, Srinadh Bhojanapalli, David McAllester, and Nathan Srebro.
\newblock Exploring generalization in deep learning.
\newblock \emph{arXiv preprint arXiv:1706.08947}, 2017.

\bibitem[Neyshabur et~al.(2018)Neyshabur, Li, Bhojanapalli, LeCun, and
  Srebro]{neyshabur2018role}
Behnam Neyshabur, Zhiyuan Li, Srinadh Bhojanapalli, Yann LeCun, and Nathan
  Srebro.
\newblock The role of over-parametrization in generalization of neural
  networks.
\newblock In \emph{International Conference on Learning Representations}, 2018.

\bibitem[Ni et~al.(2019)Ni, Li, and McAuley]{ni2019justifying}
Jianmo Ni, Jiacheng Li, and Julian McAuley.
\newblock Justifying recommendations using distantly-labeled reviews and
  fine-grained aspects.
\newblock In \emph{Proceedings of the 2019 Conference on Empirical Methods in
  Natural Language Processing and the 9th International Joint Conference on
  Natural Language Processing (EMNLP-IJCNLP)}, 2019.

\bibitem[Ovadia et~al.(2019)Ovadia, Fertig, Ren, Nado, Sculley, Nowozin,
  Dillon, Lakshminarayanan, and Snoek]{ovadia2019can}
Yaniv Ovadia, Emily Fertig, Jie Ren, Zachary Nado, David Sculley, Sebastian
  Nowozin, Joshua~V Dillon, Balaji Lakshminarayanan, and Jasper Snoek.
\newblock Can you trust your model's uncertainty? evaluating predictive
  uncertainty under dataset shift.
\newblock \emph{arXiv preprint arXiv:1906.02530}, 2019.

\bibitem[Paszke et~al.(2019)Paszke, Gross, Massa, Lerer, Bradbury, Chanan,
  Killeen, Lin, Gimelshein, Antiga, Desmaison, Kopf, Yang, DeVito, Raison,
  Tejani, Chilamkurthy, Steiner, Fang, Bai, and Chintala]{NEURIPS2019a9015}
Adam Paszke, Sam Gross, Francisco Massa, Adam Lerer, James Bradbury, Gregory
  Chanan, Trevor Killeen, Zeming Lin, Natalia Gimelshein, Luca Antiga, Alban
  Desmaison, Andreas Kopf, Edward Yang, Zachary DeVito, Martin Raison, Alykhan
  Tejani, Sasank Chilamkurthy, Benoit Steiner, Lu~Fang, Junjie Bai, and Soumith
  Chintala.
\newblock Pytorch: An imperative style, high-performance deep learning library.
\newblock In \emph{Advances in Neural Information Processing Systems 32}, 2019.

\bibitem[Platanios et~al.(2017)Platanios, Poon, Mitchell, and
  Horvitz]{platanios2017estimating}
Emmanouil~A Platanios, Hoifung Poon, Tom~M Mitchell, and Eric Horvitz.
\newblock Estimating accuracy from unlabeled data: A probabilistic logic
  approach.
\newblock \emph{arXiv preprint arXiv:1705.07086}, 2017.

\bibitem[Platanios et~al.(2016)Platanios, Dubey, and
  Mitchell]{platanios2016estimating}
Emmanouil~Antonios Platanios, Avinava Dubey, and Tom Mitchell.
\newblock Estimating accuracy from unlabeled data: A bayesian approach.
\newblock In \emph{International Conference on Machine Learning}, pp.\
  1416--1425. PMLR, 2016.

\bibitem[Rabanser et~al.(2018)Rabanser, G{\"u}nnemann, and
  Lipton]{rabanser2018failing}
Stephan Rabanser, Stephan G{\"u}nnemann, and Zachary~C Lipton.
\newblock Failing loudly: An empirical study of methods for detecting dataset
  shift.
\newblock \emph{arXiv preprint arXiv:1810.11953}, 2018.

\bibitem[Ramdas et~al.(2015)Ramdas, Reddi, P{\'o}czos, Singh, and
  Wasserman]{ramdas2015decreasing}
Aaditya Ramdas, Sashank~Jakkam Reddi, Barnab{\'a}s P{\'o}czos, Aarti Singh, and
  Larry~A Wasserman.
\newblock {On the Decreasing Power of Kernel and Distance Based Nonparametric
  Hypothesis Tests in High Dimensions}.
\newblock In \emph{Association for the Advancement of Artificial Intelligence
  (AAAI)}, 2015.

\bibitem[Recht et~al.(2018)Recht, Roelofs, Schmidt, and
  Shankar]{recht2018cifar}
Benjamin Recht, Rebecca Roelofs, Ludwig Schmidt, and Vaishaal Shankar.
\newblock Do cifar-10 classifiers generalize to cifar-10?
\newblock \emph{arXiv preprint arXiv:1806.00451}, 2018.

\bibitem[Recht et~al.(2019)Recht, Roelofs, Schmidt, and
  Shankar]{recht2019imagenet}
Benjamin Recht, Rebecca Roelofs, Ludwig Schmidt, and Vaishaal Shankar.
\newblock Do imagenet classifiers generalize to imagenet?
\newblock In \emph{International Conference on Machine Learning}, pp.\
  5389--5400. PMLR, 2019.

\bibitem[Rojas-Carulla et~al.(2018)Rojas-Carulla, Sch{\"o}lkopf, Turner, and
  Peters]{rojas2018invariant}
Mateo Rojas-Carulla, Bernhard Sch{\"o}lkopf, Richard Turner, and Jonas Peters.
\newblock Invariant models for causal transfer learning.
\newblock \emph{The Journal of Machine Learning Research}, 19\penalty0
  (1):\penalty0 1309--1342, 2018.

\bibitem[Russakovsky et~al.(2015)Russakovsky, Deng, Su, Krause, Satheesh, Ma,
  Huang, Karpathy, Khosla, Bernstein, et~al.]{russakovsky2015imagenet}
Olga Russakovsky, Jia Deng, Hao Su, Jonathan Krause, Sanjeev Satheesh, Sean Ma,
  Zhiheng Huang, Andrej Karpathy, Aditya Khosla, Michael Bernstein, et~al.
\newblock Imagenet large scale visual recognition challenge.
\newblock \emph{International journal of computer vision}, 115\penalty0
  (3):\penalty0 211--252, 2015.

\bibitem[Saerens et~al.(2002)Saerens, Latinne, and
  Decaestecker]{saerens2002adjusting}
Marco Saerens, Patrice Latinne, and Christine Decaestecker.
\newblock {Adjusting the Outputs of a Classifier to New a Priori Probabilities:
  A Simple Procedure}.
\newblock \emph{Neural Computation}, 2002.

\bibitem[Sanh et~al.(2019)Sanh, Debut, Chaumond, and
  Wolf]{Sanh2019DistilBERTAD}
Victor Sanh, Lysandre Debut, Julien Chaumond, and Thomas Wolf.
\newblock Distilbert, a distilled version of bert: smaller, faster, cheaper and
  lighter.
\newblock \emph{ArXiv}, abs/1910.01108, 2019.

\bibitem[Santurkar et~al.(2020)Santurkar, Tsipras, and
  Madry]{santurkar2020breeds}
Shibani Santurkar, Dimitris Tsipras, and Aleksander Madry.
\newblock Breeds: Benchmarks for subpopulation shift.
\newblock \emph{arXiv preprint arXiv:2008.04859}, 2020.

\bibitem[Shimodaira(2000)]{shimodaira2000improving}
Hidetoshi Shimodaira.
\newblock {Improving Predictive Inference Under Covariate Shift by Weighting
  the Log-Likelihood Function}.
\newblock \emph{Journal of Statistical Planning and Inference}, 2000.

\bibitem[Siegel(1982)]{siegel1982robust}
Andrew~F Siegel.
\newblock Robust regression using repeated medians.
\newblock \emph{Biometrika}, 69\penalty0 (1):\penalty0 242--244, 1982.

\bibitem[Szegedy et~al.(2014)Szegedy, Zaremba, Sutskever, Bruna, Erhan,
  Goodfellow, and Fergus]{szegedy2013intriguing}
Christian Szegedy, Wojciech Zaremba, Ilya Sutskever, Joan Bruna, Dumitru Erhan,
  Ian Goodfellow, and Rob Fergus.
\newblock {Intriguing Properties of Neural Networks}.
\newblock In \emph{International Conference on Learning Representations
  (ICLR)}, 2014.

\bibitem[Tachet~des Combes et~al.(2020)Tachet~des Combes, Zhao, Wang, and
  Gordon]{tachet2020domain}
Remi Tachet~des Combes, Han Zhao, Yu-Xiang Wang, and Geoffrey~J Gordon.
\newblock Domain adaptation with conditional distribution matching and
  generalized label shift.
\newblock \emph{Advances in Neural Information Processing Systems}, 33, 2020.

\bibitem[Taylor et~al.(2019)Taylor, Earnshaw, Mabey, Victors, and
  Yosinski]{taylor2019rxrx1}
J.~Taylor, B.~Earnshaw, B.~Mabey, M.~Victors, and J.~Yosinski.
\newblock Rxrx1: An image set for cellular morphological variation across many
  experimental batches.
\newblock In \emph{International Conference on Learning Representations
  (ICLR)}, 2019.

\bibitem[Torralba et~al.(2008)Torralba, Fergus, and
  Freeman]{torralba2008tinyimages}
Antonio Torralba, Rob Fergus, and William~T. Freeman.
\newblock 80 million tiny images: A large data set for nonparametric object and
  scene recognition.
\newblock \emph{IEEE Transactions on Pattern Analysis and Machine
  Intelligence}, 30\penalty0 (11):\penalty0 1958--1970, 2008.

\bibitem[Wang et~al.(2019)Wang, Ge, Lipton, and Xing]{wang2019learning}
Haohan Wang, Songwei Ge, Zachary Lipton, and Eric~P Xing.
\newblock Learning robust global representations by penalizing local predictive
  power.
\newblock In \emph{Advances in Neural Information Processing Systems}, pp.\
  10506--10518, 2019.

\bibitem[Wolf et~al.(2020)Wolf, Debut, Sanh, Chaumond, Delangue, Moi, Cistac,
  Rault, Louf, Funtowicz, Davison, Shleifer, von Platen, Ma, Jernite, Plu, Xu,
  Scao, Gugger, Drame, Lhoest, and Rush]{wolf-etal-2020-transformers}
Thomas Wolf, Lysandre Debut, Victor Sanh, Julien Chaumond, Clement Delangue,
  Anthony Moi, Pierric Cistac, Tim Rault, Rémi Louf, Morgan Funtowicz, Joe
  Davison, Sam Shleifer, Patrick von Platen, Clara Ma, Yacine Jernite, Julien
  Plu, Canwen Xu, Teven~Le Scao, Sylvain Gugger, Mariama Drame, Quentin Lhoest,
  and Alexander~M. Rush.
\newblock Transformers: State-of-the-art natural language processing.
\newblock In \emph{Proceedings of the 2020 Conference on Empirical Methods in
  Natural Language Processing: System Demonstrations}, pp.\  38--45.
  Association for Computational Linguistics, 2020.

\bibitem[Yadav \& Bottou(2019)Yadav and Bottou]{qmnist-2019}
Chhavi Yadav and L\'{e}on Bottou.
\newblock Cold case: The lost mnist digits.
\newblock In \emph{Advances in Neural Information Processing Systems 32}, 2019.

\bibitem[Zhang et~al.(2016)Zhang, Bengio, Hardt, Recht, and
  Vinyals]{zhang2016understanding}
Chiyuan Zhang, Samy Bengio, Moritz Hardt, Benjamin Recht, and Oriol Vinyals.
\newblock Understanding deep learning requires rethinking generalization.
\newblock \emph{arXiv preprint arXiv:1611.03530}, 2016.

\bibitem[Zhang et~al.(2020)Zhang, Li, Guo, and Guo]{zhang2020hybrid}
Hongjie Zhang, Ang Li, Jie Guo, and Yanwen Guo.
\newblock Hybrid models for open set recognition.
\newblock In \emph{European Conference on Computer Vision}, pp.\  102--117.
  Springer, 2020.

\bibitem[Zhang et~al.(2013)Zhang, Sch{\"o}lkopf, Muandet, and
  Wang]{zhang2013domain}
Kun Zhang, Bernhard Sch{\"o}lkopf, Krikamol Muandet, and Zhikun Wang.
\newblock {Domain Adaptation Under Target and Conditional Shift}.
\newblock In \emph{International Conference on Machine Learning (ICML)}, 2013.

\bibitem[Zhou et~al.(2018)Zhou, Veitch, Austern, Adams, and
  Orbanz]{zhou2018non}
Wenda Zhou, Victor Veitch, Morgane Austern, Ryan~P Adams, and Peter Orbanz.
\newblock Non-vacuous generalization bounds at the imagenet scale: a
  pac-bayesian compression approach.
\newblock \emph{arXiv preprint arXiv:1804.05862}, 2018.

\end{thebibliography}
\bibliographystyle{iclr2022_conference}

\newpage 
\appendix
\section*{Appendix}
\section{Proofs from ~\secref{sec:setup}} \label{app:proof_setup}

Before proving results from \secref{sec:setup}, we introduce some notations. 
Define $\error(f(x), y) \defeq \indict{ y\not\in \argmax_{j\in\calY} f_j(x) }$. 
We express the \emph{population error} on distribution $\calD$ as
$\error_\calD (f) \defeq \Expt{(x,y) \sim \calD}{\error(f(x),y)}$. 

\begin{proof}[Proof of \propref{prop:characterization}] Consider a binary classification problem. Assume $\calP$ be the set of possible target conditional distribution of labels given $p_s(x,y)$ and $p_t(x)$. 
    
    The forward direction is simple. If $\calP = \{p_t(y|x)\}$ is singleton given $p_s(x,y)$ and $p_t(x)$, then the error of any classifier $f$ on the target domain is identified and is given by 
    \begin{align}
       \error_{\calD^T}(f) =  \Expt{x\sim p_t(x), y\sim p_t(y|x)}{\indict{ \argmax_{j\in \calY} f_j(x) \ne y}} \,.        
    \end{align}

    For the reverse direction assume that given $p_t(x)$ and $p_s(x,y)$, we have 
    two possible distributions $\calD^T$ and $\calD^{T^\prime}$ with $p_t(y|x), p_t^\prime(y|x) \in \calP$ such that on some $x$ with $p_t(x) > 0$, we have  $p_t(y|x) \ne p_t^\prime(y|x)$. 
    Consider $\calX_M = \{ x\in \calX | p_t(x) > 0 \text{ and } p_t(y=1|x) \ne p_t^\prime(y=1|x)\}$ be the set of all input covariates where the two distributions differ. We will now  choose a classifier $f$ such that the error on the two distributions differ. On a subset $\calX_M^1 =\{ x\in \calX | p_t(x) > 0 \text{ and } p_t(y=1|x) > p_t^\prime(y=1|x)\}$, assume $f(x) = 0$ and on   a subset $\calX_M^2 =\{ x\in \calX | p_t(x) > 0 \text{ and } p_t(y=1|x) < p_t^\prime(y=1|x)\}$, assume $f(x) =1$. We will show that the error of $f$ on distribution with $p_t(y|x)$  is strictly greater than the error of $f$ on distribution with $p_t^\prime(y|x)$. Formally, 
    \begin{align*}
        &\error_{\calD^T}(f) - \error_{\calD^{T^\prime}}(f) \\ 
        &=\Expt{x\sim p_t(x), y\sim p_t(y|x)}{\indict{ \argmax_{j\in \calY} f_j(x) \ne y}} - \Expt{x\sim p_t(x), y\sim p_t^\prime(y|x)}{\indict{ \argmax_{j\in \calY} f_j(x) \ne y}} \\ &= \int_{x\in \calX_M} \indict{f(x) \ne 0} \left( p_t(y=0|x) - p_t^\prime(y=0|x) \right) p_t(x) dx \\
        &\qquad + \int_{x\in \calX_M} \indict{f(x) \ne 1} \left( p_t(y=1|x) - p_t^\prime(y=1|x) \right) p_t(x) dx \\
        &=  \int_{x\in \calX_M^2} \left( p_t(y=0|x) - p_t^\prime(y=0|x) \right) p_t(x) dx + \int_{x\in \calX_M^1} \left( p_t(y=1|x) - p_t^\prime(y=1|x) \right) p_t(x) dx \\
        &>0\,, \numberthis \label{eq:proof_prop1}
    \end{align*}
    where the last step follows by construction of the set $\calX_M^1$ and $\calX_M^2$. Since $\error_{\calD^T}(f) \ne \error_{\calD^{T^\prime}}(f)$, 
    given the information of $p_t(x)$ and $p_s(x,y)$ it is impossible to distinguish
    the two values of the error with classifier $f$. Thus, we obtain a contradiction on the assumption that $p_t(y|x) \ne p_t^\prime(y|x)$. Hence, we must pose restrictions on the nature of shift such that $\calP$ is singleton to to identify accuracy on the target. 
\end{proof}

\begin{proof}[Proof of \corollaryref{corollary:impossible}]
    The corollary follows directly from \propref{prop:characterization}. 
Since two different target conditional distribution can lead to different error estimates without assumptions on the classifier, no method can estimate two different quantities from the same given information. We illustrate this in Example 1 next.   
\end{proof}

\section{Estimating accuracy in covariate shift or label shift} \label{app:estimate_label_covariate}

\textbf{Accuracy estimation under covariate shift assumption {} {}} 
Under the assumption that $p_t(y|x) = p_s(y|x)$, accuracy on the target domain can be estimated as follows: 
\begin{align}
    \error_{\calD^\test} (f) &= \Expt{(x,y) \sim \calD^\train}{ \frac{p_t(x,y)}{p_s(x,y)} \indict{f(x) \ne y}} \\
    &= \Expt{(x,y) \sim \calD^\train}{ \frac{p_t(x)}{p_s(x)} \indict{f(x) \ne y}} \,.\label{eq:covariate_error}
\end{align}

Given access to $p_t(x)$ and $p_s(x)$, one can directly estimate the expression in \eqref{eq:covariate_error}. 

\textbf{Accuracy estimation under label shift assumption {} {} } 
Under the assumption that $p_t(x|y) = p_s(x|y)$, accuracy on the target domain can be estimated as follows: 
\begin{align}
    \error_{\calD^\test} (f) &= \Expt{(x,y) \sim \calD^\train}{ \frac{p_t(x,y)}{p_s(x,y)} \indict{f(x) \ne y}} \\
    &= \Expt{(x,y) \sim \calD^\train}{ \frac{p_t(y)}{p_s(y)} \indict{f(x) \ne y}} \,.\label{eq:label_error}
\end{align}

Estimating importance ratios $p_t(x)/p_s(x)$ is straightforward under covariate shift assumption when the distributions $p_t(x)$ and $p_s(x)$ are known. 
For label shift, one can leverage moment matching approach called BBSE~\citep{lipton2018detecting} or likelihood minimization approach MLLS~\citep{garg2020unified}. Below we discuss the objective of MLLS:  
\begin{align}
    w = \argmax_{w\in \calW} \Expt{x \sim p_t(x)}{ \log p_s(y|x) ^T w} \label{eq:mlls} \,,
\end{align}
where ${\calW = \{ w \; |\; \forall y \,,  w_y \ge 0 \text{ and } \sum_{y=1}^{k} w_y p_s(y) = 1 \}}$. 
MLLS objective is guaranteed to obtain consistent estimates for the importance ratios $w^*(y) = p_t(y)/ p_s(y)$ under the following condition. 
\begin{theorem}[Theorem 1~\citep{garg2020unified}]
If the distributions $\{p(x)|y)\,:\,y=1,\ldots,k\}$ are strictly linearly independent, then $\w^*$ is the unique maximizer of the MLLS objective~\eqref{eq:mlls}.
\label{theorem:mlls}
\end{theorem}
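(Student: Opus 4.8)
The plan is to recast the MLLS objective as a maximum-likelihood / minimum-divergence problem over a family of mixture densities indexed by $w$, and to invoke strict linear independence only at the very end to upgrade ``the optimal \emph{density} is unique'' into ``the optimal \emph{weight vector} is unique.''

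First I would introduce, for each $w \in \calW$, the mixture density $p_w(x) \defeq \sum_{y=1}^k w_y\, p_s(x,y) = p_s(x) \sum_y w_y\, p_s(y|x)$. The constraint $\sum_y w_y p_s(y) = 1$ guarantees $\int p_w = 1$, so $p_w$ is a valid density, and the nonnegativity constraint together with $w^*_y = p_t(y)/p_s(y) \ge 0$ ensures $w^* \in \calW$. Writing $\log p_w(x) = \log p_s(x) + \log(p_s(y|x)^T w)$ and taking the expectation under $p_t$, the MLLS objective in \eqref{eq:mlls} differs from $\Expt{x\sim p_t}{\log p_w(x)}$ only by the additive constant $\Expt{x\sim p_t}{\log p_s(x)}$. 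Hence maximizing the objective over $\calW$ is equivalent to maximizing $\Expt{x\sim p_t}{\log p_w(x)}$, which in turn (subtracting the further constant $\Expt{x\sim p_t}{\log p_t(x)}$) is equivalent to minimizing $\mathrm{KL}(p_t \,\|\, p_w)$.

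Next I would identify the minimizer. The label-shift assumption $p_t(x|y) = p_s(x|y)$ gives $p_t(x) = \sum_y p_t(y) p_s(x|y) = \sum_y w^*_y\, p_s(x,y) = p_{w^*}(x)$, so $\mathrm{KL}(p_t \,\|\, p_{w^*}) = 0$. Because the KL divergence is nonnegative and vanishes exactly when its two arguments agree almost everywhere, $w^*$ attains the global maximum of the objective. For uniqueness I would take any maximizer $\hat w \in \calW$; by the same nonnegativity argument it satisfies $\mathrm{KL}(p_t \,\|\, p_{\hat w}) = 0$, hence $p_{\hat w} = p_{w^*}$ almost everywhere. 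Subtracting the two mixture representations yields $\sum_y (\hat w_y - w^*_y)\, p_s(y)\, p_s(x|y) = 0$ for almost every $x$. Setting $c_y \defeq (\hat w_y - w^*_y)\, p_s(y)$, this is precisely a linear dependence among the class-conditionals $\{p_s(x|y)\}$; \emph{strict linear independence} forces $c_y = 0$ for every $y$, and since each source class has positive prior $p_s(y) > 0$, we conclude $\hat w = w^*$.

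The step that needs the most care — and the main obstacle — is the well-definedness of $\log(p_s(y|x)^T w)$ on the support of $p_t$, since a generic $w \in \calW$ could send $p_w(x)$ to zero on a $p_t$-positive set and drive the objective to $-\infty$. I would handle this by noting that $w^*$ makes the integrand $p_t$-integrable (because $p_{w^*}=p_t$), so the supremum is finite and attained, and any competing maximizer must likewise avoid the $-\infty$ region, which is exactly the set where $p_w$ assigns no mass to a $p_t$-positive region; this is automatically excluded at any zero-KL optimum. A secondary technical point is that strict linear independence is a statement about vanishing \emph{almost everywhere}, which matches the almost-everywhere equality $p_{\hat w}=p_{w^*}$ produced by the KL argument, so the two dovetail cleanly without further measure-theoretic bookkeeping.
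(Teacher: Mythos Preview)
The paper does not actually prove this theorem: it is quoted verbatim as Theorem~1 of \citet{garg2020unified}, and the reader is referred there for details. So there is no in-paper proof to compare against.

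That said, your argument is correct and is essentially the standard one. Rewriting the objective as $\mathbb{E}_{x\sim p_t}[\log p_w(x)]$ up to an additive constant, then interpreting maximization as minimization of $\mathrm{KL}(p_t\,\|\,p_w)$, is exactly how the original reference proceeds; the label-shift identity $p_t=p_{w^*}$ gives attainment, and strict linear independence of the class-conditionals is invoked precisely to pass from equality of mixture densities to equality of weight vectors. Your handling of the $-\infty$ edge case is also appropriate: any maximizer must match the finite value attained at $w^*$, so it cannot place zero mixture mass on a $p_t$-positive set. One minor omission worth stating explicitly is that $w^*\in\calW$ uses both $p_s(y)>0$ (so the ratios are defined) and $\sum_y w^*_y\,p_s(y)=\sum_y p_t(y)=1$; you mention the former implicitly but not the latter.
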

We refer interested reader to \citet{garg2020unified} for details. 

Above results of accuracy estimation under label shift and covariate shift 
can be extended to a generalized label shift and covariate shift settings. 
Assume a function $h: \calX \to \calZ$ such that $y$ is independent of $x$ given $h(x)$. In other words $h(x)$ contains all the information needed to predict label $y$. With help of $h$, we can extend estimation to following settings: (i) \emph{Generalized covariate shift}, i.e., $p_s(y | h(x)) = p_t(y|h(x))\,$ and $p_s(h(x)) > 0$ for all $x \in \calX_t$; (ii) \emph{Generalized label shift}, i.e., $p_s(h(x) | y) = p_t(h(x)|y)\,$ and $p_s(y) > 0$ for all $y \in \calY_t$. By simply replacing 
$x$ with $h(x)$ in \eqref{eq:covariate_error} and \eqref{eq:mlls}, we will obtain consistent error estimates under these generalized conditions. 

\begin{proof}[Proof of Example 1]
Under covariate shift using \eqref{eq:covariate_error}, we get 
\begin{align*}
    \calE_1&=\Expt{(x,y)\sim p_s(x,y)}{\frac{p_t(x)}{p_s(x)} \indict{f(x)\ne y}}\\ 
    &=  \Expt{x\sim p_s(x,y = 0)}{\frac{p_t(x)}{p_s(x)} \indict{f(x)\ne 0}} + \Expt{x\sim p_s(x,y=1)}{\frac{p_t(x)}{p_s(x)} \indict{f(x)\ne 1}} \\
    &= \int \indict{f(x)\ne 0} p_t(x) p_s(y=0|x) dx + \int \indict{f(x)\ne 1} p_t(x) p_s(y=1|x) dx
\end{align*}     
 Under label shift using \eqref{eq:label_error}, we get 
\begin{align*}
    \calE_2&= \Expt{(x,y) \sim \calD^\train}{ \frac{p_t(y)}{p_s(y)} \indict{f(x) \ne y}} \\ 
    &= \Expt{x\sim p_s(x,y=0)}{\frac{\beta}{\alpha} \indict{f(x)\ne 0}} + \Expt{x\sim p_s(x,y=1)}{\frac{1- \beta}{1-\alpha} \indict{f(x)\ne 1}} \\ 
    &= \int \indict{f(x)\ne 0} \frac{\beta}{\alpha} p_s(y=0|x) p_s(x) dx + \int \indict{f(x)\ne 1} \frac{(1-\beta)}{(1-\alpha)} p_s(y=1|x) p_s(x) dx
\end{align*}

Then $\calE_1 - \calE_2$ is given by 
\begin{align*}
    \calE_1 - \calE_2 &= \int \indict{f(x)\ne 0} p_s(y=0|x) \left[p_t(x) - \frac{\beta}{\alpha} p_s(x)\right] dx \\ &+ \int \indict{f(x)\ne 1} p_s(y=1|x) \left[ p_t(x) - \frac{(1-\beta)}{(1-\alpha)} p_s(x) \right] dx\\
    &=  \int \indict{f(x)\ne 0} p_s(y=0|x) \frac{(\alpha - \beta )}{\alpha} \phi(\mu_2) dx\\ &+ \int \indict{f(x)\ne 1} p_s(y=1|x) \frac{(\alpha - \beta)}{1-\alpha} \phi(\mu_1) dx \,. \numberthis \label{eq:error_diff}
\end{align*}

If $\alpha > \beta$, then $\calE_1 > \calE_2$ and if $\alpha < \beta$, then $\calE_1 < \calE_2$.  Since $\calE_1 \ne \calE_2$ 
for arbitrary $f$, given access to $p_s(x,y)$, and $p_t(x)$, 
any method that consistently estimates error under 
covariate shift will give an 
incorrect estimate under label shift and vice-versa.   
The reason being that the same $p_t(x)$ and $p_s(x,y)$
can correspond to error $\calE_1$ (under covariate shift)
or error $\calE_2$ (under label shift) either of 
which is not discernable absent further assumptions 
on the nature of shift. 
\end{proof}

\section{Alternate interpretation of ATC } \label{app:interpretation}

\update{Consider the following framework: 
Given a datum $(x,y)$,
define a binary classification
problem of whether the 
model prediction $\argmax f(x)$
was correct or incorrect. 
In particular, if the model 
prediction matches the true label, 
then we assign a label 1 (positive) 
and conversely, if the model prediction 
doesn't match the true label then we assign 
a label 0 (negative).}

\update{Our method can be interpreted as 
identifying examples for correct and 
incorrect prediction based on the value of the 
score function $s(f(x))$, i.e., 
if the score $s(f(x))$ is greater 
than or equal to the threshold $t$ then our method
predicts that the classifier correctly 
predicted datum $(x,y)$ and vice-versa if 
the score is less than $t$.
A method that can solve this task
will perfectly estimate the target performance.
However, such an expectation is unrealistic. 
Instead, ATC expects that \emph{most} of the 
examples with score above threshold are correct 
and most of the examples below the threshold are 
incorrect. 
More importantly, ATC selects a threshold  
such that the number of 
falsely identified correct 
predictions match falsely 
identified incorrect predictions 
on source distribution, thereby 
balancing incorrect predictions. 
We expect useful estimates of accuracy 
with ATC if the threshold transfers to target,
i.e.   
if the number of 
falsely identified correct 
predictions match falsely 
identified incorrect predictions on target. }
\update{This interpretation relates our method 
to the OOD detection literature where
\citet{hendrycks2016baseline,hendrycks2019scaling} 
highlight that classifiers 
tend to assign higher confidence to 
in-distribution examples and leverage 
maximum softmax confidence (or logit) 
to perform OOD detection. }

\section{Details on the Toy Model} \label{app:toy_model}

\begin{figure}[t]
    \centering
    \subfigure[]{\includegraphics[width=0.4\linewidth]{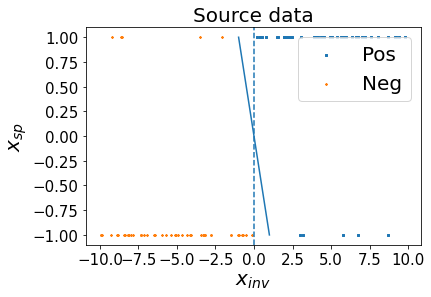}} \hfil
    \subfigure[]{\includegraphics[width=0.4\linewidth]{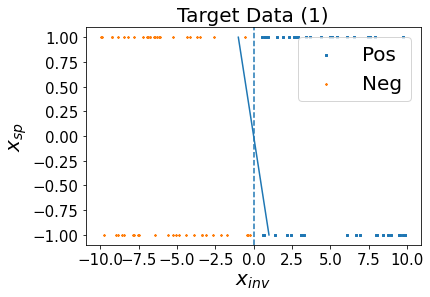}} \\
    \subfigure[]{\includegraphics[width=0.4\linewidth]{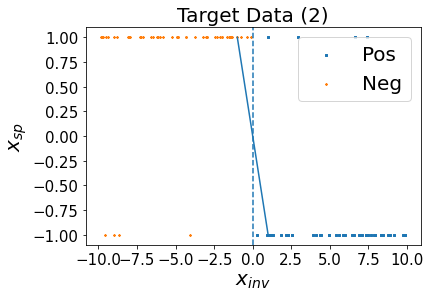}} \hfil
    \subfigure[]{\includegraphics[width=0.4\linewidth]{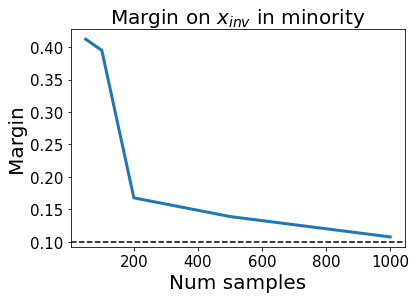}} 
    \caption[]{Illustration of toy model. (a) Source data at $n=100$. (b) Target data with $p_s^\prime = 0.5$. (b) Target data with $p_s^\prime = 0.9$. (c) Margin of $x_\inv$ in the minority group in source data. As sample size increases the margin saturates to true margin $\gamma = 0.1$.   } 
    \label{fig:toy_model}
  \end{figure}

\textbf{Skews observed in this toy model} In \figref{fig:toy_model}, we illustrate the toy model used in our empirical experiment. 
In the same setup,  we empirically observe that the margin on population with less density is large, i.e., margin is much greater than $\gamma$ when the number of observed samples is small (in \figref{fig:toy_model} (d)). Building on this observation, \citet{nagarajan2020understanding} showed in cases when margin decreases with number of samples, a max margin classifier trained on finite samples is bound to depend on the spurious features in such cases. They referred to this skew as \emph{geometric skew}. 

Moreover, even when the number of samples are large so that we do not observe geometric skews, \citet{nagarajan2020understanding} showed that training for finite number of epochs, a linear classifier will have a non zero dependency on the spurious feature. They referred to this skew as \emph{statistical skew}. Due both of these skews, we observe that a linear classifier obtained with training for finite steps on training data with finite samples, will have a non-zero dependency on the spurious feature. We refer interested reader to \citet{nagarajan2020understanding} for more details.  

\textbf{Proof of \thmref{thm:toy_theory} {} {}} 
\update{Recall, we consider a easy-to-learn binary classification problem with 
two features $x = [x_{\inv}, x_{\spr}] \in \Real^2$ where 
$x_\inv$ is fully predictive 
invariant feature with a margin $\gamma >0$
and $x_\spr \in \{-1, 1\}$ is a spurious feature (i.e., 
a feature that is correlated but
not predictive of the true label). 
Conditional on $y$, the distribution
over $x_\inv$ is 
given as follows: 
\begin{equation}
    x_\inv | y \sim \begin{cases}
        U[\gamma, c] & y = 1\\
        U[-c, -\gamma] & y = -1 
    \end{cases} \,, 
\end{equation}
where $c$ is a fixed constant greater than $\gamma$. 
For simplicity, we assume that label distribution 
on source is uniform on $\{-1, 1\}$.   
$x_\spr$ is distributed such that 
$P_s[ x_\spr \cdot (2y-1) > 0] = p_\spr $, where 
$p_\spr \in (0.5, 1.0)$ controls 
the degree of spurious correlation. 
To model distribution shift, 
we simulate target data with different
degree of spurious correlation, i.e., in target distribution 
$P_t[ x_\spr \cdot (2y-1) > 0] = p_\spr^\prime \in [0,1]$.  
Note that here we do not consider shifts in the label 
distribution but our result extends to arbitrary shifts 
in the label distribution as well. }

\update{In this setup, we examine linear sigmoid classifiers of the form  
$f(x) = \left[ \frac{1}{1 + e^{w^T x}}, \frac{ e^{w^T x}}{1 + e^{w^T x}}\right]$
where $w = [w_{\inv}, w_\spr] \in \Real^2$. 
We show that given a linear classifier 
that relies on the spurious feature 
and achieves a non-trivial performance on 
the source (i.e., $w_\inv > 0$),  
ATC with maximum confidence score function  
\emph{consistently}
estimates the accuracy on the target distribution.
Define $X_M = \{x | x_\spr \cdot (2y -1) < 0\}$ 
and $X_C = \{x | x_\spr \cdot (2y -1) > 0\}$. 
Notice that in target distributions, we are 
changing the fraction of examples in $X_M$ 
and $X_C$ but we are not changing the 
distribution of examples within individual set.} 
\update{\begin{theorem} \label{thm:toy_theory_repeat}
    Given any classifier $f$ with $w_\inv > 0$ in the above setting,   
    assume that the threshold $t$ is obtained with finite sample 
    approximation of \eqref{eq:ATC_thres}, i.e., $t$ is 
    selected such that\footnote{Note that this is possible 
    because a linear classifier with sigmoid activation 
    assigns a unique score to each point in source distribution.}  
    \begin{align}
        \sum_{i=1}^n \left[ \indict{ \max_{j\in \out} f_j(x_i) < t} \right] = \sum_{i=1}^n \left[\indict{ \argmax_{j\in \out} f_j(x_i) \ne y_i} \right] \,, \label{eq:ATC_finite}
    \end{align}
    where $\{(x_i, y_i)\}_{i=1}^n \sim (\calD^\train)^n$ are $n$ samples from source distribution.  
    Fix a $\delta > 0$. Assuming $n\ge {{2\log(4/\delta)}/{(1 - p_\spr)^2}}$,
    then the estimate of accuracy by  
    ATC as in \eqref{eq:ATC_pred} satisfies the following with probability at least $1-\delta$,  
    \begin{align}
        \abs{ \Expt{x \sim \calD^\test}{\indict{ s(f(x)) < t}} - \Expt{(x,y) \sim \calD^\test}{\indict{ \argmax_{j\in \out} f_j(x) \ne y}} } \le \sqrt{\frac{\log(8/\delta)}{n\cdot c_\spr}} \,, \label{eq:thm3_final}
    \end{align}
    where $\calD^\test$ is any target distribution considered in our setting and $c_\spr = (1 - p_\spr)$ if $w_\spr > 0$ and $c_\spr = p_\spr$ otherwise.   
    \end{theorem}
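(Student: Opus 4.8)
The plan is to strip off the sigmoid, isolate a single ``error group'' $X_E$ of the population that carries \emph{all} of the classifier's mistakes together with every point of confidence below a fixed level, and thereby reduce \thmref{thm:toy_theory_repeat} to a one-dimensional empirical-process statement about $\abs{w^\top x}$ restricted to $X_E$. First I would observe that for a linear--sigmoid classifier the maximum-confidence score is $\max_{j} f_j(x) = 1/(1+e^{-\abs{w^\top x}})$, strictly increasing in $\abs{w^\top x}$. Hence the threshold $t$ solving \eqref{eq:ATC_finite} is equivalent to a threshold $\rho_n$ on $\abs{w^\top x}$, the target estimate \eqref{eq:ATC_pred} equals $\Expt{x\sim\calD^\test}{\indict{\abs{w^\top x}<\rho_n}}$, and since $x_\inv$ has a density the variable $\abs{w^\top x}$ has no atoms, so $\rho_n$ is almost surely well defined.

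Next comes the structural core, the only place $w_\inv>0$ is used. Set $X_E \defeq X_M$ if $w_\spr>0$ and $X_E \defeq X_C$ if $w_\spr<0$, with $X_R$ the complementary group. A one-line sign check, using $x_\inv\in[\gamma,c]\cup[-c,-\gamma]$, shows that on $X_R$ both coordinates of $w$ drive $w^\top x$ toward the label's sign, so $f$ is correct on all of $X_R$ and every mistake lies in $X_E$. Moreover the misclassification region inside $X_E$ is contained in the sublevel set $\{x : \abs{w^\top x}<\rho_R\}$, where $\rho_R \defeq \inf_{x\in X_R}\abs{w^\top x} = w_\inv\gamma + \abs{w_\spr}$, and this sublevel set is disjoint from $X_R$; away from a degenerate boundary case the containment is strict, so $P_s[\abs{w^\top x}<\rho_R]$ exceeds the source error rate by a fixed positive margin. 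Finally, because the target distributions in our setting merely reweight the two groups, the conditional law of $x$ given $X_E$ (and given $X_R$) is \emph{identical} under $\calD^\train$ and $\calD^\test$; only the weights change, from $P_s[X_E]=c_\spr$ to $P_t[X_E]=c_\spr'$. Writing $e_E$ for the common conditional error rate on $X_E$, we get $\error_{\calD^\train}(f)=c_\spr e_E$, $\error_{\calD^\test}(f)=c_\spr' e_E$, and, for any $\rho\le\rho_R$, $P_t[\abs{w^\top x}<\rho] = c_\spr'\, P_s[\,\abs{w^\top x}<\rho \mid X_E\,]$.

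These facts give exactness at the population level: the population threshold $\rho^\ast$ solves $P_s[\abs{w^\top x}<\rho^\ast]=\error_{\calD^\train}(f)$, and since the $\rho_R$-sublevel set already contains the whole source error region, $\rho^\ast\le\rho_R$; substituting gives $P_s[\abs{w^\top x}<\rho^\ast\mid X_E]=e_E$ and hence $P_t[\abs{w^\top x}<\rho^\ast]=c_\spr' e_E=\error_{\calD^\test}(f)$. For the finite-sample bound I would condition on the $n_E$ source points that fall in $X_E$. Under the hypothesis $n\ge 2\log(4/\delta)/(1-p_\spr)^2$, a Hoeffding bound gives $n_E\ge c_\spr n/2$ with probability at least $1-\delta/4$ (using $c_\spr\ge 1-p_\spr$ since $p_\spr>1/2$). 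On a further high-probability event the margin above forces strictly more source points below $\rho_R$ than there are source errors, which pins the data-dependent threshold at $\rho_n\le\rho_R$; then no $X_R$ point is counted on either side and \eqref{eq:ATC_finite} says exactly that the empirical c.d.f.\ $\widehat{F}_{n_E}$ of $\abs{w^\top x}$ over the $X_E$ subsample satisfies $\widehat{F}_{n_E}(\rho_n)=\widehat{e}_E$, the empirical error rate on $X_E$. A Dvoretzky--Kiefer--Wolfowitz bound on these $n_E$ i.i.d.\ samples controls $\abs{\,P_s[\abs{w^\top x}<\rho_n\mid X_E]-\widehat{F}_{n_E}(\rho_n)\,}$ uniformly in $\rho_n$, and Hoeffding controls $\abs{\widehat{e}_E-e_E}$; combining, $\abs{\,P_s[\abs{w^\top x}<\rho_n\mid X_E]-e_E\,}$ is at most a universal constant times $\sqrt{\log(1/\delta)/n_E}$, which is at most a constant times $\sqrt{\log(1/\delta)/(c_\spr n)}$. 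Multiplying by $c_\spr'\le 1$, absorbing constants, and taking a union bound over the $O(1)$ failure events (redistributing the failure probability) gives $\abs{\,\Expt{x\sim\calD^\test}{\indict{\abs{w^\top x}<\rho_n}} - \error_{\calD^\test}(f)\,}\le\sqrt{\log(8/\delta)/(n c_\spr)}$. This also explains the case split in $c_\spr$: the effective sample size for estimating $e_E$ is the number of source points in the error group, namely $\approx(1-p_\spr)n$ when $w_\spr>0$ and $\approx p_\spr n$ when $w_\spr<0$.

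The step I expect to be the main obstacle is controlling the data-dependent threshold $\rho_n$: one has to rule out that it overshoots $\rho_R$ --- which would let $X_R$-mass leak below the threshold and break the ``mistakes live only in $X_E$'' bookkeeping that drives the argument --- and simultaneously extract the sharp $1/\sqrt{c_\spr n}$ rate rather than a looser $1/(c_\spr\sqrt{n})$. The degenerate case in which $X_E$ is entirely misclassified must be handled on its own, since there the margin collapses and $\rho_n$ can overshoot $\rho_R$ by a lower-order amount. The remaining ingredients --- the sign check isolating $X_E$, the invariance of the within-group conditionals, and the DKW/Hoeffding estimates --- are routine.
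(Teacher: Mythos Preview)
Your proposal is correct and follows essentially the same route as the paper's proof: partition into the error group $X_E\in\{X_M,X_C\}$ according to the sign of $w_\spr$, argue that both the misclassified points and the sub-threshold set lie entirely in $X_E$, use the invariance of the within-$X_E$ conditional law under the shift, and combine Hoeffding and DKW on the $\approx c_\spr n$ source samples that fall in $X_E$ (then lower-bound that count by a separate Hoeffding step). The one place you anticipate difficulty---ruling out that $\rho_n$ overshoots $\rho_R$---is in fact immediate, and the paper dispatches it in a single contradiction line: every misclassified sample has $|w^\top x|\le |w_\spr|-w_\inv\gamma<\rho_R$, so the empirical count of points with $|w^\top x|<\rho_R$ is \emph{deterministically} at least the empirical error count, which forces $\rho_n<\rho_R$ without any high-probability hedge and without a separate treatment of the ``$X_E$ entirely misclassified'' case.
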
}
    \update{\begin{proof}
        First we consider the case of $w_{\spr} >0$. The proof follows in two simple steps. First we notice that the classifier will make an error only on some points in $X_M$ and the threshold $t$ will be selected such that the fraction of points in $X_M$ with maximum confidence less than the threshold $t$ will match the error of the classifier on $X_M$.  Classifier with $w_{\spr} >0$ and $w_{\inv} >0$ will classify all the points in $X_C$ correctly.
        Second, since the distribution of points is not changing within $X_M$ and $X_C$, the same threshold continues to work for arbitrary shift in the fraction of examples in $X_M$, i.e., $p^\prime_\spr$. 
        \\[10pt]
        Note that when $w_\spr >0$, the classifier makes no error on points in $X_C$ and makes an error on a subset $X_\err = \{x | x_\spr \cdot (2y -1) < 0 \, \& \, (w_\inv x_\inv + w_\spr x_\spr) \cdot (2y -1) \le 0 \}$ of $X_M$, i.e., $X_\err \subseteq X_M$. Consider $X_\thres = \{ x| \argmax_{y\in \calY} f_y(x) \le t \}$ as the set of points that obtain a score less than or equal to $t$. 
        Now we will show that ATC chooses a threshold $t$ such that all points in $X_C$ gets a score above $t$, i.e.,  $X_\thres \subseteq X_M$. First note that the score of points close to the true separator in $X_C$, i.e., at $x_1 = (\gamma, 1)$ and $x_2 = (-\gamma, -1)$ match. In other words, score at $x_1$ matches with the score of $x_2$ by symmetricity, i.e., 
        \begin{align}
            \argmax_{y \in \calY} f_y(x_1) = \argmax_{y \in \calY} f_y(x_2) = \frac{e^{w_\inv \gamma + w_\spr}}{(1 + e^{w_\inv \gamma + w_\spr})} \,.             
        \end{align}
        Hence, if $t \ge \argmax_{y \in \calY} f_y(x_1)$ then we will have $\abs{X_\err} < \abs{X_\thres}$  which is contradiction violating definition of $t$ as in \eqref{eq:ATC_finite}. Thus $X_\thres \subseteq X_M$. 
        \\[10pt]
        Now we will relate LHS and RHS of \eqref{eq:ATC_finite} with their expectations using Hoeffdings and DKW inequality to conclude \eqref{eq:thm3_final}. Using Hoeffdings' bound, we have with probability at least $1-\delta/4$
        \begin{align}
        \abs{ \sum_{i\in X_M} \frac{\left[\indict{ \argmax_{j\in \out} f_j(x_i) \ne y_i} \right]}{\abs{X_M}} - \Exp_{(x,y)\sim \calD^\test} \left[\indict{ \argmax_{j\in \out} f_j(x) \ne y} \right]} \le \sqrt{\frac{\log(8/\delta)}{2\abs{X_M}}} \,. \label{eq:step1_th}
        \end{align}
        With DKW inequality, we have with probability at least $1-\delta/4$ 
        \begin{align}
        \abs{ \sum_{i\in X_M} \frac{\left[ \indict{ \max_{j\in \out} f_j(x_i) < t^\prime} \right]}{\abs{X_M}} - \Exp_{(x,y)\sim \calD^\test} \left[ \indict{ \max_{j\in \out} f_j(x) < t^\prime} \right]} \le \sqrt{\frac{\log(8/\delta)}{2\abs{X_M}}} \,,\label{eq:step2_th}
        \end{align}
        for all $t^\prime > 0 $. Combining \eqref{eq:step1_th} and \eqref{eq:step2_th} at $t^\prime = t$ with definition \eqref{eq:ATC_finite}, we have with probability at least $1- \delta/2$
        \begin{align}
            \abs{ \Expt{x \sim \calD^\test}{\indict{ s(f(x)) < t}} - \Expt{(x,y) \sim \calD^\test}{\indict{ \argmax_{j\in \out} f_j(x) \ne y}} } \le \sqrt{\frac{\log(8/\delta)}{2\abs{X_M}}} \,. \label{eq:bound_Xm}
        \end{align} 
        \\
        Now for the case of $w_\spr < 0$, we can use the same arguments on $X_C$. That is, since now all the error will be on points in $X_C$ and classifier will make no error $X_M$, we can show that threshold $t$ will be selected such that the fraction of points in $X_C$ with maximum confidence less than the threshold $t$ will match the error of the classifier on $X_C$. Again, since the distribution of points is not changing within $X_M$ and $X_C$, the same threshold continues to work for arbitrary shift in the fraction of examples in $X_M$, i.e., $p^\prime_\spr$.  Thus with similar arguments, we have 
        \begin{align}
            \abs{ \Expt{x \sim \calD^\test}{\indict{ s(f(x)) < t}} - \Expt{(x,y) \sim \calD^\test}{\indict{ \argmax_{j\in \out} f_j(x) \ne y}} } \le \sqrt{\frac{\log(8/\delta)}{2\abs{X_C}}} \,. \label{eq:bound_Xc}
        \end{align} 
        \\[10pt]    
        Using Hoeffdings' bound, with probability at least $1-\delta/2$, we have 
        \begin{align}
            \abs{ {X_M} - n\cdot (1 -p_\spr)} \le \sqrt{\frac{n\cdot log(4/\delta)}{2}} \,. \label{eq:bound_XM}           
        \end{align}
        With probability at least $1-\delta/2$, we have 
        \begin{align}
            \abs{ {X_C} - n\cdot p_\spr} \le \sqrt{\frac{n\cdot log(4/\delta)}{2}} \,. \label{eq:bound_XC}           
        \end{align}
        Combining \eqref{eq:bound_XM} and \eqref{eq:bound_Xm}, we get the desired result for $w_\spr >0$. For $w_\spr < 0$, we combine \eqref{eq:bound_XC} and \eqref{eq:bound_Xc} to get the desired result. 
    \end{proof}}

\textbf{Issues with IM in toy setting {} {}} As described in \appref{app:baselines}, we observe that IM is sensitive to binning strategy. 
In the main paper, we include IM result with uniform mass binning with $100$ bins. Empirically, we observe that we recover the true performance with IM if we use equal width binning with number of bins greater than 5.

\begin{figure}[t]
    \centering
    \subfigure[]{\includegraphics[width=0.4\linewidth]{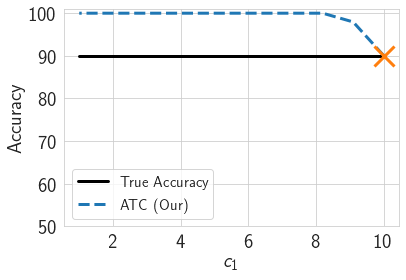}} \hfil
    \caption[]{Failure of ATC in our toy model. Shifting the support of target class 
    conditional $p_t(x_{\inv}|y)$ may 
    introduce a bias in ATC estimates, e.g.,  
    shrinking the support to 
    $c_1$($<c$) (while maintaining 
    uniform distribution) 
    in the target leads to overestimation bias.} 
    \label{fig:toy_model_failure}
  \end{figure}  

\textbf{Biased estimation with ATC in our toy model {} {}}
We assumed that both in 
source and target $x_{\inv}|y=1$ 
is uniform between $[\gamma, c]$ 
and $x|y=-1$ is uniform 
between $[-c, -\gamma]$. 
Shifting the support of target class 
conditional $p_t(x_{\inv}|y)$ may 
introduce a bias in ATC estimates, e.g.,  
shrinking the support to 
$c_1$($<c$) (while maintaining 
uniform distribution) 
in the target will lead to an over-estimation 
of the target performance with ATC.  
We show this failure in \figref{fig:toy_model_failure}. 
The reason being that with 
the same threshold that we 
see more examples falsely identified as correct 
as compared to  examples falsely identified as 
incorrect.

\subsection{A More General Result} \label{app:general_result}

Recall, for a given threshold $t$, we categorize an example $(x,y)$ as a falsely identified correct prediction (ficp) if the predicted label $\wh y = \argmax f(x)$ is not the same as $y$ but the predicted score $f_{\wh y}(x)$ is greater than $t$. Similarly, an example is falsely identified incorrect prediction (fiip) if the predicted label $\wh y$ is the same as $y$ but the predicted score $f_{\wh y} (x)$ is less than $t$. 

In general, we believe that our method will obtain consistent 
estimates in scenarios 
where the relative distribution of covariates doesn't change 
among examples that are falsely identified as 
incorrect and examples that are falsely identified as correct. 
In other words, ATC is expected to work if the distribution shift is such that 
falsely identified incorrect predictions match falsely identified correct prediction.

\subsection{ATC produces consistent estimate on source distribution}

\begin{proposition}
Given labeled validation data $\{(x_i,y_i)\}_{i=1}^n$ from a distribution $\calD^S$ and a model $f$, choose a threshold $t$ as in \eqref{eq:ATC_thres}. Then for $\delta > 0$, with probability at least $1-\delta$, we have 
\begin{align*}
    \Exp_{(x,y)\sim \calD} \left[ \indict{ \max_{j\in \out} f_j(x) < t} - \indict{ \argmax_{j\in \out} f_j(x) \ne y} \right] \le 2 \sqrt{\frac{\log(4/\delta)}{2n}} \numberthis
\end{align*}
\end{proposition}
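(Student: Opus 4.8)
The plan is to split the quantity of interest into a term that vanishes by the construction of $t$ and two terms that are handled by standard concentration. Introduce the shorthand $a(t') \defeq \Exp_{(x,y)\sim\calD^S}\!\left[\indict{\max_{j\in\calY} f_j(x) < t'}\right]$ and $\widehat a(t') \defeq \frac1n\sum_{i=1}^n \indict{\max_{j\in\calY} f_j(x_i) < t'}$ for the population and empirical mass scoring below $t'$, and likewise $e \defeq \Exp_{(x,y)\sim\calD^S}\!\left[\indict{\argmax_{j\in\calY} f_j(x)\ne y}\right]$ and $\widehat e \defeq \frac1n\sum_{i=1}^n \indict{\argmax_{j\in\calY} f_j(x_i)\ne y_i}$ for the population and empirical $0$-$1$ error. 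By the (finite-sample) defining property \eqref{eq:ATC_thres} of $t$ we have $\widehat a(t) = \widehat e$, so the left-hand side of the claim equals
\[
 a(t) - e \;=\; \bigl(a(t) - \widehat a(t)\bigr) + \bigl(\widehat a(t) - \widehat e\bigr) + \bigl(\widehat e - e\bigr) \;=\; \bigl(a(t) - \widehat a(t)\bigr) + \bigl(\widehat e - e\bigr).
\]

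First I would bound $\widehat e - e$: this is the deviation of an average of $n$ i.i.d.\ $\{0,1\}$-valued random variables from its mean, so Hoeffding's inequality gives $\abs{\widehat e - e} \le \sqrt{\log(4/\delta)/(2n)}$ with probability at least $1-\delta/2$. The delicate term is $a(t) - \widehat a(t)$, since $t$ is itself a function of the validation sample and so $\indict{\max_j f_j(x) < t}$ is not an average of independent summands to which Hoeffding applies directly. The key observation is that $t' \mapsto \widehat a(t')$ and $t' \mapsto a(t')$ are precisely the empirical and population CDFs of the scalar random variable $\max_{j\in\calY} f_j(x)$, $x\sim\calD^S$; hence the Dvoretzky--Kiefer--Wolfowitz inequality gives $\sup_{t'\in\Real}\abs{\widehat a(t') - a(t')} \le \sqrt{\log(4/\delta)/(2n)}$ with probability at least $1-\delta/2$, and in particular this holds at the random point $t' = t$. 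A union bound over the two events together with the triangle inequality then yields $\abs{a(t) - e} \le 2\sqrt{\log(4/\delta)/(2n)}$ with probability at least $1-\delta$, which implies the stated (one-sided) bound.

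The only genuine obstacle is the data-dependence of $t$, which is exactly what forces the uniform DKW bound in place of a plain Hoeffding bound; the rest is a routine decomposition and union bound. One bookkeeping point for the full write-up: the equality $\widehat a(t) = \widehat e$ holds exactly only when the scores $\{\max_j f_j(x_i)\}$ can be separated by a threshold at the required empirical quantile (e.g.\ when all sample scores are distinct, as for the sigmoid linear classifier of \thmref{thm:toy_theory}); otherwise one takes $t$ to minimize $\abs{\widehat a(t) - \widehat e}$ and carries the resulting $O(1/n)$ slack through the bound, which does not affect the rate.
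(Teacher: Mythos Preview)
Your proposal is correct and follows essentially the same approach as the paper: Hoeffding's inequality for $|\widehat e - e|$, the DKW inequality for $\sup_{t'}|\widehat a(t') - a(t')|$ (applied at the data-dependent $t$), and the defining equality $\widehat a(t)=\widehat e$, combined by a union bound. Your explicit discussion of why a uniform (DKW) bound is required and the caveat about exact attainability of $\widehat a(t)=\widehat e$ are welcome clarifications that the paper's proof leaves implicit.
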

\begin{proof}
The proof uses (i) Hoeffdings' inequality to relate the accuracy with expected accuracy; and (ii) DKW inequality to show the concentration of the estimated accuracy with our proposed method. Finally, we combine (i) and (ii) using the fact that at selected threshold $t$ the number of false positives is equal to the number of false negatives. 

Using Hoeffdings' bound, we have with probability at least $1-\delta/2$
\begin{align}
    \abs{ \sum_{i=1}^n \left[\indict{ \argmax_{j\in \out} f_j(x_i) \ne y_i} \right] - \Exp_{(x,y)\sim \calD} \left[\indict{ \argmax_{j\in \out} f_j(x) \ne y} \right]} \le \sqrt{\frac{\log(4/\delta)}{2n}} \,. \label{eq:step1}
\end{align}
With DKW inequality, we have with probability at least $1-\delta/2$ 
\begin{align}
    \abs{ \sum_{i=1}^n \left[ \indict{ \max_{j\in \out} f_j(x_i) < t^\prime} \right] - \Exp_{(x,y)\sim \calD} \left[ \indict{ \max_{j\in \out} f_j(x) < t^\prime} \right]} \le \sqrt{\frac{\log(4/\delta)}{2n}} \,,\label{eq:step2}
\end{align}
for all $t^\prime > 0 $. Finally by definition, we have 
\begin{align}
    \sum_{i=1}^n \left[ \indict{ \max_{j\in \out} f_j(x_i) < t^\prime} \right] = \sum_{i=1}^n \left[\indict{ \argmax_{j\in \out} f_j(x_i) \ne y_i} \right] \label{eq:step3}
\end{align}

Combining \eqref{eq:step1}, \eqref{eq:step2} at $t^\prime = t$, and \eqref{eq:step3}, we have the desired result.  
\end{proof}

\section{Basline Methods} \label{app:baselines}

\textbf{Importance-re-weighting (IM) {} {}} If we can estimate the importance-ratios
$\frac{p_t(x)}{p_s(x)}$ with just the unlabeled data from the target and 
validation labeled data from source, then we can estimate the accuracy as on target as follows: 
\begin{align}
\error_{\calD^\test} (f) = \Expt{(x,y) \sim \calD^\train}{ \frac{p_t(x)}{p_s(x)} \indict{f(x) \ne y}} \,.    
\end{align}

As previously discussed, this is particularly useful in the setting of covariate shift (within support) where importance ratios estimation has been explored in the literature in the past. Mandolin \citep{chen2021mandoline} extends this approach. They estimate importance-weights with use of extra supervision about the axis along which the distribution is shifting.  

\update{In our work, we experiment with uniform mass binning and equal width binning
with the number of bins in $[5, 10, 50]$. Overall, we observed that equal width binning works the best with $10$ bins. Hence throughout this paper we perform equal width binning with $10$ bins to include results with IM.}

\textbf{Average Confidence (AC) {} {}} If we expect the classifier to be argmax calibrated on the target then average confidence is equal to accuracy of the classifier. Formally, by definition of argmax calibration of $f$ on any distribution $\calD$, we have 
\begin{align}
\error_{\calD} (f) = \Expt{(x,y) \sim \calD}{  \indict{ y\not\in \argmax_{j\in\calY} f_j(x) }} = \Expt{(x,y) \sim \calD}{ \max_{j\in\calY} f_j(x)} \,. 
\end{align}

\textbf{Difference Of Confidence {}{}} We estimate the error on target by subtracting difference of confidences on source and target (as a distributional distance~\citep{guillory2021predicting}) from expected error on source distribution, i.e, $\DOC_{\calD^\test} = \Expt{x \sim \calD^\train}{ \indict{\argmax_{j\in\calY} f_j(x) \ne y}} + \Expt{x \sim \calD^\test}{ \max_{j\in\calY} f_j(x)} - \Expt{x \sim \calD^\train}{ \max_{j\in\calY} f_j(x)}$. This is referred to as DOC-Feat in \citep{guillory2021predicting}.

\textbf{Generalized Disagreement Equality (GDE) {}{}} \citet{jiang2021assessing} proposed average disagreement of two models (trained on the same training set but with different initialization and/or different data ordering) as a approximate measure of accuracy on the underlying data, i.e., 
\begin{align}
    \error_{\calD}(f) = \Expt{(x,y) \sim \calD}{\indict{f(x) \ne f^\prime(x)}} \,.
\end{align}

They show that marginal calibration of the model is sufficient to have expected test error equal to the expected of average disagreement of two models where the latter expectation is also taken over the models used to calculate disagreement.

\section{Details on the Dataset Setup} \label{app:dataset}

\begin{table}[h!]
    \begin{adjustbox}{width=\columnwidth,center}
    \centering
    \small
    \tabcolsep=0.12cm
    \renewcommand{\arraystretch}{1.2}
    \begin{tabular}{@{}*{3}{c}@{}}
    \toprule
    Train (Source) & Valid (Source) & Evaluation~(Target)  \\
    \midrule
    MNIST (train) & MNIST (valid) & USPS, SVHN and Q-MNIST \\ 
    CIFAR10 (train) & CIFAR10 (valid) & CIFAR10v2, 95 CIFAR10-C datasets (Fog and Motion blur, etc. ) \\
    CIFAR100 (train) & CIFAR100 (valid) & 95 CIFAR100-C datasets (Fog and Motion blur, etc. )  \\
    FMoW (2002-12) (train) & FMoW (2002-12) (valid) & \thead{FMoW \{(2013-15, 2016-17) $\times$ \\(All, Africa, Americas, Oceania, Asia,
and Europe)\} } \\
    RxRx1 (train) & RxRx1(id-val) & RxRx1 (id-test, OOD-val, OOD-test)  \\
    Amazon (train) & Amazon (id-val) & Amazon (OOD-val, OOD-test)  \\
    CivilComments (train) & CivilComments (id-val) & \thead{CiviComments (8 demographic identities male, female, LGBTQ, \\ Christian, Muslim, other religions, Black, and White)} \\ 
    ImageNet (train) & ImageNet (valid) &  \thead{3 ImageNetv2 datasets, ImageNet-Sketch,\\ 95 ImageNet-C datasets}  \\
    ImageNet-200 (train) & ImageNet-200 (valid) &  \thead{3 ImageNet-200v2 datasets,  ImageNet-R, \\ ImageNet200-Sketch, 95 ImageNet200-C datasets}  \\
    \textsc{Breeds} (train)& \breeds~(valid) & \thead{Same subpopulations as train but unseen images from natural \\ and synthetic shifts in ImageNet, Novel subpopulations on \\ natural and synthetic shifts} \\ 
    \bottomrule 
    \end{tabular}  
    \end{adjustbox}  
    \caption{ \update{Details of the test datasets considered in our evaluation.} 
    }\label{table:dataset}
 \end{table}

\update{In our empirical evaluation, we consider 
both natural and synthetic distribution shifts.
We consider shifts on ImageNet~\citep{russakovsky2015imagenet}, 
CIFAR~\citet{krizhevsky2009learning}, 
FMoW-\textsc{Wilds}~\citep{christie2018functional}, 
RxRx1-\textsc{Wilds}~\citep{taylor2019rxrx1}, 
Amazon-\textsc{Wilds}~\citep{ni2019justifying}, 
CivilComments-\textsc{Wilds}~\citep{borkan2019nuanced},
and MNIST~\cite{lecun1998mnist} datasets. }

\emph{ImageNet setup.} First, we consider 
synthetic shifts induced to simulate $19$ 
different visual corruptions (e.g., shot noise, 
motion blur, pixelation etc.) each with $5$ 
different intensities giving us a total 
of $95$ datasets under 
ImageNet-C~\citep{hendrycks2019benchmarking}. 
Next, we consider natural distribution
shifts due to differences in the data 
collection process. In particular, 
we consider $3$ ImageNetv2~\citep{recht2019imagenet}
datasets each using a different strategy 
to collect test sets. We also evaluate 
performance on images with artistic 
renditions of object classes, i.e., 
ImageNet-R~\citep{hendrycks2021many} 
and ImageNet-Sketch~\citep{wang2019learning} 
with hand drawn sketch images. Note that 
renditions dataset only contains $200$ 
classes from ImageNet. Hence, in the main 
paper we include results on 
ImageNet restricted to these $200$
classes, which we call as 
ImageNet-200, and relegate results 
on ImageNet with $1$k classes to appendix. 

We also consider \textsc{Breeds} benchmark~\citep{santurkar2020breeds} 
in our evaluation to assess robustness to 
subpopulation shifts, in particular, 
to understand how accuracy estimation 
methods behave when novel subpopulations 
not observed during training are introduced. 
\textsc{Breeds} leverages class hierarchy 
in ImageNet to repurpose original classes 
to be the subpopulations and defines a  
classification task on superclasses.  
Subpopulation shift is induced 
by directly making the subpopulations present in
the training and test distributions disjoint. 
Overall, \textsc{Breeds}  benchmark 
contains 4 datasets \textsc{Entity-13}, \textsc{Entity-30},
\textsc{Living-17}, \textsc{Non-living-26}, each focusing on 
different subtrees in the hierarchy. To generate \breeds~dataset on top of ImageNet, we 
use the open source library: \url{https://github.com/MadryLab/BREEDS-Benchmarks}. 
We focus on 
natural and synthetic shifts as in ImageNet on same and different
subpopulations in BREEDs. Thus for both the subpopulation (same or novel), we obtain a total of $99$ target datasets. 

\emph{CIFAR setup.} Similar to the ImageNet setup, 
we consider (i) synthetic shifts (CIFAR-10-C) due to 
common corruptions; and (ii) natural distribution shift 
(i.e., CIFARv2~\citep{recht2018cifar,torralba2008tinyimages})
due to differences in data collection strategy on 
on CIFAR-10~\citep{krizhevsky2009learning}.  
On CIFAR-100, we just have  
synthetic shifts due to common corruptions. 

\emph{FMoW-\textsc{Wilds} setup.} 
In order to consider distribution shifts 
faced in the wild, we consider 
FMoW-\textsc{wilds}~\citep{wilds2021,christie2018functional}
from \textsc{Wilds} benchmark,  
which contains satellite images taken 
in different geographical regions and at
different times. \update{We obtain $12$ different 
OOD target sets by considering images between
years $2013$--$2016$ and $2016$--$2018$ and by considering 
five geographical regions as subpopulations (Africa, Americas, Oceania, Asia,
and Europe) separately and together.}

\update{\emph{RxRx1--\textsc{Wilds} setup.} Similar to FMoW, we consider 
RxRx1-\textsc{Wilds}~\citep{taylor2019rxrx1} from \textsc{Wilds} benchmark, 
which contains image of cells obtained by fluorescent microscopy and the task is to genetic treatments the cells received. We obtain $3$ target datasets with shift induced by batch effects which make it difficult to draw conclusions from data across experimental
batches. 
}

\update{\emph{Amazon-\textsc{Wilds} setup.} For natural language task, we consider Amazon-\textsc{Wilds}~\citep{ni2019justifying} dataset from \textsc{Wilds} benchmark, 
which contains review text and the task is get a corresponding star rating from $1$ to $5$. We obtain 2 target datasets by considered shifts induced due to different set of reviewers than the training set. 
}

\update{\emph{CivilComments-\textsc{Wilds} setup.}  
We also consider CivilComments-\textsc{Wilds}~\citep{borkan2019nuanced} from \textsc{Wilds} benchmark,  which contains text comments and the task is to classify them for toxicity. We obtain $18$ target datasets depending on whether a comment mentions each of the 8 demographic identities male, female, LGBTQ, Christian, Muslim, other religions, Black, and White. 
}

\emph{MNIST setup.} 
For completeness, we also consider distribution shifts 
on MNIST~\citep{lecun1998mnist} digit classification
as in the prior work~\citep{deng2021labels}. 
We use three real shifted datasets, i.e., USPS~\citep{hull1994database},
SVHN~\citep{netzer2011reading} and QMNIST~\citep{qmnist-2019}. 

\section{Details on the Experimental Setup} \label{app:exp_setup}
\newcommand\tab[1][1cm]{\hspace*{#1}}
All experiments were run on NVIDIA Tesla V100 GPUs. We used PyTorch~\citep{NEURIPS2019a9015} for experiments. 

\textbf{Deep nets{} {}} We consider a 4-layered MLP. The PyTorch code for 4-layer MLP is as follows: 

\texttt{ nn.Sequential(nn.Flatten(), \\
\tab        nn.Linear(input\_dim, 5000, bias=True),\\
\tab        nn.ReLU(),\\
\tab        nn.Linear(5000, 5000, bias=True),\\
\tab        nn.ReLU(),\\
\tab        nn.Linear(5000, 50, bias=True),\\
\tab        nn.ReLU(),\\
\tab        nn.Linear(50, num\_label, bias=True)\\
\tab        )}

We mainly experiment convolutional nets. In particular, we use ResNet18 \citep{he2016deep}, ResNet50, and DenseNet121~\citep{huang2017densely} architectures
with their default implementation in PyTorch. Whenever we initial our models with pre-trained models, we again use default models in PyTorch. 

\textbf{Hyperparameters and Training details {} {}} As mentioned in the main text 
we do not alter the standard training procedures and hyperparameters for each task. 
We present results at final model, however, we observed that the same results extend to an early stopped model as well. For completeness, we include these details below: 

\emph{CIFAR10 and CIFAR100 {} {}} We train DenseNet121 and ResNet18 architectures from scratch. We use SGD training with momentum of $0.9$ for $300$ epochs. We start with learning rate $0.1$ and decay it by multiplying it with $0.1$ every $100$ epochs. We use a weight decay of $5\time 10^-4$. We use batch size of $200$. 
For CIFAR10, we also experiment with the same models pre-trained on ImageNet.

\emph{ImageNet {} {}} For training, we use Adam with a batch size of $64$ and learning rate $0.0001$. Due to huge size of ImageNet, we could only train two models needed for GDE for $10$ epochs. Hence, for relatively small scale experiments, we also perform experiments on ImageNet subset with $200$ classes, which we call as ImageNet-200 with the same training procedure. These $200$ classes are the same classes as in ImageNet-R dataset. This not only allows us to train ImageNet for $50$ epochs but also allows us to use ImageNet-R in our testbed. On the both the datasets, we observe a similar superioriy with ATC. Note that all the models trained here were initialized with 
a pre-trained ImageNet model with the last layer replaced with random weights. 

\emph{FMoW-\textsc{wilds} {} {}} 
For all experiments, we follow \citet{wilds2021} and use two architectures DenseNet121 and ResNet50, both pre-trained on ImageNet. We use the Adam optimizer~\citep{kingma2014adam} with an initial learning rate of $10^{-4}$ that decays by $0.96$ per epoch, and train for $50$ epochs and with a batch size of $64$. 

\update{\emph{RxRx1-\textsc{wilds} {} {}} For all experiments, we follow \citet{wilds2021} and use two architectures DenseNet121 and ResNet50, both pre-trained on ImageNet. We use Adam optimizer with a learning rate of $1e - 4$ and L2-regularization strength of $1e - 5$ with a batch size of 75 for 90 epochs. We linearly increase the learning rate for 10 epochs, then decreasing it following a cosine learning rate schedule. Finally, we pick the model that obtains highest in-distribution validation accuracy. 
}

\update{\emph{Amazon-\textsc{wilds} {} {}} For all experiments, we follow \citet{wilds2021} and finetuned DistilBERT-base-uncased models~\citep{Sanh2019DistilBERTAD}, using
the implementation from \citet{wolf-etal-2020-transformers}, and with the following hyperparameter settings: batch size $8$; learning rate $1e - 5$ with the AdamW optimizer~\citep{loshchilov2017decoupled}; L2-regularization
strength $0.01$; $3$ epochs with early stopping; and a maximum number of tokens of $512$. 
}

\update{\emph{CivilComments-\textsc{wilds} {} {}}  For all experiments, we follow \citet{wilds2021} and fine-tuned DistilBERT-base-uncased models~\citep{Sanh2019DistilBERTAD}, using 
the implementation from \citet{wolf-etal-2020-transformers} and with the following hyperparameter settings: batch size $16$; learning rate $1e - 5$ with the AdamW optimizer~\citep{loshchilov2017decoupled} for 5 epochs; L2-regularization strength $0.01$; and a maximum number of tokens of $300$. }

\emph{Living17 and Nonliving26 from \breeds~{} {}} For training,
we use SGD with a batch size of $128$, weight decay of $10^{-4}$, and learning rate $0.1$. Models were trained until convergence. Models were trained for a total of $450$ epochs, with 10-fold learning rate drops every $150$ epochs. Note that since we want to evaluate models for novel subpopulations no pre-training was used. We train two architectures DenseNet121 and ResNet50. 

\emph{Entity13 and Entity30 from \breeds~{} {}} For training,
we use  SGD with  a batch size of $128$, weight decay of $10^{-4}$, and learning rate $0.1$. Models were trained until convergence. Models were trained for a total of $300$ epochs, with 10-fold learning rate drops every $100$ epochs. Note that since we want to evaluate models for novel subpopulations no pre-training was used. We train two architectures DenseNet121 and ResNet50. 

\emph{MNIST {}{}} For MNIST, we train a MLP described above with SGD with momentum $0.9$ and learning rate $0.01$ for $50$ epochs. We use weight decay of $10^{-5}$ and batch size as $200$.

We have a single number for CivilComments because it is a binary classification task. For multiclass problems, ATC-NE and ATC-MC can lead to different ordering of examples when ranked with the corresponding scoring function. Temperature scaling on top can further alter the ordering of examples. The changed ordering of examples yields different thresholds and different accuracy estimates. However for binary classification, the two scoring functions are the same as entropy (i.e. $p\log(p) + (1-p) \log(p)$) has a one-to-one mapping to the max conf for $p\in [0,1]$. Moreover, temperature scaling also doesn't change the order of points for binary classification problems. Hence for the binary classification problems, both the scoring functions with and without temperature scaling yield the same estimates. We have made this clear in the updated draft. 

\textbf{Implementation for Temperature Scaling {} {}} We use temperature scaling implementation from \url{https://github.com/kundajelab/abstention}. We use validation set (the same we use to obtain ATC threshold or DOC source error estimate) to tune a single temperature parameter. 

\update{\subsection{Details on \figref{fig:intro}~(right) setup}} \label{app:fig1_details}

\update{For vision datasets, we train a DenseNet model with the exception of FCN model for MNIST dataset. For language datasets, we fine-tune a DistilBERT-base-uncased model. For each of these models, we use the exact same setup as described
\secref{app:exp_setup}. Importantly, to obtain errors on the same scale, we rescale all the errors by subtracting the error of Average Confidence method for each model. Results are reported as mean of the re-scaled errors over $4$ seeds.}

\newpage
\section{Supplementary Results} \label{app:results}

\begin{table}[t]
    \begin{adjustbox}{width=\columnwidth,center}
    \centering
    \tabcolsep=0.12cm
    \renewcommand{\arraystretch}{1.2}
    \begin{tabular}{@{}*{13}{c}@{}}
    \toprule
    \multirow{2}{*}{Dataset} & \multirow{2}{*}{Shift} & \multicolumn{2}{c}{IM} & \multicolumn{2}{c}{AC} & \multicolumn{2}{c}{DOC} & GDE & \multicolumn{2}{c}{ATC-MC (Ours)} & \multicolumn{2}{c}{ATC-NE (Ours)} \\
    & & Pre T & Post T & Pre T & Post T  & Pre T & Post T & Post T & Pre T & Post T & Pre T & Post T \\
    \midrule
    \multirow{4}{*}{\parbox{1.2cm}{CIFAR10} }  & \multirow{2}{*}{Natural} & $6.60$ &$5.74$ &$9.88$ &$6.89$ &$7.25$ &$6.07$ &$4.77$ &$3.21$ &$3.02$ &$2.99$ & $\bf 2.85$   \\
    & & $(0.35)$ & $(0.30)$ & $(0.16)$ & $(0.13)$ & $(0.15)$ & $(0.16)$ & $(0.13)$ & $(0.49)$ & $(0.40)$ & $(0.37)$ & $(0.29)$  \\
    & \multirow{2}{*}{Synthetic} & $12.33$ &$10.20$ &$16.50$ &$11.91$ &$13.87$ &$11.08$ &$6.55$ &$4.65$ &$4.25$ &$4.21$ &$\bf 3.87$ \\
    & & $(0.51)$ & $(0.48)$ & $(0.26)$ & $(0.17)$ & $(0.18)$ & $(0.17)$ & $(0.35)$ & $(0.55)$ & $(0.55)$ & $(0.55)$ & $(0.75)$ \\
    \midrule 
    \multirow{2}{*}{CIFAR100} & \multirow{2}{*}{Synthetic} & $13.69$ &$11.51$ &$23.61$ &$13.10$ &$14.60$ &$10.14$ &$9.85$ &$5.50$ &$\bf 4.75$ &$\bf 4.72$ &$4.94$ \\
    & & $(0.55)$ & $(0.41)$ & $(1.16)$ & $(0.80)$ & $(0.77)$ & $(0.64)$ & $(0.57)$ & $(0.70)$ & $(0.73)$ & $(0.74)$ & $(0.74)$ \\
    \midrule  
    \multirow{4}{*}{\parbox{1.8cm}{ImageNet200} }  & \multirow{2}{*}{Natural} & $12.37$ &$8.19$ &$22.07$ &$8.61$ &$15.17$ &$7.81$ &$5.13$ &$4.37$ &$2.04$ &$3.79$ & $\bf 1.45$ \\
    & & $(0.25)$ & $(0.33)$ & $(0.08)$ & $(0.25)$ & $(0.11)$ & $(0.29)$ & $(0.08)$ & $(0.39)$ & $(0.24)$ & $(0.30)$ & $(0.27)$ \\
    & \multirow{2}{*}{Synthetic} & $19.86$ &$12.94$ &$32.44$ &$13.35$ &$25.02$ &$12.38$ &$5.41$ &$5.93$ &$3.09$ &$5.00$ &$\bf 2.68$  \\ 
    & & $(1.38)$ & $(1.81)$ & $(1.00)$ & $(1.30)$ & $(1.10)$ & $(1.38)$ & $(0.89)$ & $(1.38)$ & $(0.87)$ & $(1.28)$ & $(0.45)$ \\
    \midrule 
    \multirow{4}{*}{\parbox{1.8cm}{\centering ImageNet} }  & \multirow{2}{*}{Natural} &
    $7.77$ &$6.50$ &$18.13$ &$6.02$ &$8.13$ &$5.76$ &$6.23$ &$3.88$ &$2.17$ &$2.06$ &$\bf 0.80$ \\
    & & $(0.27)$ & $(0.33)$ & $(0.23)$ & $(0.34)$ & $(0.27)$ & $(0.37)$ & $(0.41)$ & $(0.53)$ & $(0.62)$ & $(0.54)$ & $(0.44)$ \\
        & \multirow{2}{*}{Synthetic} &$13.39$ &$10.12$ &$24.62$ &$8.51$ &$13.55$ &$7.90$ &$6.32$ &$3.34$ &$\bf 2.53$ &$\bf 2.61$ &$4.89$ \\ 
        & & $(0.53)$ & $(0.63)$ & $(0.64)$ & $(0.71)$ & $(0.61)$ & $(0.72)$ & $(0.33)$ & $(0.53)$ & $(0.36)$ & $(0.33)$ & $(0.83)$ \\
    \midrule 
    \multirow{2}{*}{FMoW-\textsc{wilds}} & \multirow{2}{*}{Natural} & $5.53$ &$4.31$ &$33.53$ &$12.84$ &$5.94$ &$4.45$ &$5.74$ &$3.06$ &$\bf 2.70$ &$3.02$ &$\bf 2.72$  \\
    & & $(0.33)$ & $(0.63)$ & $(0.13)$ & $(12.06)$ & $(0.36)$ & $(0.77)$ & $(0.55)$ & $(0.36)$ & $(0.54)$ & $(0.35)$ & $(0.44)$ \\
    \midrule 
    \multirow{2}{*}{RxRx1-\textsc{wilds}} & \multirow{2}{*}{Natural} & $5.80$ &$5.72$ &$7.90$ &$4.84$ &$5.98$ &$5.98$ &$6.03$ &$4.66$ &$\bf 4.56$ &$\bf 4.41$ &$\bf 4.47$ \\
    & & $(0.17)$ & $(0.15)$ & $(0.24)$ & $(0.09)$ & $(0.15)$ & $(0.13)$ & $(0.08)$ & $(0.38)$ & $(0.38)$ & $(0.31)$ & $(0.26)$ \\
    \midrule 
    \multirow{2}{*}{Amazon-\textsc{wilds}} & \multirow{2}{*}{Natural} & $2.40$ &$2.29$ &$8.01$ &$2.38$ &$2.40$ &$2.28$ &$17.87$ &$1.65$ &$\bf 1.62$ &$ \bf 1.60$ &$\bf 1.59$ \\
    & & $(0.08)$ & $(0.09)$ & $(0.53)$ & $(0.17)$ & $(0.09)$ & $(0.09)$ & $(0.18)$ & $(0.06)$ & $(0.05)$ & $(0.14)$ & $(0.15)$ \\
    \midrule 
    \multirow{2}{*}{CivilCom.-\textsc{wilds}} & \multirow{2}{*}{Natural} & $12.64$ &$10.80$ &$16.76$ &$11.03$ &$13.31$ &$10.99$ &$16.65$ & \multicolumn{4}{c}{$\bf 7.14$} \\
    & & $(0.52)$ & $(0.48)$ & $(0.53)$ & $(0.49)$ & $(0.52)$ & $(0.49)$ & $(0.25)$ &  \multicolumn{4}{c}{$(0.41)$}  \\
    \midrule 
    \multirow{2}{*}{MNIST} & \multirow{2}{*}{Natural} &$18.48$ &$15.99$ &$21.17$ &$14.81$ &$20.19$ &$14.56$ &$24.42$ &$5.02$ &$\bf 2.40$ &$3.14$ &$3.50$ \\
    & & $(0.45)$ & $(1.53)$ & $(0.24)$ & $(3.89)$ & $(0.23)$ & $(3.47)$ & $(0.41)$ & $(0.44)$ & $(1.83)$ & $(0.49)$ & $(0.17)$ \\
    \midrule 
    \multirow{4}{*}{\parbox{1.8cm}{\textsc{Entity-13}} } & \multirow{2}{*}{Same} & $16.23$ &$11.14$ &$24.97$ &$10.88$ &$19.08$ &$10.47$ &$10.71$ &$5.39$ &$\bf 3.88$ &$4.58$ &$4.19$ \\
    & & $(0.77)$ & $(0.65)$ & $(0.70)$ & $(0.77)$ & $(0.65)$ & $(0.72)$ & $(0.74)$ & $(0.92)$ & $(0.61)$ & $(0.85)$ & $(0.16)$ \\
    & \multirow{2}{*}{Novel} & $28.53$ &$22.02$ &$38.33$ &$21.64$ &$32.43$ &$21.22$ &$20.61$ &$13.58$ &$10.28$ &$12.25$ &$\bf 6.63$ \\
    & & $(0.82)$ & $(0.68)$ & $(0.75)$ & $(0.86)$ & $(0.69)$ & $(0.80)$ & $(0.60)$ & $(1.15)$ & $(1.34)$ & $(1.21)$ & $(0.93)$ \\
    \midrule 
    \multirow{4}{*}{\parbox{1.8cm}{\textsc{Entity-30}}} & \multirow{2}{*}{Same} & $18.59$ &$14.46$ &$28.82$ &$14.30$ &$21.63$ &$13.46$ &$12.92$ &$9.12$ &$\bf 7.75$ &$8.15$ &$ \bf 7.64$ \\
    & & $(0.51)$ & $(0.52)$ & $(0.43)$ & $(0.71)$ & $(0.37)$ & $(0.59)$ & $(0.14)$ & $(0.62)$ & $(0.72)$ & $(0.68)$ & $(0.88)$ \\
    & \multirow{2}{*}{Novel} & $32.34$ &$26.85$ &$44.02$ &$26.27$ &$36.82$ &$25.42$ &$23.16$ &$17.75$ &$14.30$ &$15.60$ &$\bf 10.57$  \\
    & & $(0.60)$ & $(0.58)$ & $(0.56)$ & $(0.79)$ & $(0.47)$ & $(0.68)$ & $(0.12)$ & $(0.76)$ & $(0.85)$ & $(0.86)$ & $(0.86)$ \\
    \midrule
    \multirow{4}{*}{{\textsc{Nonliving-26}}} & \multirow{2}{*}{Same} & $18.66$ &$17.17$ &$26.39$ &$16.14$ &$19.86$ &$15.58$ &$16.63$ &$10.87$ &$\bf 10.24$ &$10.07$ &$\bf 10.26$ \\
    & & $(0.76)$ & $(0.74)$ & $(0.82)$ & $(0.81)$ & $(0.67)$ & $(0.76)$ & $(0.45)$ & $(0.98)$ & $(0.83)$ & $(0.92)$ & $(1.18)$ \\
    & \multirow{2}{*}{Novel} &$33.43$ &$31.53$ &$41.66$ &$29.87$ &$35.13$ &$29.31$ &$29.56$ &$21.70$ &$20.12$ &$19.08$ &$\bf 18.26$  \\ 
    & & $(0.67)$ & $(0.65)$ & $(0.67)$ & $(0.71)$ & $(0.54)$ & $(0.64)$ & $(0.21)$ & $(0.86)$ & $(0.75)$ & $(0.82)$ & $(1.12)$ \\
    \midrule 
    \multirow{4}{*}{\parbox{1.8cm}{\textsc{Living-17}}} & \multirow{2}{*}{Same} & $12.63$ &$11.05$ &$18.32$ &$10.46$ &$14.43$ &$10.14$ &$9.87$ &$4.57$ &$\bf  3.95$ &$\bf 3.81$ &$4.21$   \\
    & & $(1.25)$ & $(1.20)$ & $(1.01)$ & $(1.12)$ & $(1.11)$ & $(1.16)$ & $(0.61)$ & $(0.71)$ & $(0.48)$ & $(0.22)$ & $(0.53)$ \\
    & \multirow{2}{*}{Novel} & $29.03$ &$26.96$ &$35.67$ &$26.11$ &$31.73$ &$25.73$ &$23.53$ &$16.15$ &$14.49$ &$12.97$ &$\bf 11.39$ \\ 
    & & $(1.44)$ & $(1.38)$ & $(1.09)$ & $(1.27)$ & $(1.19)$ & $(1.35)$ & $(0.52)$ & $(1.36)$ & $(1.46)$ & $(1.52)$ & $(1.72)$ \\
    \bottomrule 
    \end{tabular}
  \end{adjustbox}  
    \vspace{-5pt}
    \caption{
     \update{\emph{Mean Absolute estimation Error (MAE) results for different datasets in our setup grouped by the nature of shift.} 
    `Same' refers to same subpopulation shifts and `Novel' refers novel subpopulation shifts. \update{We include details about the target sets considered in each shift in \tabref{table:dataset}.} 
    Post T denotes use of TS calibration on source.  
    For language datasets, we use DistilBERT-base-uncased, for vision dataset we report results with DenseNet model with the exception of MNIST where we use FCN. 
    Across all datasets, we observe that ATC achieves superior performance (lower MAE is better).  
    For GDE post T and pre T estimates match since TS doesn't alter the argmax prediction. Results reported by aggregating MAE numbers over $4$ different seeds. Values in parenthesis (i.e., $(\cdot)$) denote standard deviation values.}
    }\label{table:error_estimation_std}
\end{table}

\begin{table}[t]
    \begin{adjustbox}{width=\columnwidth,center}
    \centering
    \tabcolsep=0.12cm
    \renewcommand{\arraystretch}{1.2}
    \begin{tabular}{@{}*{13}{c}@{}}
    \toprule
    \multirow{2}{*}{Dataset} & \multirow{2}{*}{Shift} & \multicolumn{2}{c}{IM} & \multicolumn{2}{c}{AC} & \multicolumn{2}{c}{DOC} & GDE & \multicolumn{2}{c}{ATC-MC (Ours)} & \multicolumn{2}{c}{ATC-NE (Ours)} \\
    & & Pre T & Post T & Pre T & Post T  & Pre T & Post T & Post T & Pre T & Post T & Pre T & Post T \\
    \midrule
    \multirow{4}{*}{\parbox{1.2cm}{CIFAR10} }  & \multirow{2}{*}{Natural} & $7.14$ & $6.20$ & $10.25$ & $7.06$ & $7.68$ & $6.35$ & $5.74$ & $4.02$ & $3.85$ & $3.76$ & $\bf 3.38$    \\
    & & $(0.14)$ & $(0.11)$ & $(0.31)$ & $(0.33)$ & $(0.28)$ & $(0.27)$ & $(0.25)$ & $(0.38)$ & $(0.30)$ & $(0.33)$ & $(0.32)$ \\
    & \multirow{2}{*}{Synthetic} & $12.62$ & $10.75$ & $16.50$ & $11.91$ & $13.93$ & $11.20$ & $7.97$ & $5.66$ & $5.03$ & $4.87$ & $\bf 3.63$ \\
    & & $(0.76)$ & $(0.71)$ & $(0.28)$ & $(0.24)$ & $(0.29)$ & $(0.28)$ & $(0.13)$ & $(0.64)$ & $(0.71)$ & $(0.71)$ & $(0.62)$ \\
    \midrule 
    \multirow{2}{*}{CIFAR100} & \multirow{2}{*}{Synthetic} & $12.77$ & $12.34$ & $16.89$ & $12.73$ & $11.18$ & $9.63$ & $12.00$ & $5.61$ & $\bf 5.55$ & $5.65$ & $5.76$ \\
    & & $(0.43)$ & $(0.68)$ & $(0.20)$ & $(2.59)$ & $(0.35)$ & $(1.25)$ & $(0.48)$ & $(0.51)$ & $(0.55)$ & $(0.35)$ & $(0.27)$  \\
    \midrule  
    \multirow{4}{*}{\parbox{1.8cm}{ImageNet200} } & \multirow{2}{*}{Natural} & $12.63$ & $7.99$ & $23.08$ & $7.22$ & $15.40$ & $6.33$ & $5.00$ & $4.60$ & $1.80$ & $4.06$ & $\bf 1.38$ \\
    & & $(0.59)$ & $(0.47)$ & $(0.31)$ & $(0.22)$ & $(0.42)$ & $(0.24)$ & $(0.36)$ & $(0.63)$ & $(0.17)$ & $(0.69)$ & $(0.29)$  \\
    & \multirow{2}{*}{Synthetic} & $20.17$ & $11.74$ & $33.69$ & $9.51$ & $25.49$ & $8.61$ & $4.19$ & $5.37$ & $2.78$ & $4.53$ & $3.58$   \\ 
    & & $(0.74)$ & $(0.80)$ & $(0.73)$ & $(0.51)$ & $(0.66)$ & $(0.50)$ & $(0.14)$ & $(0.88)$ & $(0.23)$ & $(0.79)$ & $(0.33)$  \\
    \midrule 
    \multirow{4}{*}{\parbox{1.8cm}{\centering ImageNet} }  & \multirow{2}{*}{Natural}
   & $8.09$ & $6.42$ & $21.66$ & $5.91$ & $8.53$ & $5.21$ & $5.90$ & $3.93$ & $1.89$ & $2.45$ & $\bf 0.73$  \\
    & & $(0.25)$ & $(0.28)$ & $(0.38)$ & $(0.22)$ & $(0.26)$ & $(0.25)$ & $(0.44)$ & $(0.26)$ & $(0.21)$ & $(0.16)$ & $(0.10)$  \\
        & \multirow{2}{*}{Synthetic} & $13.93$ & $9.90$ & $28.05$ & $7.56$ & $13.82$ & $6.19$ & $6.70$ & $3.33$ & $2.55$ & $2.12$ & $5.06$  \\ 
        & & $(0.14)$ & $(0.23)$ & $(0.39)$ & $(0.13)$ & $(0.31)$ & $(0.07)$ & $(0.52)$ & $(0.25)$ & $(0.25)$ & $(0.31)$ & $(0.27)$  \\
    \midrule 
    \multirow{2}{*}{FMoW-\textsc{wilds}} & \multirow{2}{*}{Natural} & $5.15$ & $3.55$ & $34.64$ & $5.03$ & $5.58$ & $3.46$ & $5.08$ & $2.59$ & $2.33$ & $2.52$ & $\bf 2.22$  \\
    & & $(0.19)$ & $(0.41)$ & $(0.22)$ & $(0.29)$ & $(0.17)$ & $(0.37)$ & $(0.46)$ & $(0.32)$ & $(0.28)$ & $(0.25)$ & $(0.30)$  \\
    \midrule 
    \multirow{2}{*}{RxRx1-\textsc{wilds}} & \multirow{2}{*}{Natural}  & $6.17$ & $6.11$ & $21.05$ & $\bf 5.21$ & $6.54$ & $6.27$ & $6.82$ & $5.30$ & $\bf 5.20$ & $\bf 5.19$ & $5.63$ \\
   & & $(0.20)$ & $(0.24)$ & $(0.31)$ & $(0.18)$ & $(0.21)$ & $(0.20)$ & $(0.31)$ & $(0.30)$ & $(0.44)$ & $(0.43)$ & $(0.55)$  \\
    \midrule 
    \multirow{4}{*}{\parbox{1.8cm}{\textsc{Entity-13}} } & \multirow{2}{*}{Same} & $18.32$ & $14.38$ & $27.79$ & $13.56$ & $20.50$ & $13.22$ & $16.09$ & $9.35$ & $7.50$ & $7.80$ & $\bf 6.94$ \\
   & & $(0.29)$ & $(0.53)$ & $(1.18)$ & $(0.58)$ & $(0.47)$ & $(0.58)$ & $(0.84)$ & $(0.79)$ & $(0.65)$ & $(0.62)$ & $(0.71)$ 
 \\
    & \multirow{2}{*}{Novel} & $28.82$ & $24.03$ & $38.97$ & $22.96$ & $31.66$ & $22.61$ & $25.26$ & $17.11$ & $13.96$ & $14.75$ & $\bf 9.94$ \\
    & & $(0.30)$ & $(0.55)$ & $(1.32)$ & $(0.59)$ & $(0.54)$ & $(0.58)$ & $(1.08)$ & $(0.84)$ & $(0.93)$ & $(0.64)$ & $(0.78)$ \\
    \midrule 
    \multirow{4}{*}{\parbox{1.8cm}{\textsc{Entity-30}}} & \multirow{2}{*}{Same} & $16.91$ & $14.61$ & $26.84$ & $14.37$ & $18.60$ & $13.11$ & $13.74$ & $8.54$ & $7.94$ & $\bf 7.77$ & $8.04$  \\
    & & $(1.33)$ & $(1.11)$ & $(2.15)$ & $(1.34)$ & $(1.69)$ & $(1.30)$ & $(1.07)$ & $(1.47)$ & $(1.38)$ & $(1.44)$ & $(1.51)$  \\
    & \multirow{2}{*}{Novel} & $28.66$ & $25.83$ & $39.21$ & $25.03$ & $30.95$ & $23.73$ & $23.15$ & $15.57$ & $13.24$ & $12.44$ & $\bf 11.05$   \\
    & & $(1.16)$ & $(0.88)$ & $(2.03)$ & $(1.11)$ & $(1.64)$ & $(1.11)$ & $(0.51)$ & $(1.44)$ & $(1.15)$ & $(1.26)$ & $(1.13)$  \\
    \midrule
    \multirow{4}{*}{{\textsc{Nonliving-26}}} & \multirow{2}{*}{Same} & $17.43$ & $15.95$ & $27.70$ & $15.40$ & $18.06$ & $14.58$ & $16.99$ & $10.79$ & $\bf 10.13$ & $\bf 10.05$ & $10.29$ \\
    & & $(0.90)$ & $(0.86)$ & $(0.90)$ & $(0.69)$ & $(1.00)$ & $(0.78)$ & $(1.25)$ & $(0.62)$ & $(0.32)$ & $(0.46)$ & $(0.79)$  \\
    & \multirow{2}{*}{Novel} & $29.51$ & $27.75$ & $40.02$ & $26.77$ & $30.36$ & $25.93$ & $27.70$ & $19.64$ & $17.75$ & $16.90$ & $\bf 15.69$   \\ 
    & & $(0.86)$ & $(0.82)$ & $(0.76)$ & $(0.82)$ & $(0.95)$ & $(0.80)$ & $(1.42)$ & $(0.68)$ & $(0.53)$ & $(0.60)$ & $(0.83)$ \\
    \midrule 
    \multirow{4}{*}{\parbox{1.8cm}{\textsc{Living-17}}} & \multirow{2}{*}{Same} & $14.28$ & $12.21$ & $23.46$ & $11.16$ & $15.22$ & $10.78$ & $10.49$ & $4.92$ & $\bf 4.23$ & $\bf 4.19$ & $4.73$   \\
    & & $(0.96)$ & $(0.93)$ & $(1.16)$ & $(0.90)$ & $(0.96)$ & $(0.99)$ & $(0.97)$ & $(0.57)$ & $(0.42)$ & $(0.35)$ & $(0.24)$ \\
    & \multirow{2}{*}{Novel} & $28.91$ & $26.35$ & $38.62$ & $24.91$ & $30.32$ & $24.52$ & $22.49$ & $15.42$ & $13.02$ & $12.29$ & $\bf 10.34$ \\ 
    & & $(0.66)$ & $(0.73)$ & $(1.01)$ & $(0.61)$ & $(0.59)$ & $(0.74)$ & $(0.85)$ & $(0.59)$ & $(0.53)$ & $(0.73)$ & $(0.62)$  \\
    \bottomrule 
    \end{tabular}
  \end{adjustbox}  
    \caption{
     \update{\emph{Mean Absolute estimation Error (MAE) results for different datasets in our setup grouped by the nature of shift for ResNet model.} 
    `Same' refers to same subpopulation shifts and `Novel' refers novel subpopulation shifts. \update{We include details about the target sets considered in each shift in \tabref{table:dataset}.} 
    Post T denotes use of TS calibration on source.  
    Across all datasets, we observe that ATC achieves superior performance (lower MAE is better).  
    For GDE post T and pre T estimates match since TS doesn't alter the argmax prediction. Results reported by aggregating MAE numbers over $4$ different seeds. Values in parenthesis (i.e., $(\cdot)$) denote standard deviation values.}
    }\label{table:error_estimation_resnet_std}
\end{table}

\subsection{CIFAR pretraining Ablation} \label{app:cifar_result}
\begin{figure}[H]
    \centering
    \subfigure[]{\includegraphics[width=0.4\linewidth]{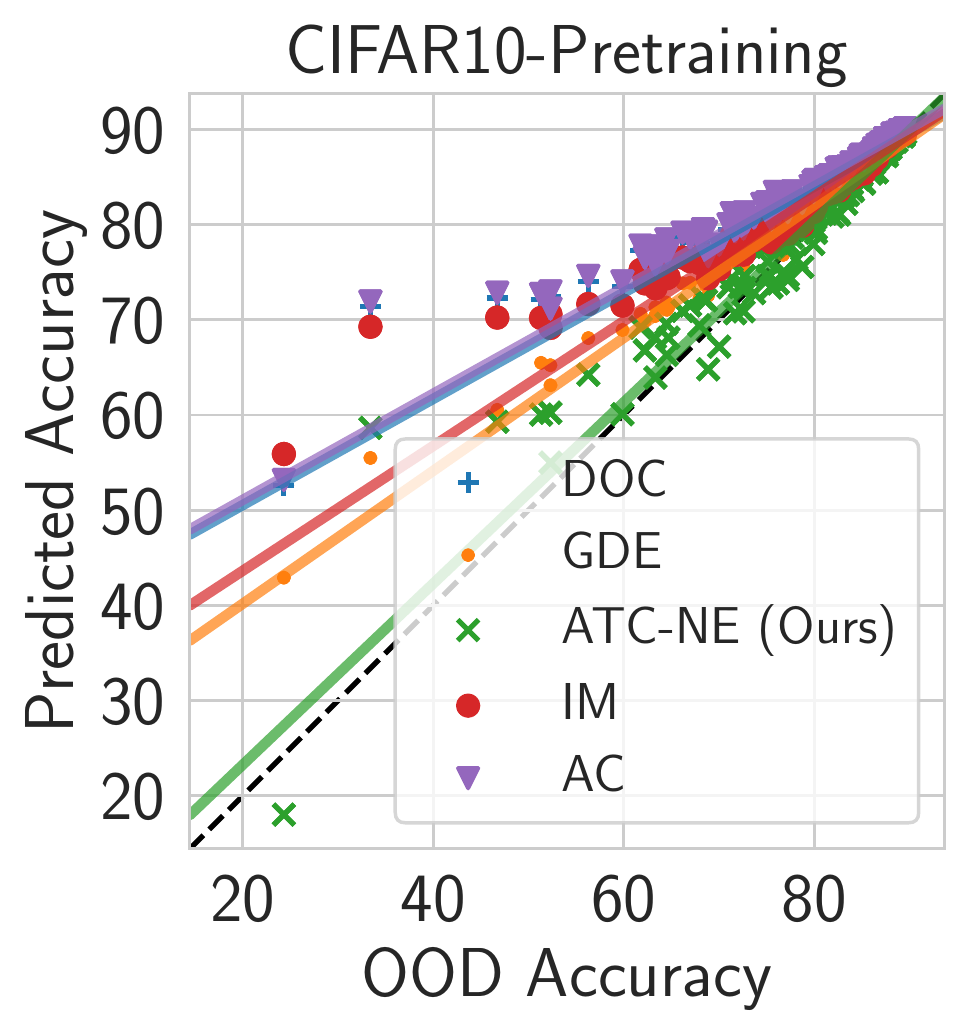}} \hfil
    \caption[]{Results with a pretrained DenseNet121 model on CIFAR10. We observe similar behaviour as that with a model trained from scratch.} 
    \label{fig:cifar_pretraiining}
  \end{figure}  

\subsection{\breeds~ results with regression model} \label{app:breeeds_ablation}

\begin{figure}[H]
    \centering
    \subfigure{\includegraphics[width=0.32\linewidth]{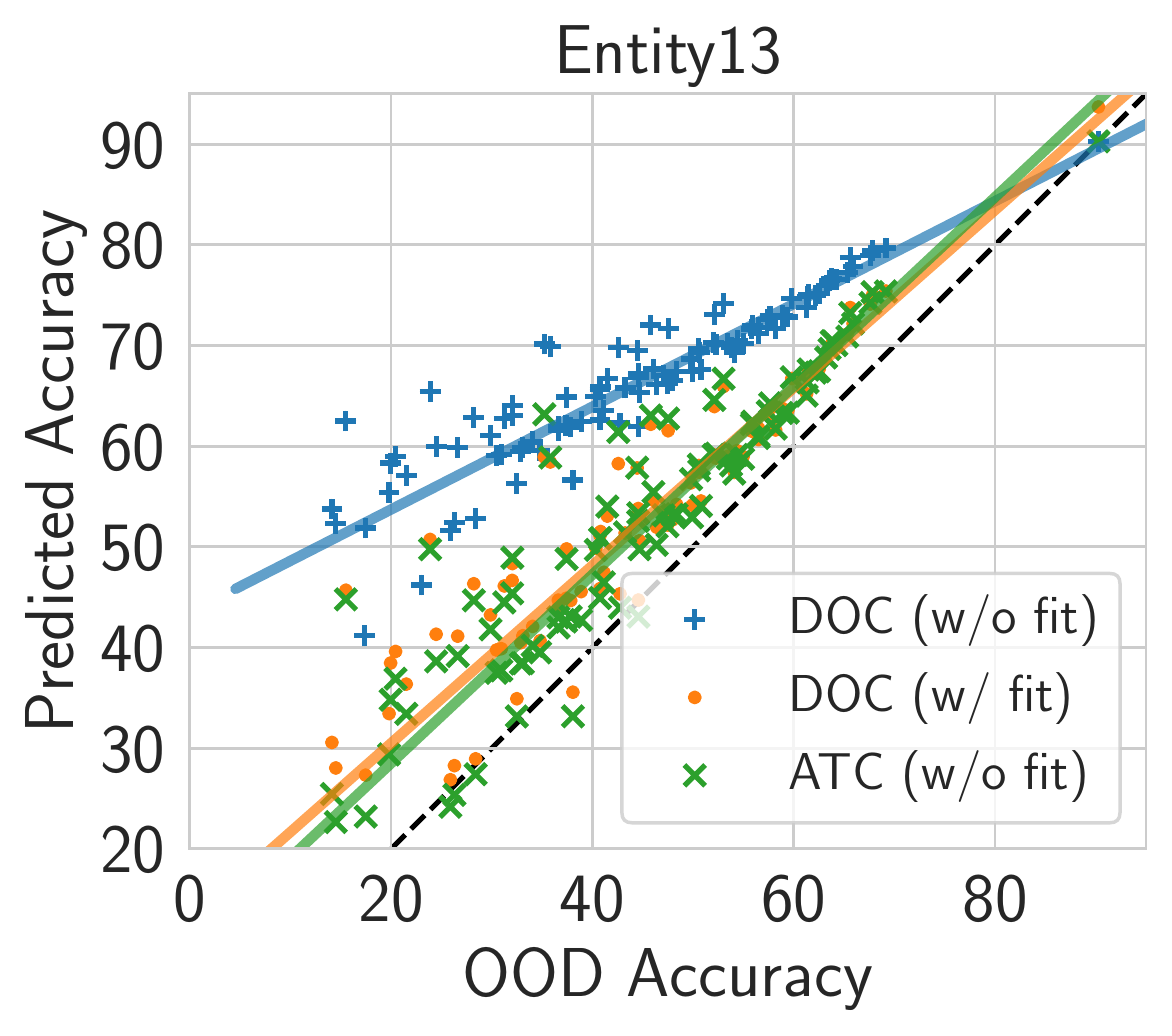}} \hfil
    \subfigure{\includegraphics[width=0.32\linewidth]{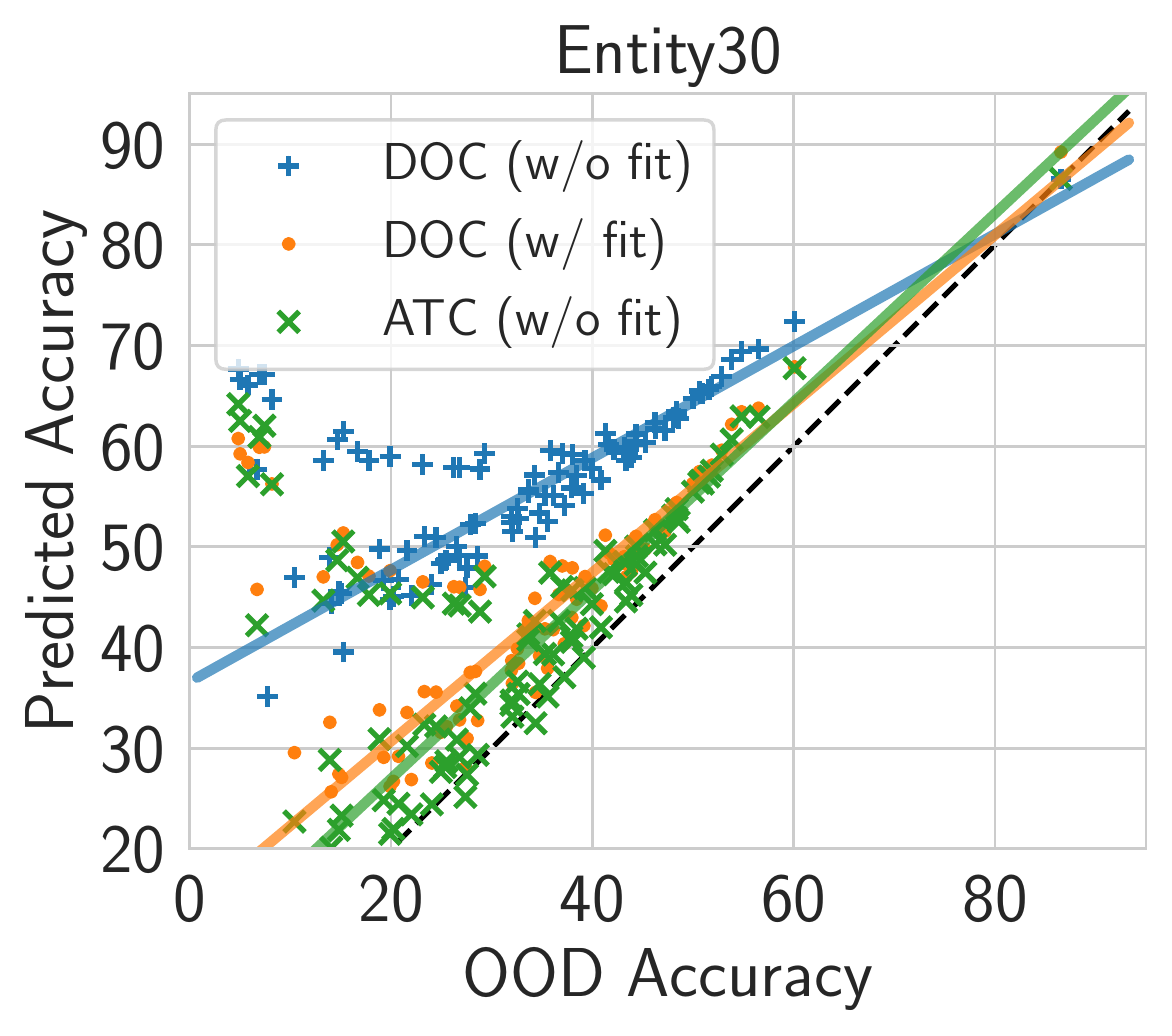}} \hfil   
    \subfigure{\includegraphics[width=0.32\linewidth]{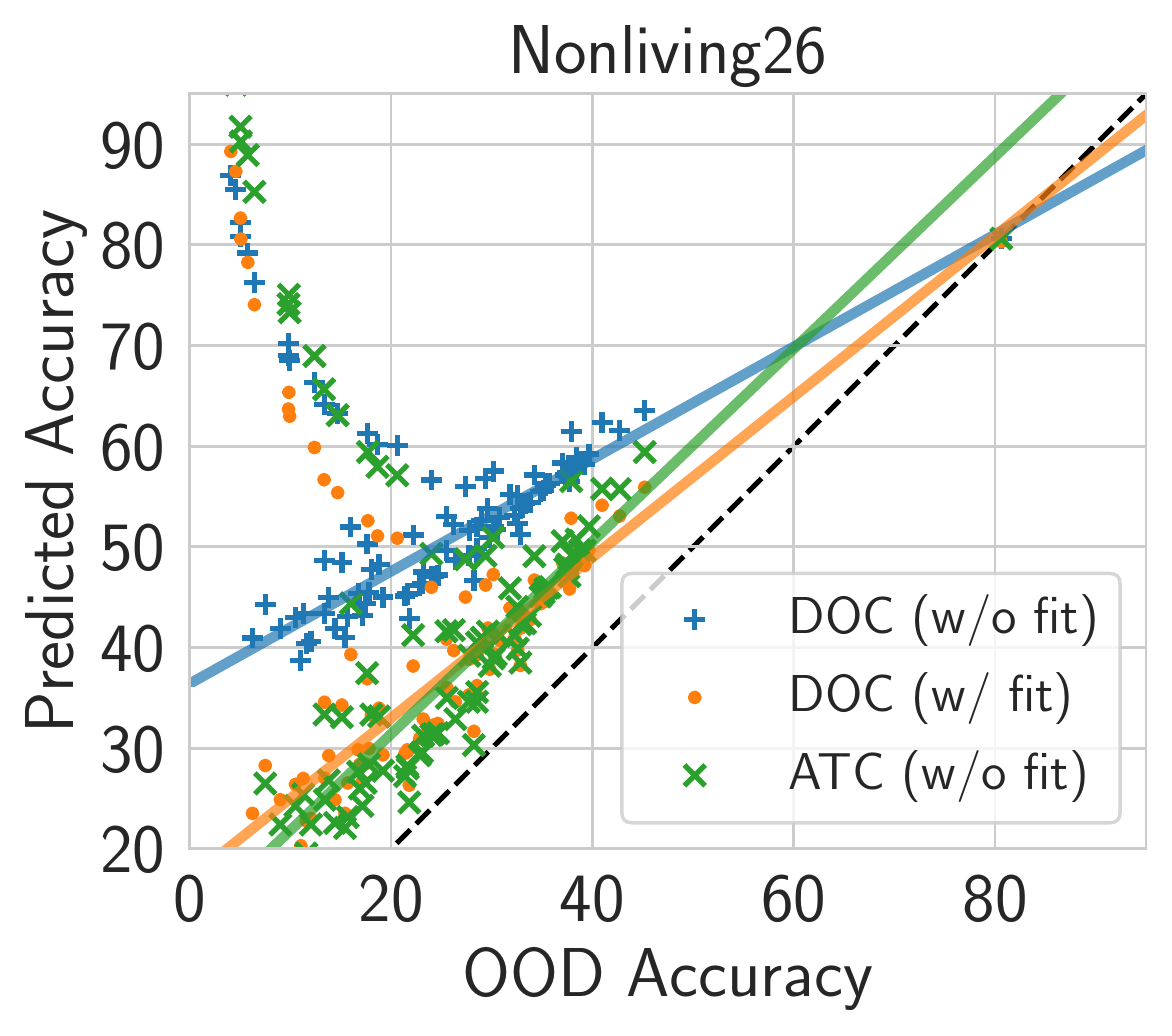}} 
    \caption{Scatter plots for DOC with linear fit. Results parallel to \figref{fig:ablation}(Middle) on other \breeds~dataset.}
    \vspace{-10pt}
    \label{fig:ablation_breeds}
\end{figure}  
  
\begin{table}[H]
    \centering
    \small
    \tabcolsep=0.12cm
    \renewcommand{\arraystretch}{1.2}
    \begin{tabular}{@{}*{4}{c}@{}}
    \toprule
    {Dataset}  &  {DOC (w/o fit)} & {DOC (w fit)}  & {ATC-MC (Ours) (w/o fit)}  \\
    \midrule
    \textsc{Living-17} & $24.32$ & $13.65$  &$\bf 10.07$\\
    \midrule
    \textsc{Nonliving-26} & $29.91$ &$\bf 18.13$ &$19.37$ \\
    \midrule
    \textsc{Entity-13} & $22.18$ & $8.63$ &$8.01$ \\
    \midrule
    \textsc{Entity-30} & $24.71$ & $12.28$ &$\bf 10.21$\\
    \bottomrule 
    \end{tabular}
    \caption{
    \emph{Mean Absolute estimation Error (MAE) results for BREEDs datasets with novel populations in our setup.} 
    After fitting a robust linear model for DOC on same subpopulation, we show predicted accuracy on different subpopulations with fine-tuned DOC (i.e., DOC (w/ fit)) and compare with ATC without any regression model, i.e., ATC (w/o fit).  While observe substantial improvements in MAE from DOC (w/o fit) to DOC (w/ fit), ATC (w/o fit) continues to outperform even DOC (w/ fit).}\label{table:breeds_regression}
\end{table}

\newpage

\begin{figure}[H]
    \centering
    \subfigure{\includegraphics[width=0.32\linewidth]{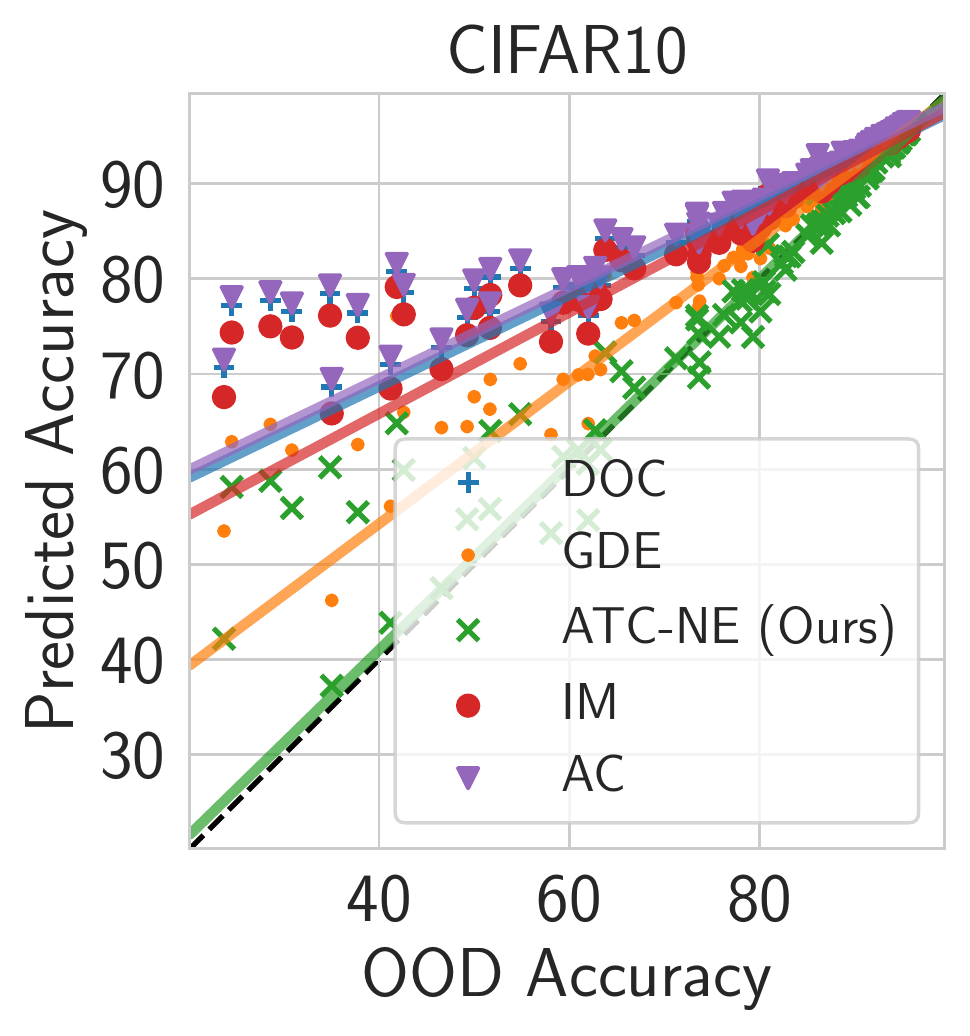}} \hfil
    \subfigure{\includegraphics[width=0.32\linewidth]{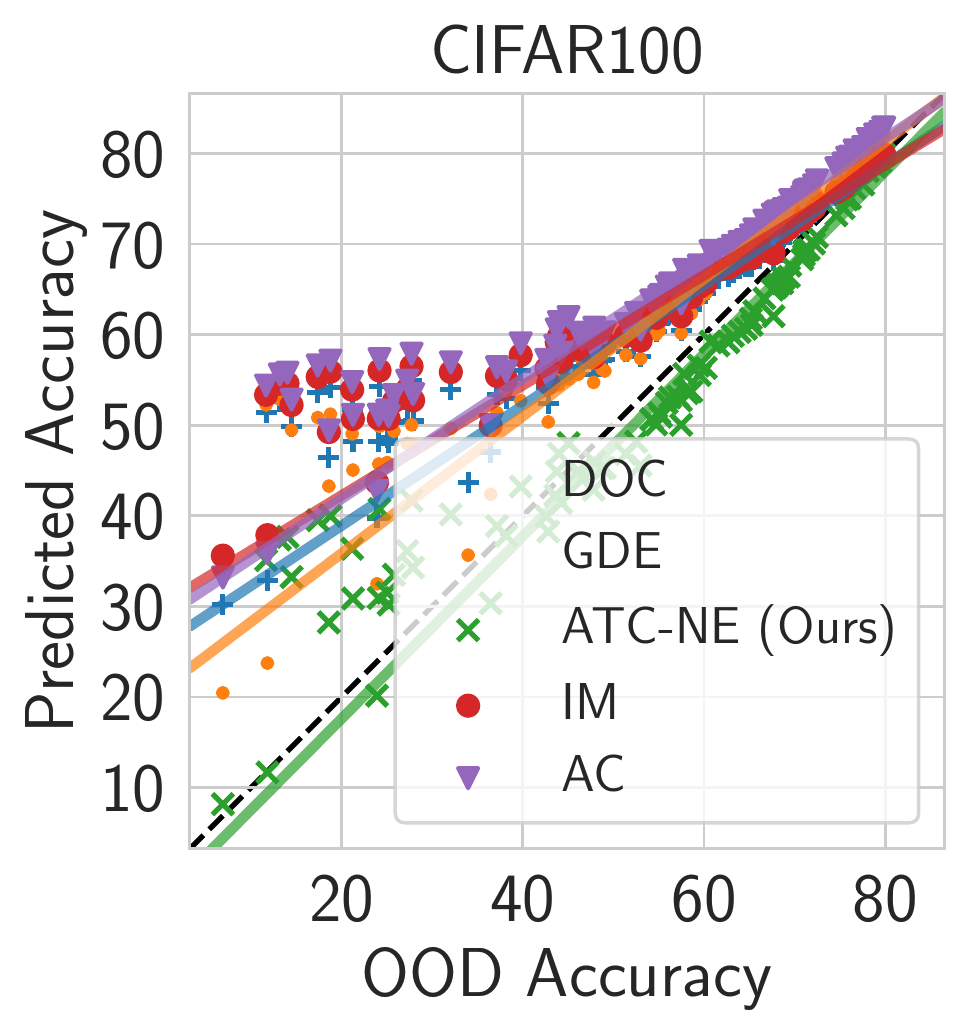}} \hfil
    \subfigure{\includegraphics[width=0.32\linewidth]{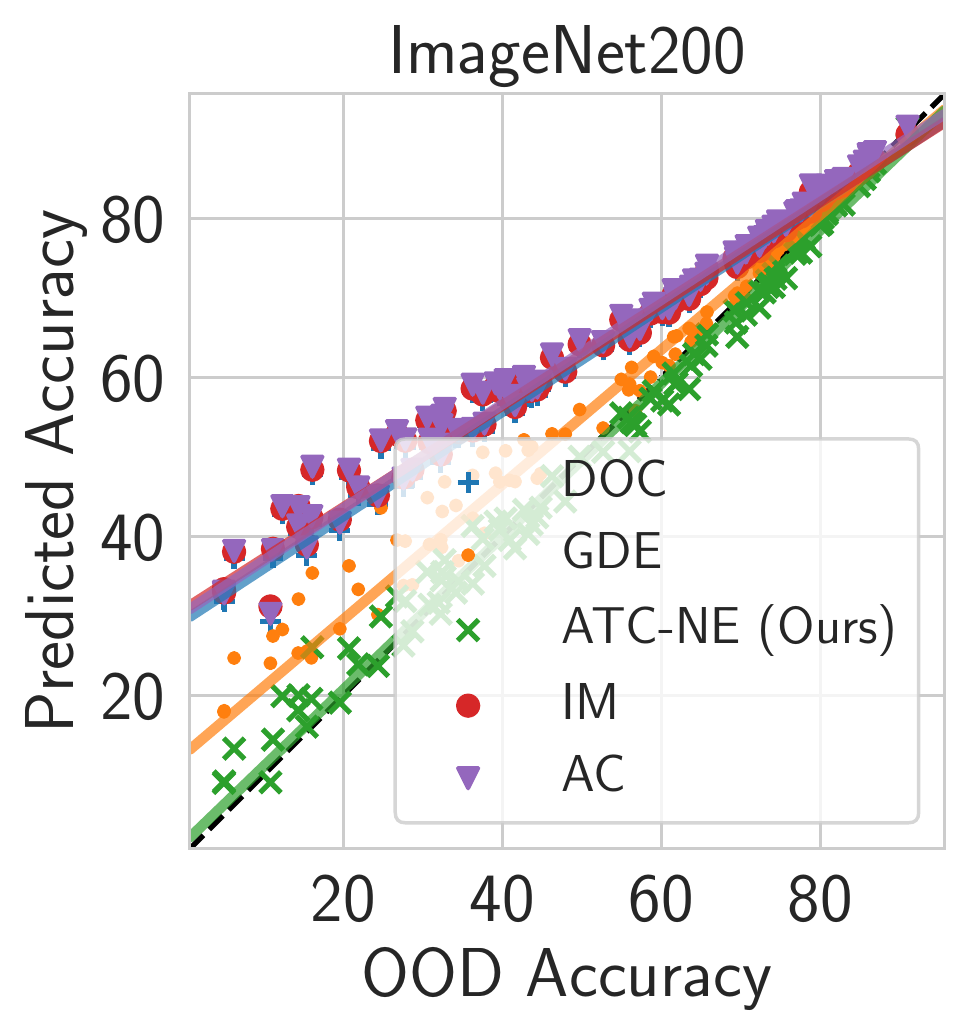}} \hfil   
    \subfigure{\includegraphics[width=0.32\linewidth]{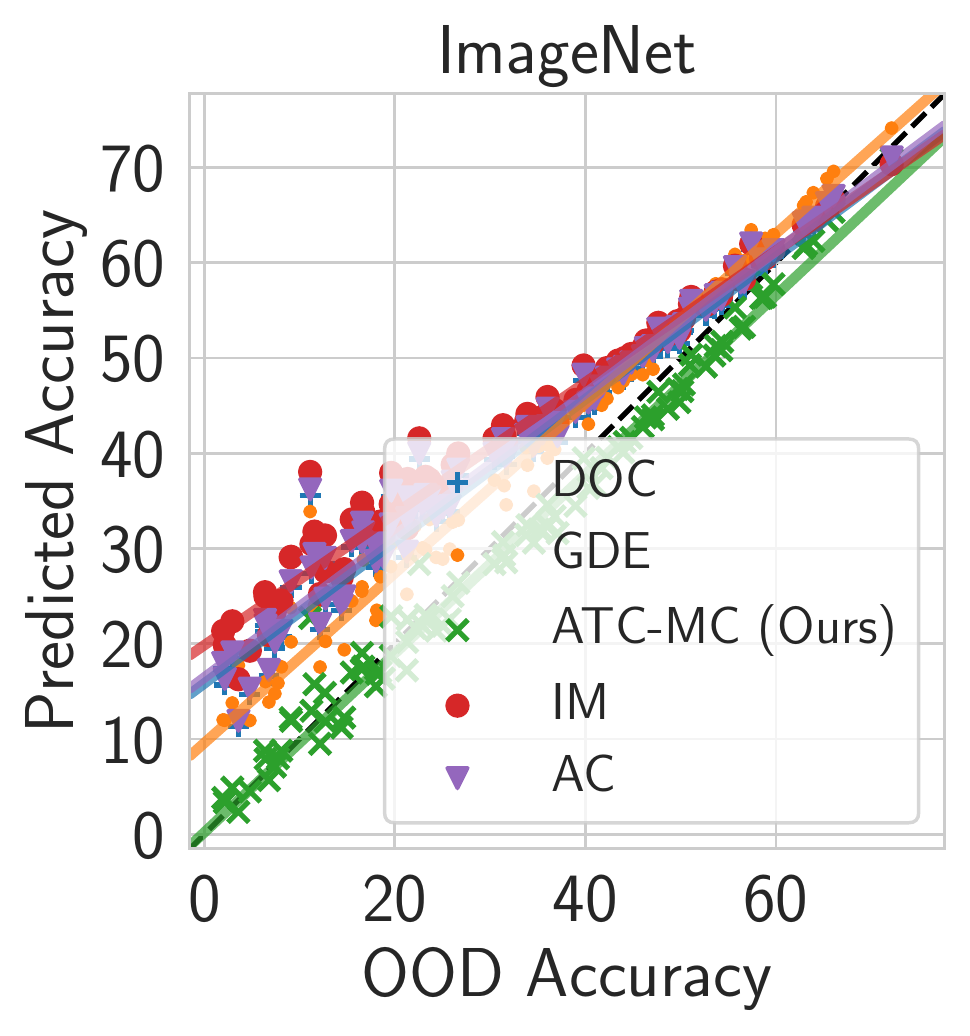}} \hfil
    \subfigure{\includegraphics[width=0.32\linewidth]{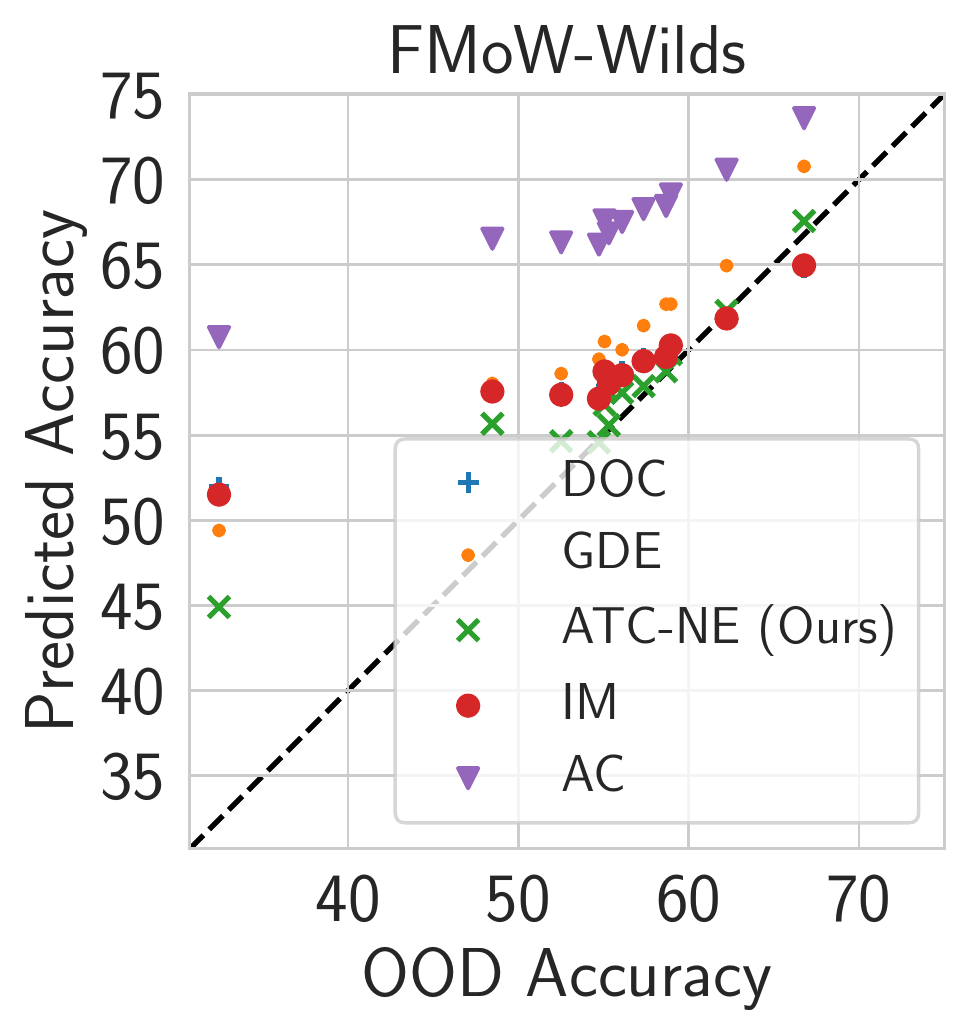}} \hfil
    \subfigure{\includegraphics[width=0.32\linewidth]{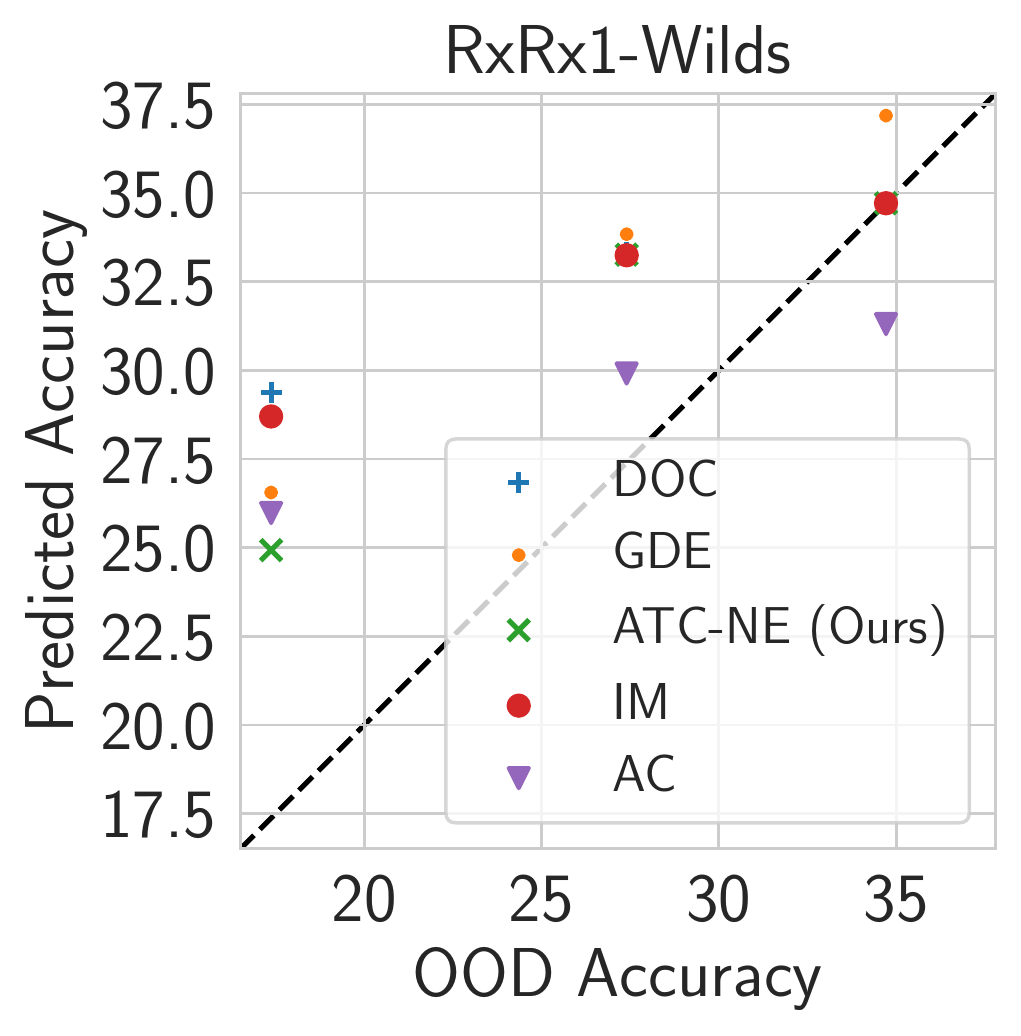}} \hfil
    \subfigure{\includegraphics[width=0.32\linewidth]{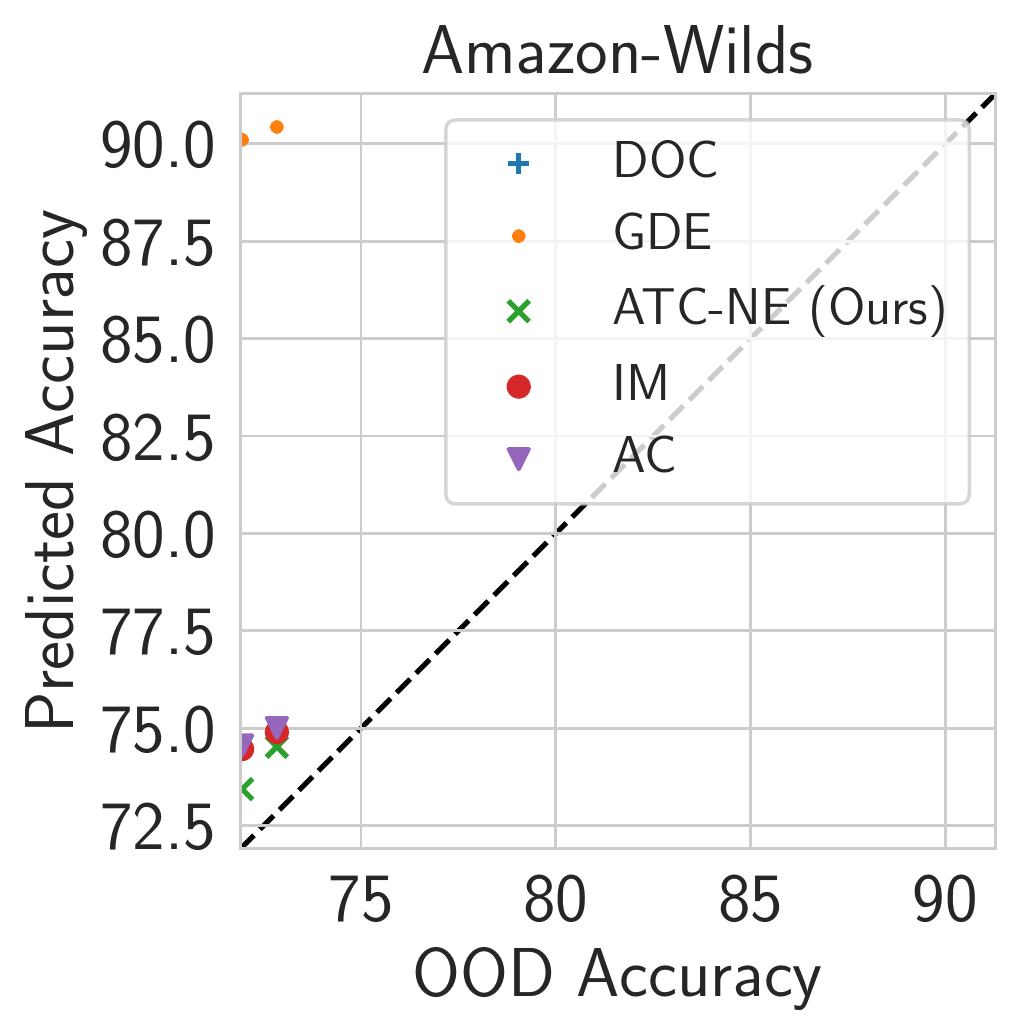}} \hfil
    \subfigure{\includegraphics[width=0.32\linewidth]{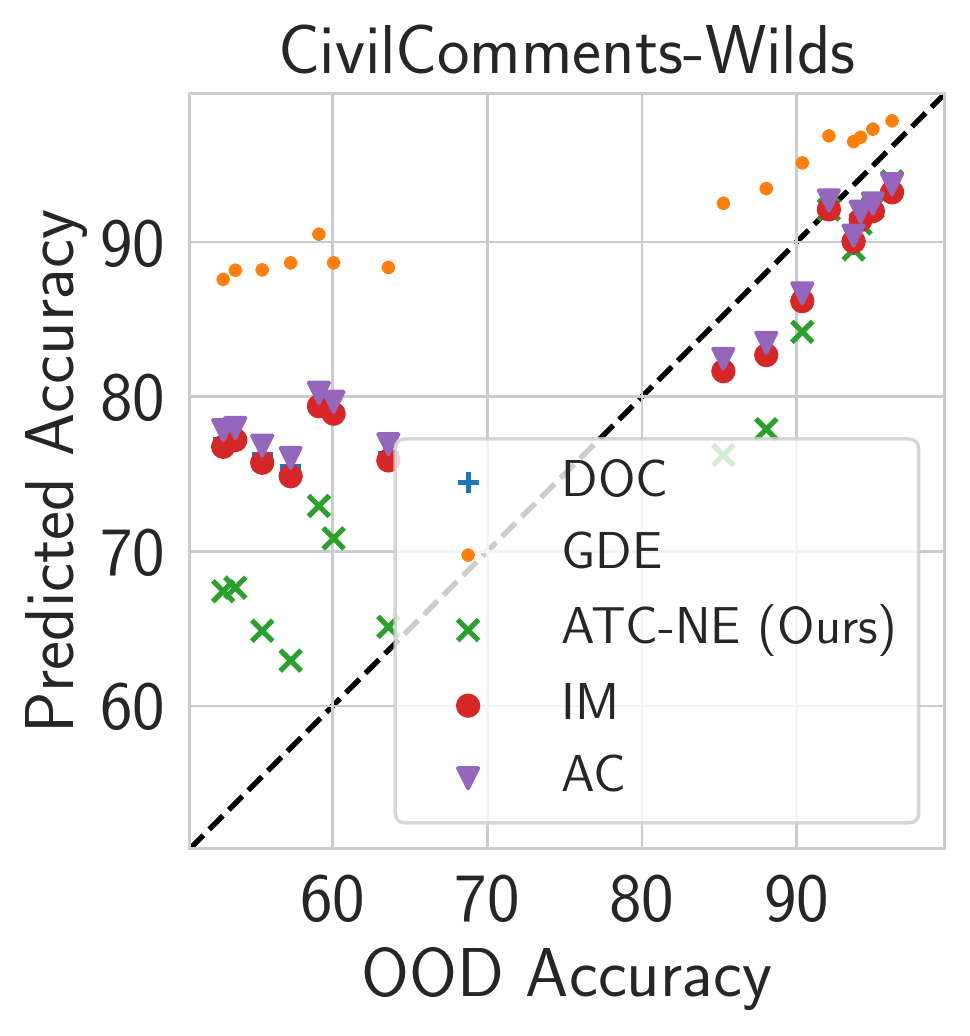}} \hfil
    \subfigure{\includegraphics[width=0.32\linewidth]{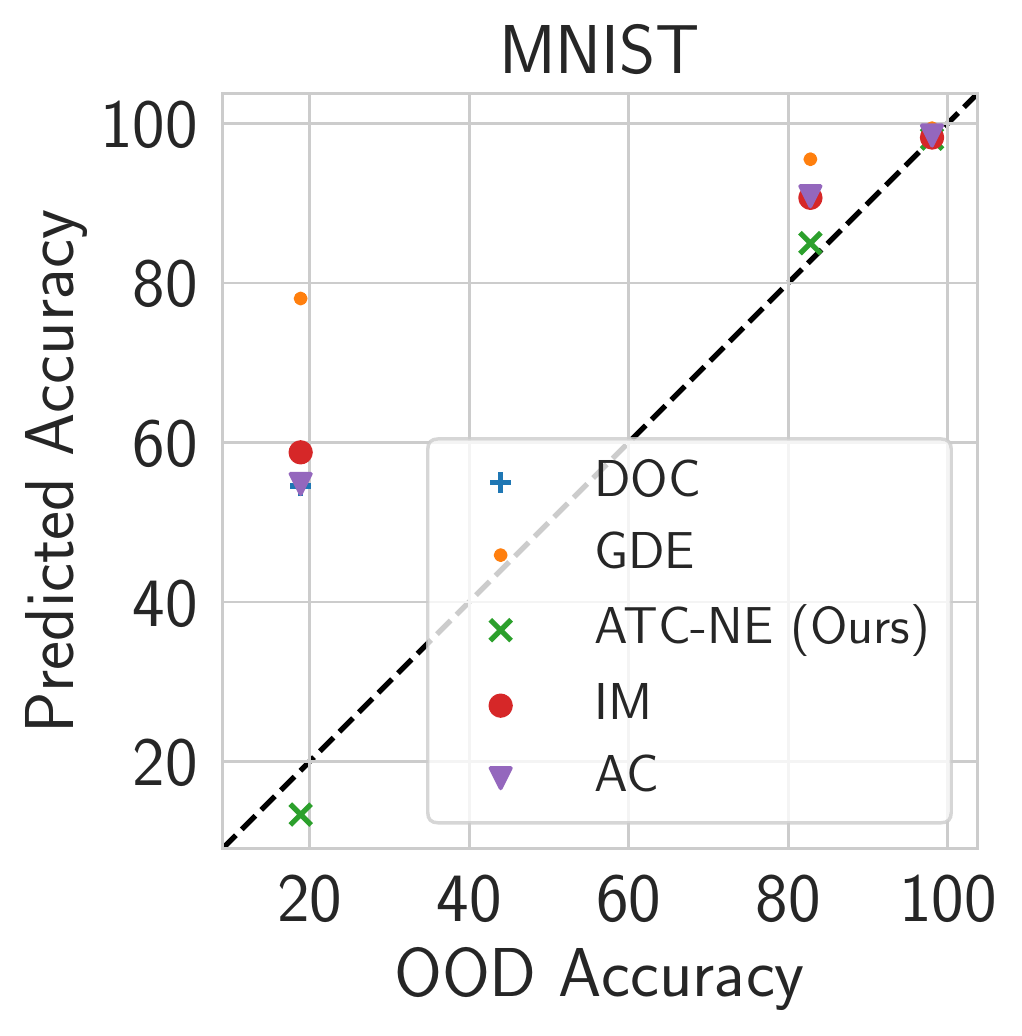}} \hfil
    \subfigure{\includegraphics[width=0.32\linewidth]{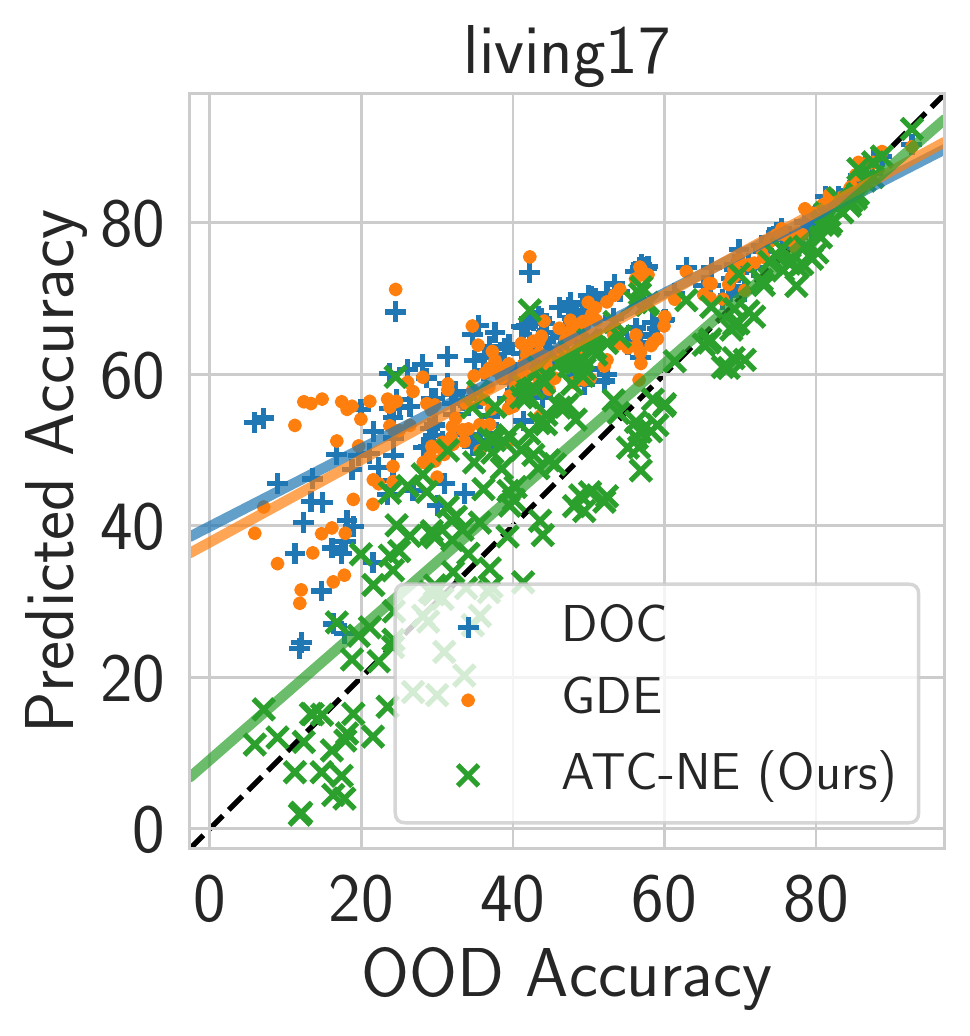}} \hfil
    \subfigure{\includegraphics[width=0.32\linewidth]{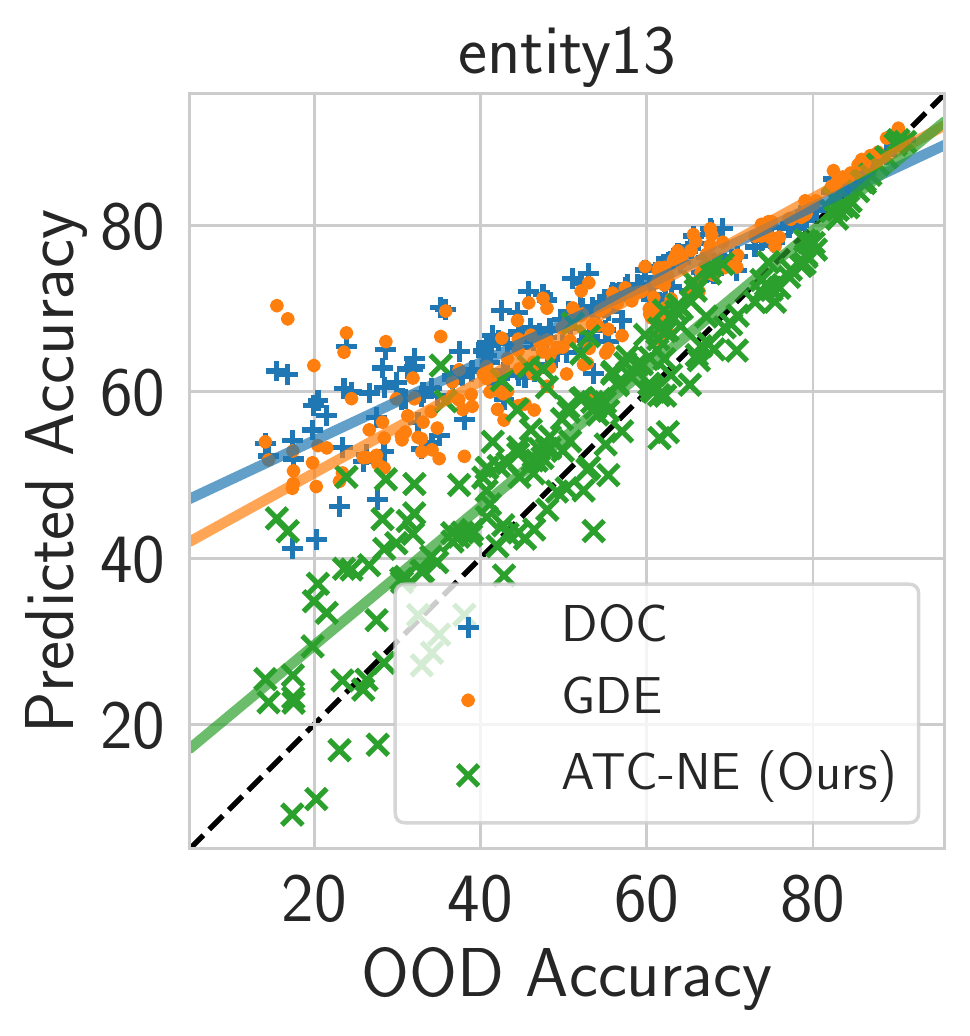}} \hfil   
    \subfigure{\includegraphics[width=0.32\linewidth]{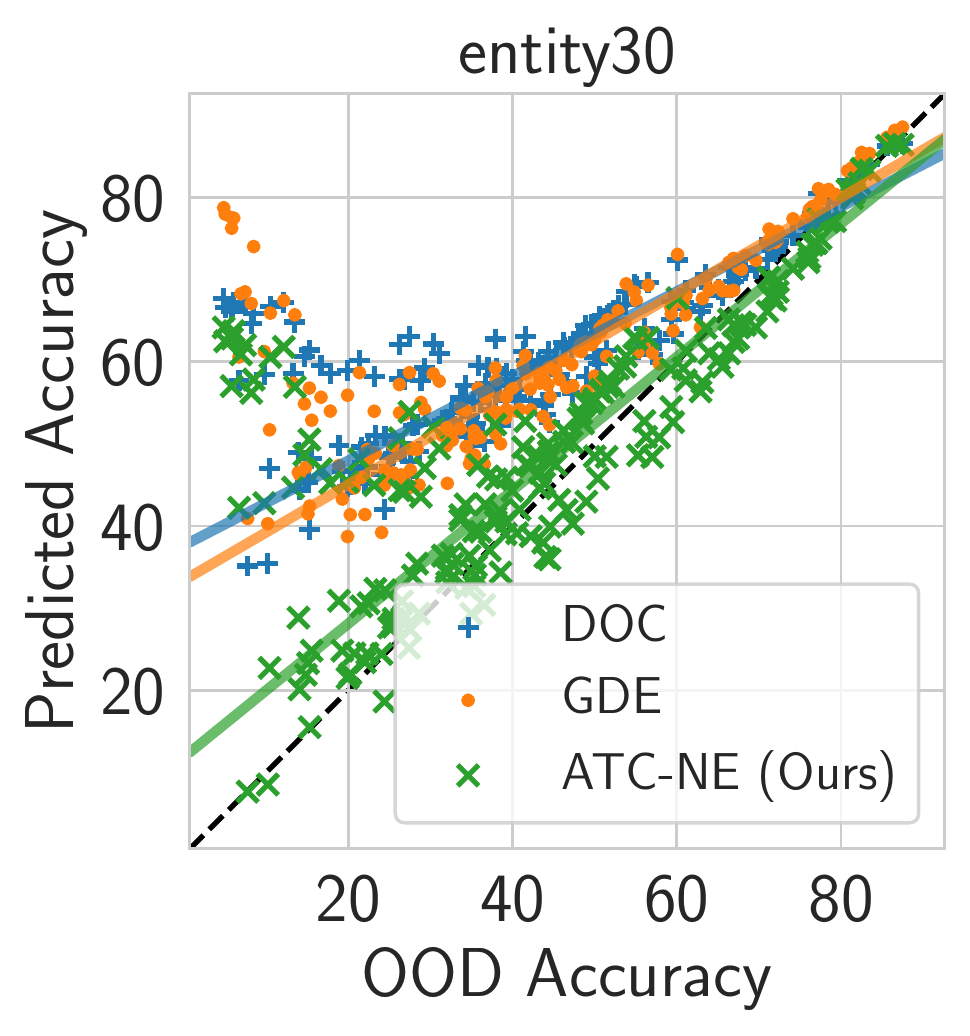}} \hfil
    \caption{\update{Scatter plot of predicted accuracy versus (true) OOD accuracy. For vision datasets except MNIST we use a  DenseNet121 model. For MNIST, we use a FCN. For language datasets, we use DistillBert-base-uncased. Results reported by aggregating accuracy numbers over $4$ different seeds.} } 
    \label{fig:scatter_plot_densenet}
  \end{figure}  
  
\begin{figure}[H]
    \centering
    \subfigure{\includegraphics[width=0.32\linewidth]{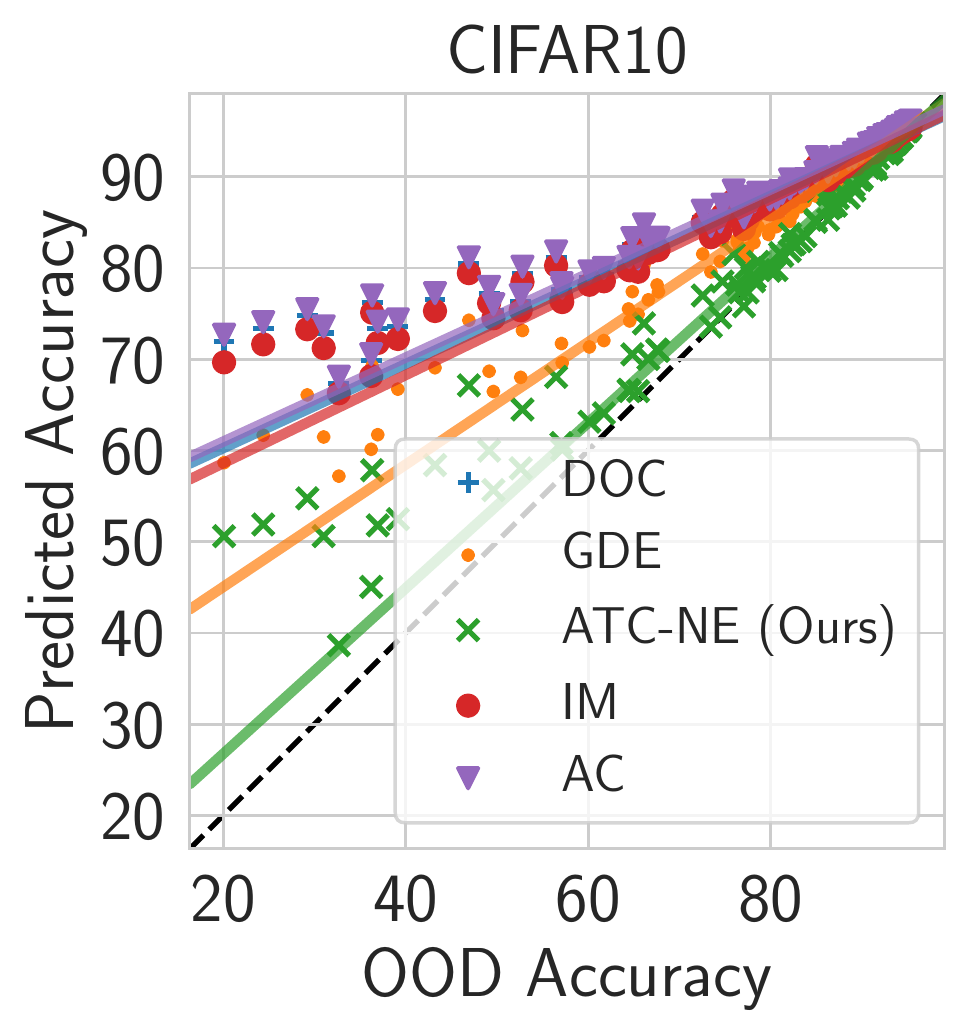}} \hfil
    \subfigure{\includegraphics[width=0.32\linewidth]{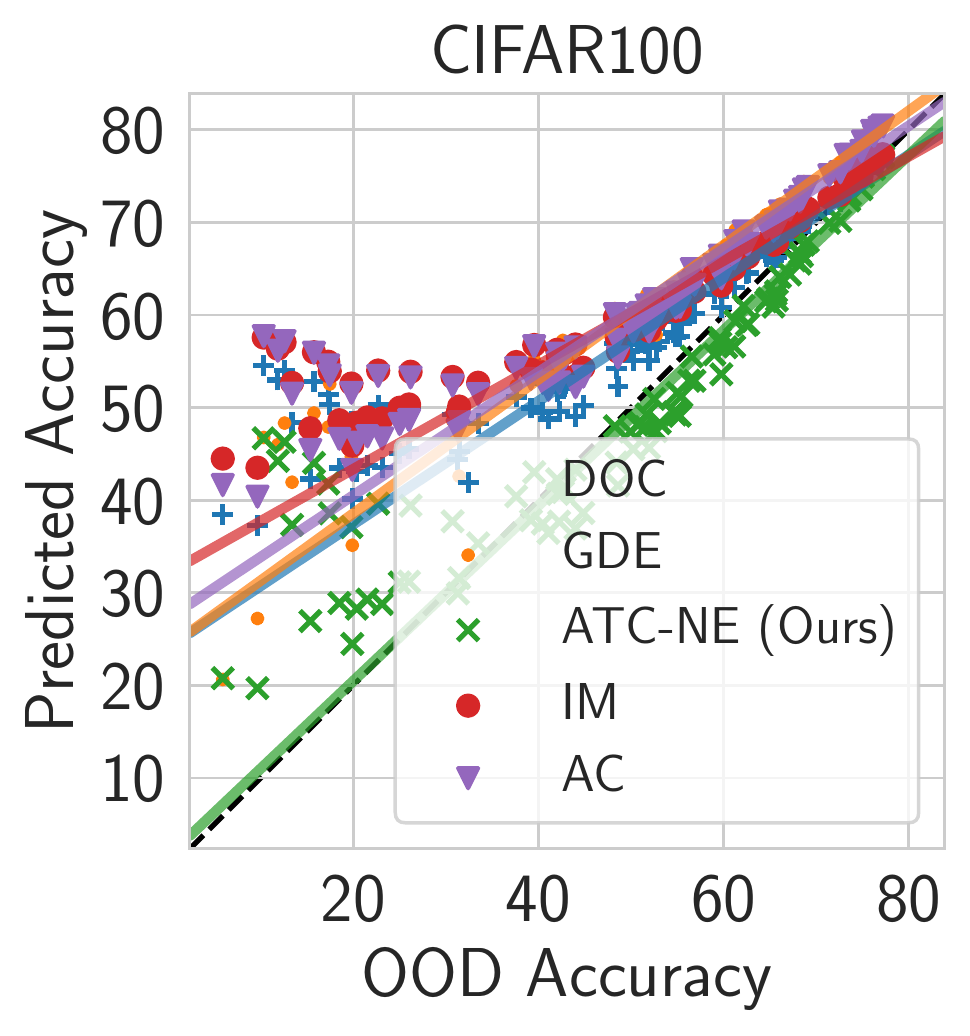}} \hfil
    \subfigure{\includegraphics[width=0.32\linewidth]{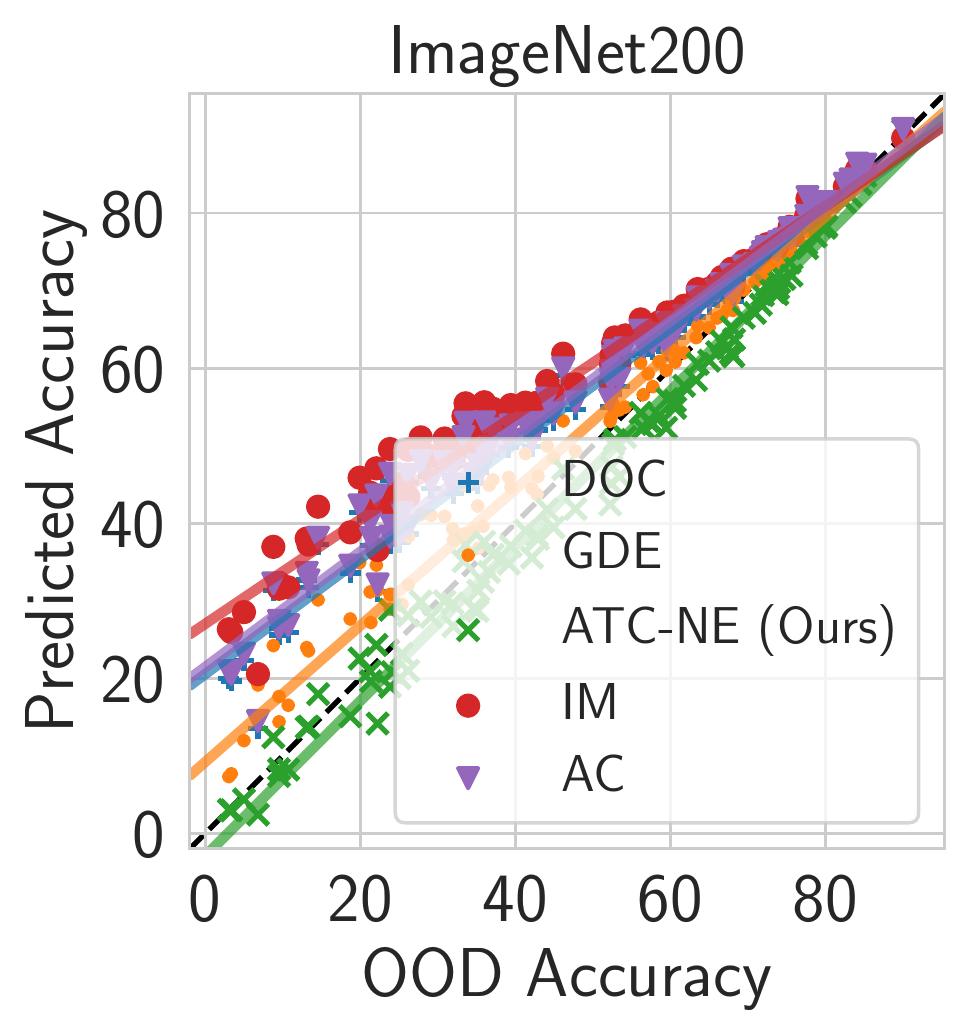}} \hfil   
    \subfigure{\includegraphics[width=0.32\linewidth]{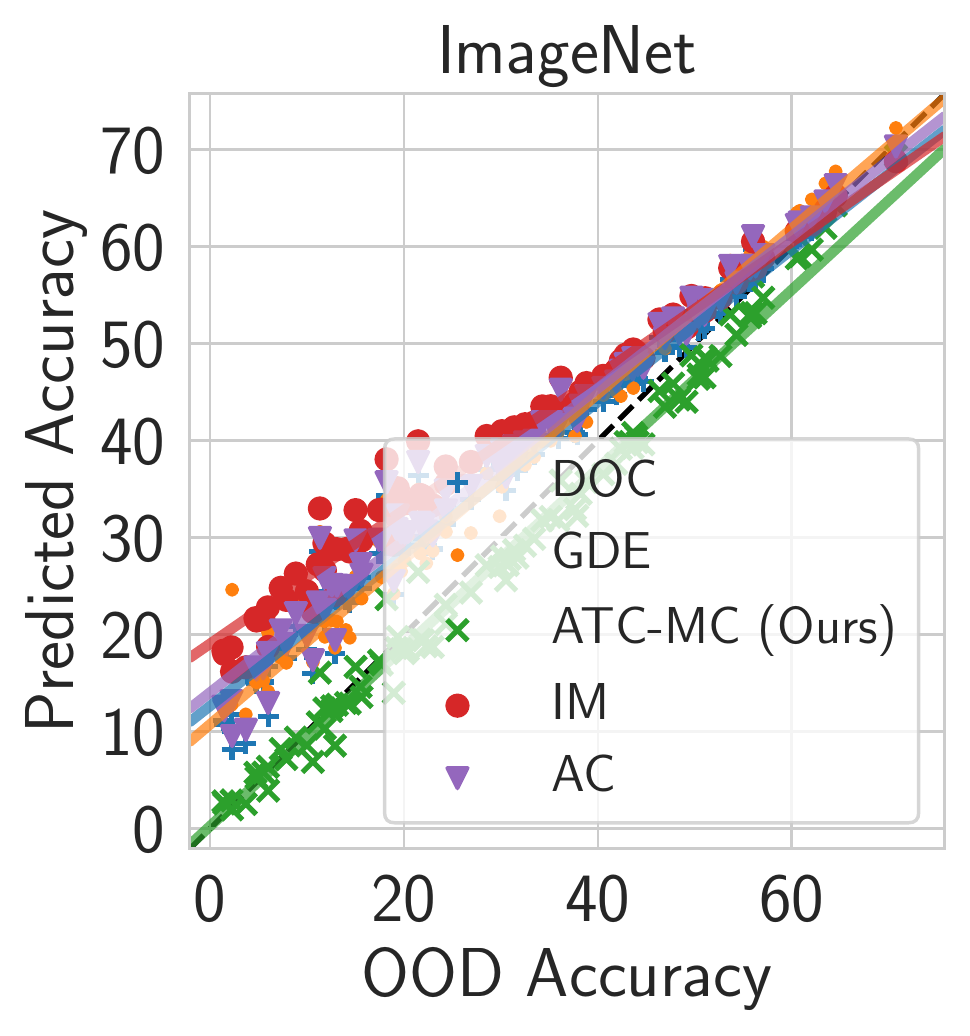}} \hfil
    \subfigure{\includegraphics[width=0.32\linewidth]{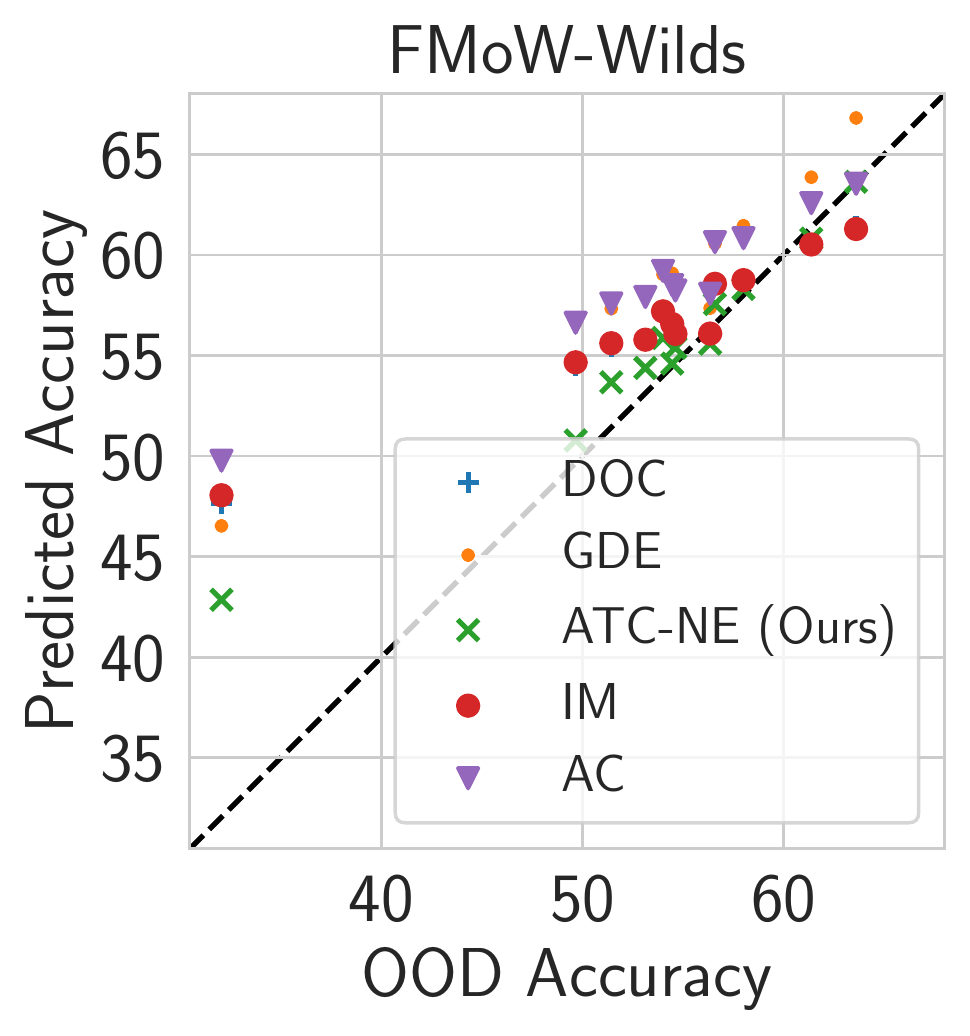}} \hfil
    \subfigure{\includegraphics[width=0.32\linewidth]{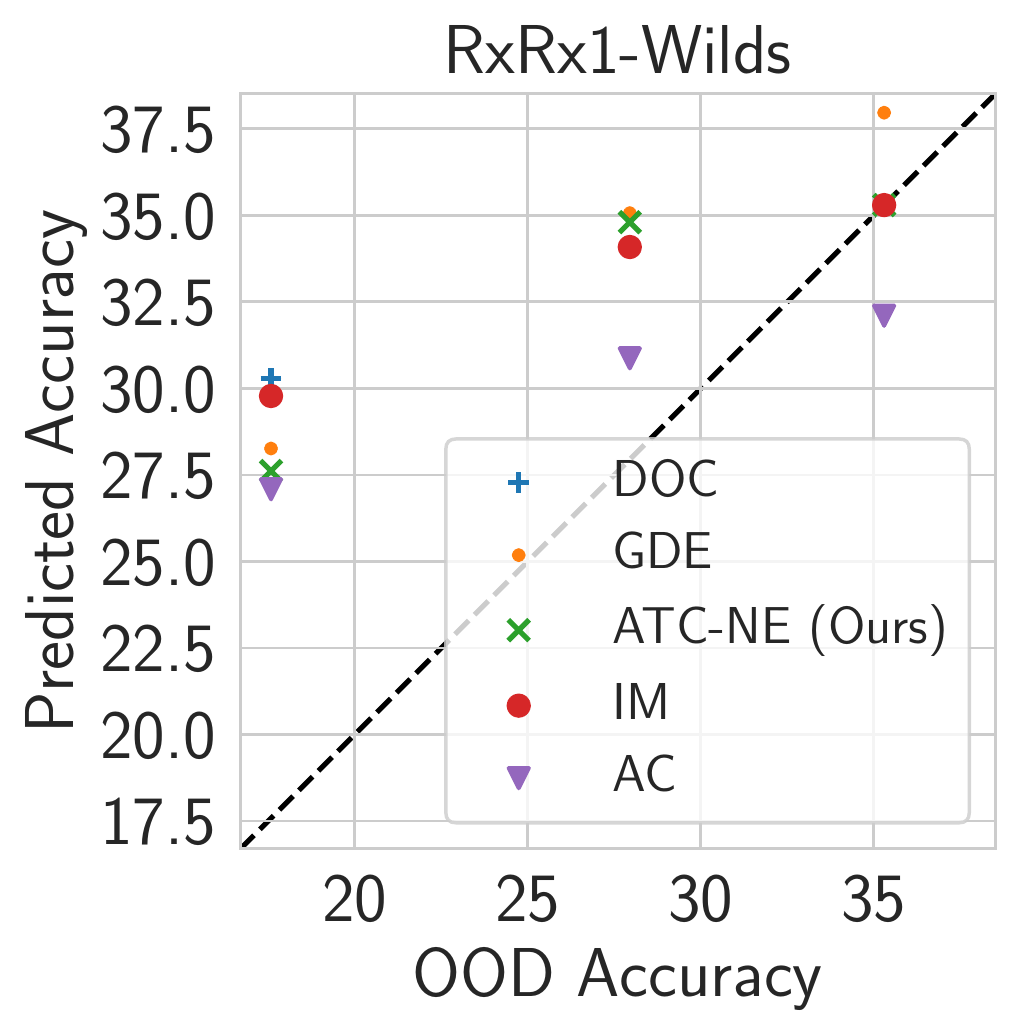}} \hfil
    \subfigure{\includegraphics[width=0.32\linewidth]{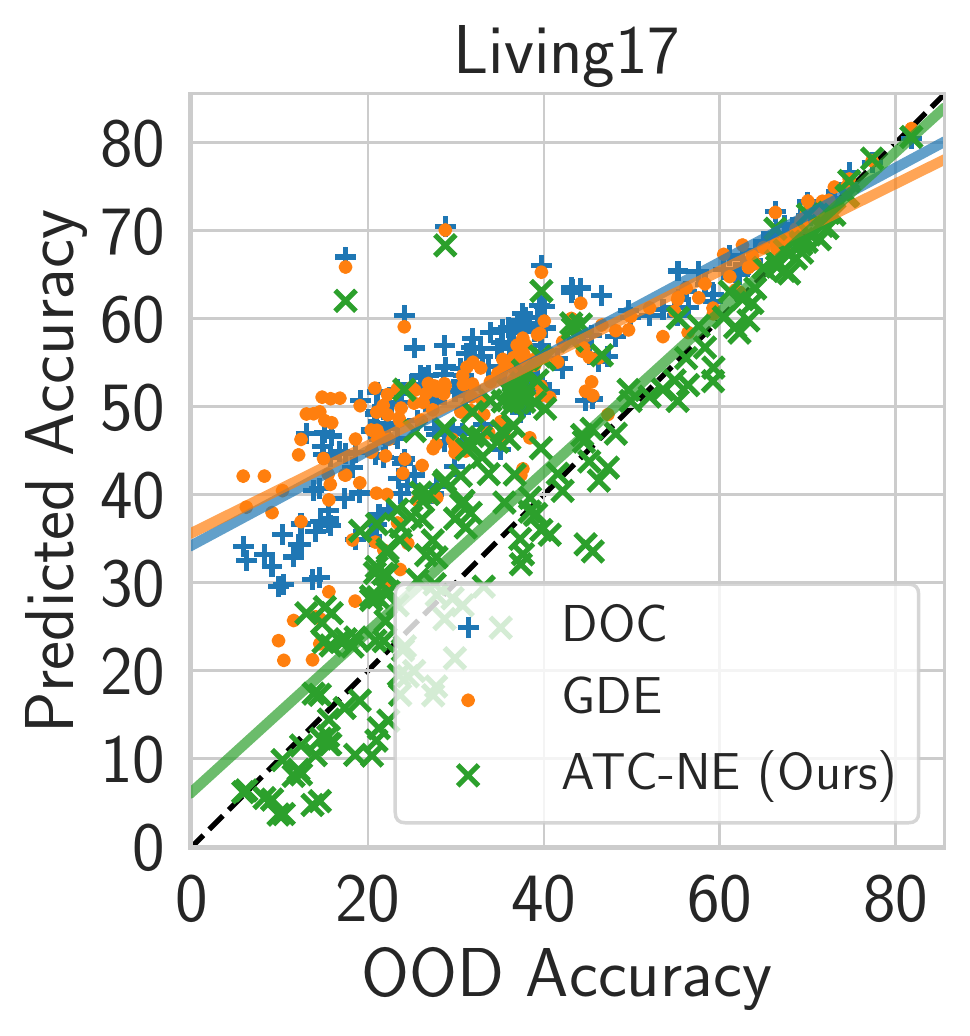}} \hfil
    \subfigure{\includegraphics[width=0.32\linewidth]{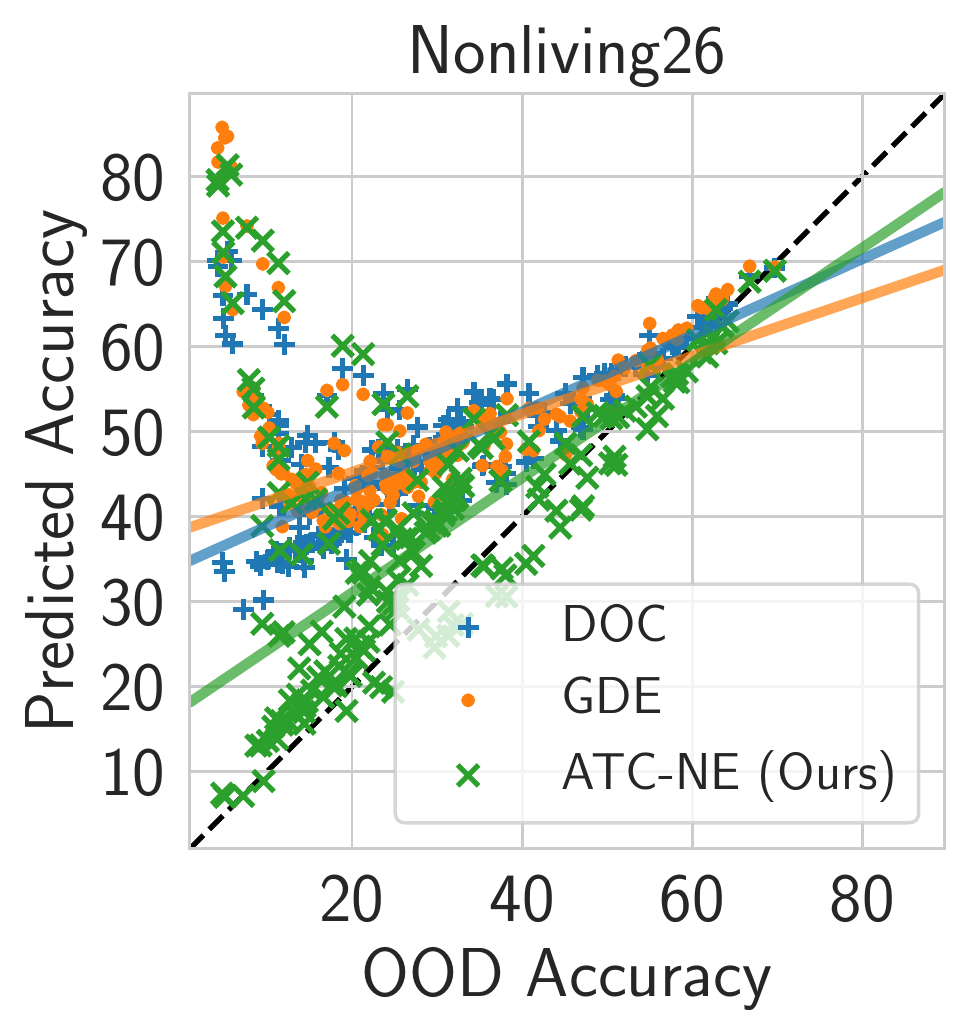}} \hfil
    \subfigure{\includegraphics[width=0.32\linewidth]{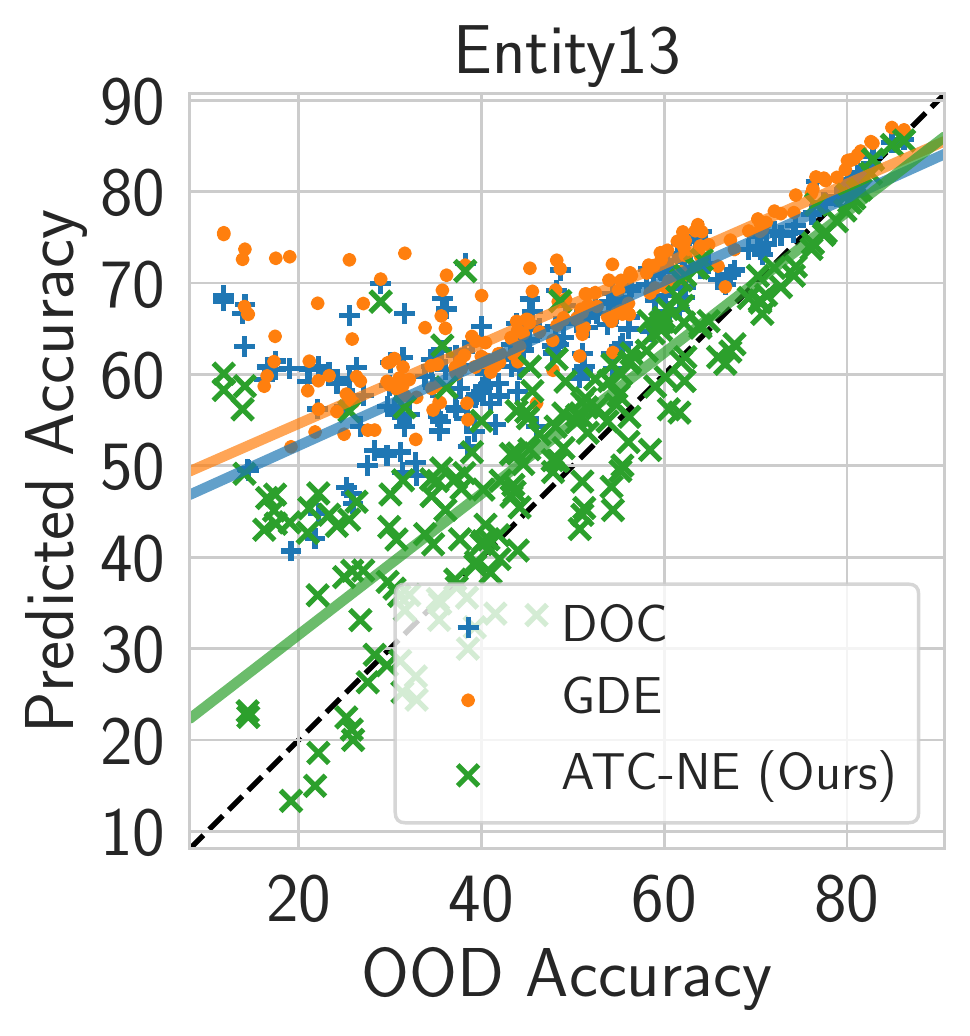}} \hfil   
    \subfigure{\includegraphics[width=0.32\linewidth]{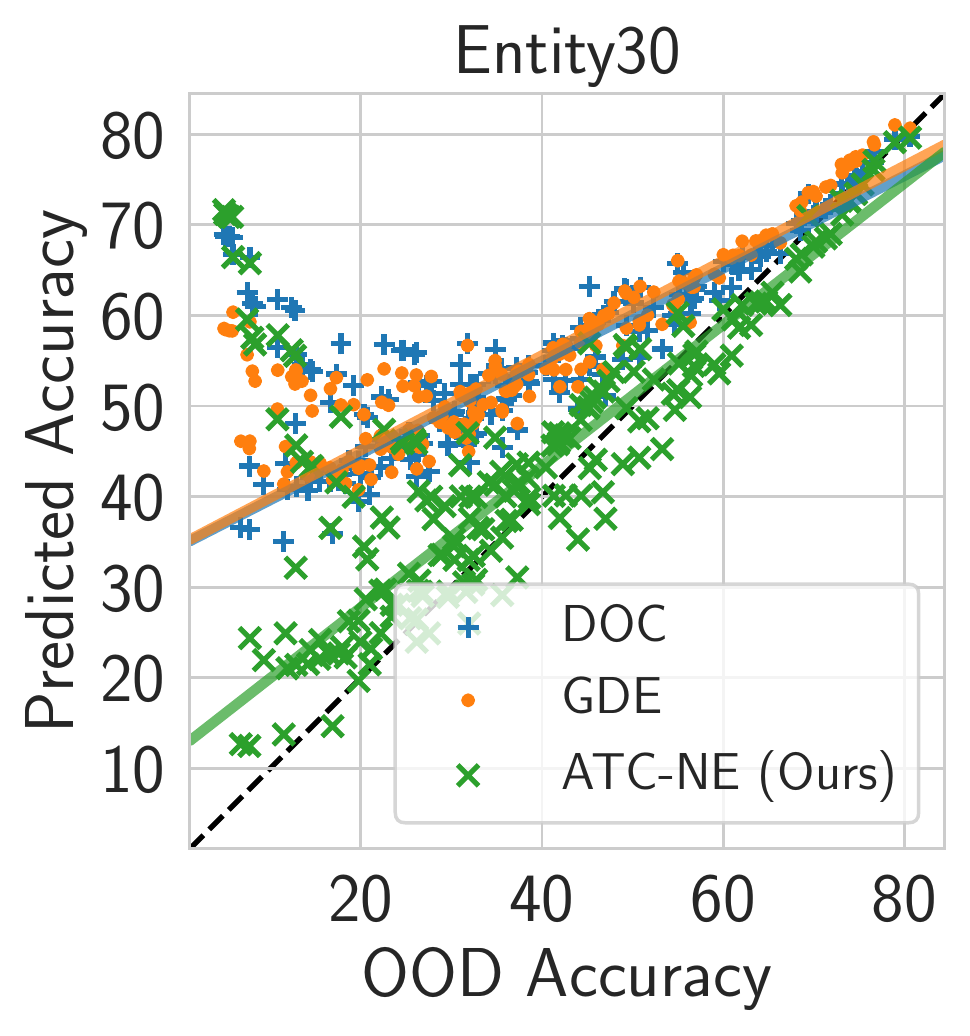}} 
    \vspace{-5pt}
    \caption{\update{Scatter plot of predicted accuracy versus (true) OOD accuracy for vision datasets except MNIST with a ResNet50 model. Results reported by aggregating MAE numbers over $4$ different seeds. }} 
    \vspace{-15pt}
    \label{fig:scatter_plot_resnet}
  \end{figure}

\end{document}